\documentclass[journal]{IEEEtran}

\usepackage{amsmath,amsfonts,amssymb,amsthm,epsfig,epstopdf,url,array,xcolor,soul,subfigure}
 
\usepackage{mathtools}

\usepackage[noend]{algorithmic}
\usepackage[ruled,noend]{algorithm2e}
\usepackage{cite}

\newtheorem{theorem}{Theorem}

\newtheorem{problem}{Problem}
\newtheorem{remark}{Remark}
\newtheorem{definition}{Definition}

\PassOptionsToPackage{end}{algorithmic}

\newcommand{\alg}{\texttt{DRM}\xspace}
\newcommand{\dcp}{\texttt{DCP}\xspace}

\newcommand{\crel}{\texttt{central-robust}\xspace}
\newcommand{\cgr}{\texttt{central-greedy}\xspace}

\title{Distributed Attack-Robust Submodular Maximization for Multi-Robot Planning}

\author{Lifeng~Zhou,~\IEEEmembership{Member,~IEEE,} Vasileios~Tzoumas,~\IEEEmembership{Member,~IEEE,} George~J.~Pappas,~\IEEEmembership{Fellow,~IEEE,}  \\and Pratap~Tokekar,~\IEEEmembership{Member,~IEEE} 
\thanks{L. Zhou was with the Department of Electrical
and Computer Engineering, Virginia Tech, Blacksburg, VA, USA when part of the work was completed. He is currently with the GRASP Laboratory, University of Pennsylvania, Philadelphia, PA, USA (email: {\tt\small lfzhou@seas.upenn.edu}).}
\thanks{V. Tzoumas is with the Department of Aerospace Engineering, University of Michigan, Ann Arbor, MI 48109 USA (email: {\tt\small vtzoumas@umich.edu}).}
\thanks{G. J. Pappas is with the Department of Electrical and Systems Engineering, University of Pennsylvania, Philadelphia, PA 19104 USA (email: {\tt\small pappagsg@seas.upenn.edu}).}
\thanks{P. Tokekar is with the Department of
Computer Science, University of Maryland, College Park, MD 20742, USA (email: {\tt\small tokekar@umd.edu}).}
\thanks{This work is supported by the ARL CRA DCIST, the National Science Foundation under Grant No. 479615, and the Office of Naval Research under Grant No. N000141812829.}
}


\begin{document}
\maketitle

\begin{abstract}
In this paper, we design algorithms to protect swarm-robotics applications against sensor denial-of-service (DoS) attacks on robots.  
We focus on applications requiring the robots to jointly select actions, e.g., which trajectory to follow, among a set of available actions. Such applications are central in large-scale robotic applications, such as multi-robot motion planning for target tracking. But the current attack-robust algorithms are centralized.  In this paper, we propose a general-purpose distributed algorithm towards robust optimization at scale, with local communications only.  We name it \textit{Distributed Robust Maximization} (\alg).  \alg proposes a divide-and-conquer approach that distributively partitions the problem among cliques of robots. Then, the cliques optimize in parallel, independently of each other. We prove \alg achieves a close-to-optimal performance. We demonstrate \alg's performance in Gazebo and MATLAB simulations, in scenarios of \textit{active target tracking with swarms of robots}. In the simulations, \alg achieves computational speed-ups, being 1-2 orders faster than the centralized algorithms. \textit{Yet}, it nearly matches the tracking performance of the centralized counterparts. 
Since, \alg overestimates the number of attacks in each clique, in this paper we also introduce an \textit{Improved Distributed Robust Maximization} (\texttt{IDRM}) algorithm. \texttt{IDRM} infers the number of attacks in each clique less conservatively than \alg by leveraging 3-hop neighboring communications. We verify \texttt{IDRM} improves \texttt{DRM}'s performance in simulations. 
\end{abstract}

\begin{keywords}
Distributed optimization, robust optimization, submodular optimization, approximation algorithm, adversarial attacks, multi-robot planning, target tracking.
\end{keywords}

\section{Introduction}
Safe-critical tasks of surveillance and exploration  often require mobile agility, and fast capability to detect, localize, and monitor.
For example, consider the tasks:

\smallskip

\paragraph*{{\small\textbullet}~Adversarial target tracking} Track adversarial targets that move across an urban environment, aiming to escape~\cite{nieto2013multi};

\paragraph*{{\small\textbullet}~Search and rescue} Explore a burning building to localize any people trapped inside~\cite{kumar2017opportunities}.

\smallskip

Such scenarios can greatly benefit from teams of mobile robots that are agile, act as sensors, and plan their actions rapidly. For this reason, researchers are (i) pushing the frontier on robotic miniaturization and perception~\cite{nieto2013multi,kumar2017opportunities,michini2014robotic,karaman2012high,cadena2016past,8206030,8255576}, to enable mobile agility and autonomous sensing, and (ii) developing {distributed} coordination algorithms~\cite{atanasov2015decentralized,schlotfeldt2018anytime,khodayi2019distributed,corah2019distributed,best2019dec}, to enable {multi-robot planning}, i.e., the joint optimization of robots' actions. 

Particularly, distributed planning algorithms are important when one wishes to deploy large-scale teams of robots; e.g., at the swarm level of tens or hundreds of robots. One reason is that distributed algorithms scale better for larger numbers of robots than their centralized counterparts~\cite{atanasov2015decentralized}. {Additionally, in large-scale teams, it is infeasible for all robots to communicate with each other; typically, each robot can communicate only with the robots within a certain communication range.} 

\begin{figure}
\centering
\includegraphics[width=0.6\columnwidth]{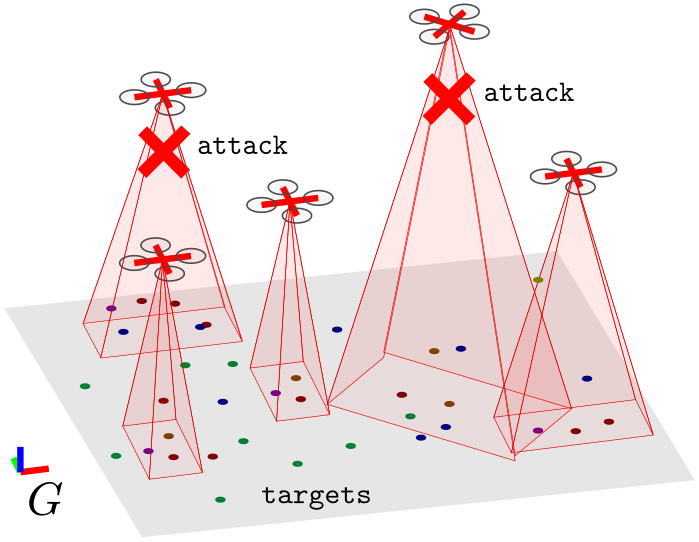}
\caption{Targets’ attacks can block robots’ field-of-view: in target tracking with aerial robots, the robots are mounted with down-facing cameras to track mobile targets (depicted as dots). 
\label{fig:uav_tracking}}
\end{figure}

However, the safety of the above critical scenarios can be at peril.  Robots operating in adversarial scenarios may get cyber-attacked or simply face failures, resulting in a temporary withdrawal of robots from the task (e.g., because of temporary deactivation of their sensors, blockage of their field of view, among others; {cf.~Fig.~\ref{fig:uav_tracking}}).  We refer to such attacks as \textit{Denial-of-Service} (DoS). Hence, in such adversarial scenarios, distributed {attack-robust} planning algorithms become necessary.\footnote{We henceforth consider the terms \textit{attack} and \textit{failure} equivalent, both resulting in robot withdrawals from the task at hand.} 

In this paper, we formalize a general framework for \textit{distributed attack-robust multi-robot planning} for tasks that require the maximization of submodular functions, such as active target tracking~\cite{dames2017detecting} and informative path planning~\cite{singh2009efficient} with multiple-robots.\footnote{Submodularity is a diminishing returns property~\cite{nemhauser1978analysis}, capturing the intuition that the more robots participate in a task, the less the gain (return) one gets by adding an extra robot towards the task.} We focus on \textit{worst-case} attacks that can result in up to $\alpha$ robot withdrawals from the task at each step.

Attack-robust multi-robot planning is computationally  hard and requires accounting  for  all  possible $\alpha$ withdrawals, a problem of combinatorial complexity.  Importantly, even in  the presence of no withdrawals, the problem of multi-robot planning is NP-hard~\cite{feige1998threshold,tokekar2014multi}.  All in all, the necessity for distributed attack-robust algorithms, and the inherent computational hardness motivates our goal in this paper: to provide a distributed, provably  close-to-optimal  approximation algorithm.  To this  end,  we  capitalize  on  recent  algorithmic  results  on {centralized} attack-robust multi-robot~planning~\cite{tzoumas2017resilient,tzoumas2018resilient,schlotfeldt2018resilient,zhou2018resilient} and present a distributed attack-robust algorithm.

\textbf{Related work.} Researchers have developed several distributed, but attack-free, planning algorithms, such as~\cite{atanasov2015decentralized,schlotfeldt2018anytime,khodayi2019distributed,corah2019distributed,best2019dec}.  For example,~\cite{atanasov2015decentralized} developed a decentralized algorithm, building on the local greedy algorithm proposed in~\cite[Section~4]{fisher1978analysis}, which guarantees a $1/2$ suboptimality bound for submodular objective functions.  In~\cite{atanasov2015decentralized} the robots form a {string} communication network, and sequentially choose an action, given the actions of the robots that have chosen so far. In~\cite{corah2019distributed}, the authors proposed a speed-up of~\cite{atanasov2015decentralized}'s approach, by enabling the greedy sequential optimization to be executed over directed acyclic graphs, instead of {string ones}.   In scenarios where the robots cannot observe all the chosen actions so far, distributed, but still attack-free, algorithms for submodular maximization are developed  in~\cite{gharesifard2018distributed,grimsman2018impact}.  Other distributed, attack-free algorithms are developed in the machine learning literature on submodular maximization, towards sparse selection (e.g., for data selection, or sensor placement)~\cite{mirzasoleiman2013distributed}, instead of planning.

Researchers have also developed robust planning algorithms~\cite{saulnier2017resilient,saldana2017resilient, mitra2019resilient,schlotfeldt2018resilient,zhou2018resilient,tzoumas2017resilient,tzoumas2018resilient}.  Among these, \cite{saulnier2017resilient,saldana2017resilient,mitra2019resilient} focus on deceptive attacks, instead of DoS attacks.  In more detail, \cite{saulnier2017resilient,saldana2017resilient} provide distributed resilient formation control algorithms to deal with malicious robots in the team, that share deceptive information. Similarly, \cite{mitra2019resilient} provides a distributed resilient state estimation algorithm against robots executing Byzantine attacks by sending differing state
estimates or not transmitting any estimates. In contrast to~\cite{saulnier2017resilient,saldana2017resilient,mitra2019resilient}, the papers \cite{schlotfeldt2018resilient,zhou2018resilient} consider DoS attacks in multi-robot motion planning but in centralized settings: \cite{schlotfeldt2018resilient} provides centralized attack-robust algorithms for active information gathering, and \cite{zhou2018resilient} for target tracking.  The paper \cite{9450823} extended \cite{schlotfeldt2018resilient} to a Model Predictive Control (MPC) framework. The algorithms are based on the centralized algorithms proposed in~\cite{tzoumas2017resilient,tzoumas2018resilient}.  Additional attack-robust algorithms are proposed in~\cite{orlin2018robust,bogunovic2017distributed}, which, however, are valid for only a limited number of attacks. {For a thorough literature review on attack-robust combinatorial algorithms, we refer the reader to~\cite{tzoumas2019robust}.}

\textbf{Contributions.}  {Towards enabling attack-robust planning in multi-robot scenarios requiring local communication among robots, we make the following contributions:}
\smallskip

A.~~We present the algorithm \textit{Distributed Robust Maximization} (\alg):  \alg distributively partitions the problem among cliques of robots, where all robots are within communication range. Then, naturally, the cliques optimize in parallel, using~\cite[Algorithm~1]{zhou2018resilient}.
    We prove (Section~\ref{sec:perform}):
    
    \begin{itemize}
    \item \textit{System-wide attack-robustness}: \alg is valid for any number $\alpha$ of worst-case attacks; 
    
    \item \textit{Computational speed-up}: \alg is faster up to a factor $1/K^2$ than its centralized counterparts in~\cite{zhou2018resilient}, {where $K$ is the number of cliques chosen by \alg. {$K$ depends on the inter-robot communication range, denoted henceforth by $r_c$, which is given as input to \alg (and, as a result, by changing $r_c$ one  controls $K$)}.}
    
    \item \textit{Near-to-centralized \!approximation\! performance}: 
    Even though \alg is a distributed algorithm, and faster than algorithm its centralized counterpart~\cite[Algorithm~1]{zhou2018resilient}, \alg achieves a near-to-centralized performance, having a suboptimality bound equal to~\cite[Algorithm~1]{zhou2018resilient}'s. 
    \end{itemize}
    
B.~~We design an \textit{Improved Distributed Robust Maximization} (\texttt{IDRM}) algorithm to relax the conservativeness of \texttt{DRM} in inferring the number of attacks against each cliques  (Section~\ref{sec:improve_main}). {Specifically, \texttt{DRM} assumes that the number of attacks against each clique is equal to $\alpha$, the \textit{total} number of attacks against all robots.} {Instead, in \texttt{IDRM}}, robots communicate with their 3-hop neighbors {in the communication network of all robots} to infer the number of attacks.{\footnote{A robot's 3-hop neighbors are its neighbors, its neighbors' neighbors, and its neighbors' neighbors' neighbors.}} 
    \texttt{IDRM} has the same approximation performance as  \texttt{DRM}. {In our numerical evaluations, it maintains a comparable running time to \alg (Section~\ref{sec:performance-IDRM})}. 
    
\smallskip
Notably, the proposed algorithms in this paper allow for the communication graph to be disconnected. 

\textbf{Numerical evaluations.} First, we present Gazebo and MATLAB evaluations of \alg, in scenarios of \textit{active target tracking with swarms of robots} (Section~\ref{subsec:sim_target_tracking}). 
All simulation results demonstrate \alg's computational benefits: \alg runs 1 to 2 orders faster than its centralized counterpart in~\cite{zhou2018resilient}, achieving running times 0.5msec to 15msec for up to 100 robots.  And, yet, \alg exhibits negligible deterioration in performance (in terms of number of targets covered). Second, we compare the performance of \texttt{DRM} and \texttt{IDRM} through Matlab simulations (Section~\ref{subsec:improve}). We show that \texttt{IDRM} performs better than \texttt{DRM} in practice (higher target coverage), {while maintaining a comparable} running time.

\textbf{Comparison with the preliminary results.} Preliminary results were presented in~\cite{zhou2019approximation,zhou2019distributed}. In this paper, we present for the first time the algorithm \textit{Improved Distributed Robust Maximization} (Section~\ref{sec:improve_main}). Moreover, the corresponding Matlab simulations are new (Section~\ref{subsec:improve}). Also, we present a formal analysis of \alg's computation and approximation  performance (Appendix-A \& B). Further, we analyze the approximation of a myopic algorithm and compare its practical performance with \texttt{DRM} and \texttt{IDRM} through Matlab simulations (Remark~\ref{rem:myopic} and Appendix C).

\section{Problem Formulation}\label{sec:problem}

\begin{figure}[t]
\centering{
{\includegraphics[width=0.6\columnwidth,trim= 0cm .0cm 0 0cm,clip]{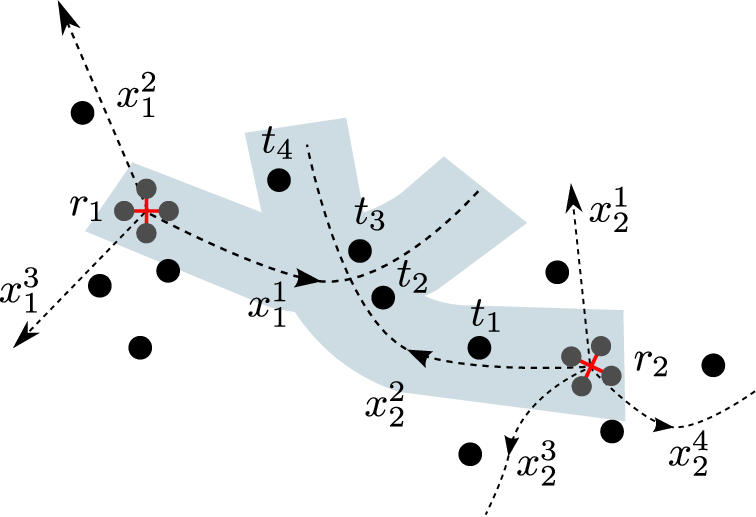}}
\caption{Robots choose trajectories from a set of motion primitives: at each time step, each robot has a set of motion primitives to choose as its trajectory (each covering different targets, depicted as dots). For example, robot 1 has 3 motion primitives, $\{x_1^1, x_1^2, x_1^3\}$, and robot 2 has 4 motion primitives, $\{x_2^1, x_2^2, x_2^3, x_2^4\}$, where $x_1^1$ covers 2 targets, $\{t_2, t_3\}$, and $x_2^2$ covers 4 targets, $\{t_1, t_2, t_3, t_4\}$. In combination, however, the two motion primitives totally cover 4 targets, $\{t_1, t_2, t_3, t_4\}$. \label{fig:uav_trajectory}}
}
\end{figure}

We formalize the problem of \textit{distributed attack-robust submodular maximization for multi-robot planning}.  At each time-step, the problem {asks for assigning} actions to the robots, to maximize an objective function despite attacks.  For example, in the active target tracking with aerial robots (see Fig.~\ref{fig:uav_tracking}), the robots' possible actions are their motion primitives; the objective function is the number of covered targets; and the attacks are field-of-view blocking attacks.

We next introduce our framework in more detail:\footnote{\textbf{Notations.} Calligraphic fonts denote sets (e.g., $\mathcal{A}$).  $2^{\mathcal{A}}$ denotes $\mathcal{A}$'s power set, and $|\mathcal{A}|$ its cardinality.  $\mathcal{A}\setminus\mathcal{B}$ are the elements in $\mathcal{A}$ not in~$\mathcal{B}$.}

\paragraph{Robots} We consider a multi-robot team $\mathcal{R}$.  At a given time-step, $p_i$ is robot $i$'s position in the environment ($i \in \mathcal{R}$).  We define $\mathcal{P}\triangleq\{p_1,\ldots,p_{|\mathcal{R}|}\}$ with $|\mathcal{R}|=N$. 

\paragraph{Communication graph} Each robot communicates only with those robots within a prescribed \textit{communication range}. Without loss of generality, we assume all robots to have the same communication range $r_c$.  That way, an (undirected) \textit{communication graph} $G=\{\mathcal{R},\mathcal{E}\}$ is induced, with nodes the robots $\mathcal{R}$, and edges $\mathcal{E}$ such that $(i,j)\in\mathcal{E}$ if and only if $\|p_i-p_j\|_2\leq r_c$.  The \textit{neighbors} of robot $i$ are all robots within the range $r_c$, and are denoted by~$\mathcal{N}_i$.

\paragraph{Action set} Each robot $i$ has an available set of \emph{actions} to choose from; we denote it by $\mathcal{X}_i$.  The robot can choose at most $1$ action at each time, due to operational constraints; e.g., in motion planning, $\mathcal{X}_i$ denotes robot $i$'s motion primitives, and the robot can choose only $1$ motion primitive at a time to be its trajectory. For example, in Fig.~\ref{fig:uav_trajectory} we have 2 robots, where $\mathcal{X}_1 = \{x_1^1, x_1^2, x_1^3\}$ (and robot 1 chooses $x_1^1$ as its trajectory) and $\mathcal{X}_2 = \{x_2^1, x_2^2, x_2^3, x_2^4\}$ (and robot 2 chooses $x_2^2$ as its trajectory). We let $\mathcal{X} \triangleq \bigcup_{i\in \mathcal{R}} \mathcal{X}_i$.  Also, $\mathcal{S}\subseteq \mathcal{X}$ denotes a valid assignment of actions to all robots. For instance, in  Fig.~\ref{fig:uav_trajectory}, $\mathcal{S}=\{x_1^1, x_2^2\}$. 

\paragraph{Objective function} The quality of each $\mathcal{S}$ is quantified by a non-decreasing and submodular function $f: 2^{\mathcal{X}} \to \mathbb{R}$. For example, this is the case in {active target tracking with mobile robots}, when $f$ is the number of covered targets~\cite{tokekar2014multi}. As shown in Fig.~\ref{fig:uav_trajectory}, the number of targets covered by the chosen actions, $\mathcal{S}=\{x_1^1, x_2^2\}$, is $f(\mathcal{S}) = 4$. 

\paragraph{Attacks} At each time step, we assume the robots encounter worst-case sensor DoS attacks. Each attack results in a robot's sensing removal (at least temporarily, i.e., for the current time step). In this paper, we study the case where the maximum number of attacks at each time step is known, per Problem~\ref{pro:dis_resi_sub} below. We denote the maximum number of attacks by $\alpha$. We also assume that if a robot encounters a sensor DoS attack, it can still act as an information relay node to communicate with other robots. 

\begin{problem}[Distributed attack-robust submodular maximization for multi-robot planning]\label{pro:dis_resi_sub}
{At each time step}, the robots, by exchanging information only over the communication graph $G$, assign an action to each robot $i \in \mathcal{R}$ to maximize $f$ against $\alpha$ worst-case attacks/failures:
\begin{align} \label{eq:dis_resi_sub}
\begin{split}
\max_{\mathcal{S}\subseteq \mathcal{X}}\;&\;\min_{\mathcal{A}\subseteq \mathcal{\mathcal{S}}}\;\; f(\mathcal{S}\setminus\mathcal{A})\\
\emph{\text{s.t. }}\;\;&|\mathcal{S}\cap \mathcal{X}_i|= 1, ~\text{for all } i\in \mathcal{R}, |\mathcal{A}| \leq \alpha,\\
\end{split}
\end{align}
where $\mathcal{A}$ corresponds to the actions of the attacked robots, and $\alpha$ is assumed known.\footnote{{The constraint in eq.~\eqref{eq:dis_resi_sub} ensures that each robot chooses $1$ action per time step (e.g., $1$ trajectory among a set of primitive trajectories.)}}
\end{problem}

Problem~\ref{pro:dis_resi_sub} is equivalent to a two-stage perfect information sequential game~\cite[Chapter 4]{myerson2013game} between the robots and an attacker. Particularly, the robots  first  select  $\mathcal{S}$, and, then, the attacker, \textit{after observing $\mathcal{S}$}, selects the worst-case $\mathcal{A}$ (which is unknown to the robots). The ``$\min$'' operator means that the attacker aims to minimize the robots' objective function $f$ by removing up to $\alpha$ robots' actions $\mathcal{A}$, no matter what actions $\mathcal S$ the robots take. This is the worst-case attack on the robots' actions $\mathcal S$. The ``$\min$'' operation is from the attackers' point of view, and the robots do not know which robots (or their actions $\mathcal A$) will be attacked. Under this assumption, the robots aim to maximize the objective function $f$ by assuming the attacker executes up to $\alpha$ worst-case attacks. 

Notably, a robot does not have to understand if it encounters a DoS attack or not, and it does not need to communicate such information to its neighbors either.  That is because the robots' planning happens before the attacks take place. So the robots do not even have a way of knowing who gets a DoS attack, and that is why they need to consider the worst-case scenario. The DoS attacks are not permanent, and a rational attacker will choose the worst-case subset of sensors to attack at each time step.  In other words, Problem~\ref{pro:dis_resi_sub} asks for planning algorithms to prepare the robot team to be robust to withstand the worst-case attacks \textit{without} knowing the attack details (e.g., which robots' sensors get attacked).

\section{A Distributed Algorithm: \alg}~\label{sec:algorithm}
We present \textit{Distributed Robust Maximization} (\alg), a distributed algorithm for Problem~\ref{pro:dis_resi_sub} {for the case where $\alpha$ is known} (Algorithm~\ref{alg:DRA}). \alg  executes sequentially two main steps: \textit{distributed  clique partition} (\alg's line~1), and \textit{per clique attack-robust optimization} (\alg's lines~2-8). During the first step, the robots communicate with their neighbors to partition $G$ into cliques of \textit{maximal} size (using Algorithm~\ref{alg:EDNCCA}, named \texttt{DCP} in \alg's line~1).\footnote{A clique is a set of robots that can all communicate with each other.}   During the second step, each clique computes an attack-robust action assignment (in parallel with the rest), using the centralized algorithm in~\cite{zhou2018resilient} ---henceforth, we refer to the algorithm in~\cite{zhou2018resilient} as \crel.  \crel takes similar inputs to \alg: a set of actions, a function, and a number of attacks.

We describe \alg's two steps in more detail below; and quantify its running time and performance in Section~\ref{sec:perform}.

\begin{algorithm}[t]
\caption{Distributed robust maximization (\alg).}
\begin{algorithmic}[1]
\REQUIRE
    Robots' available actions $\mathcal{X}_i$,  $i\in \mathcal{R}$;
    monotone and submodular function $f$;
    attack number $\alpha$.
\ENSURE  Robots' actions $\mathcal{S}$.
\STATE Partition $G$ to cliques $\mathcal{C}_1,\ldots, \mathcal{C}_K$ by calling $\texttt{DCP}(\mathcal{P}, r_c)$; 
\STATE $\mathcal{S}_k \leftarrow \emptyset$ for all $k = \{1, \ldots, K\}$; ~\label{}
\FOR{\textbf{each clique $\mathcal{C}_k$ in parallel,}}
\label{DRA: for_cliq_start}
\IF{$\alpha < |\mathcal{C}_k|$}
\label{DRA: alphaless}
\STATE $\mathcal{S}_k=$ \crel\!\!$(\bigcup_{i\in \mathcal{C}_k}\mathcal{X}_i,f, \alpha)$; \label{DRA:alpha_small}
\ELSE \label{DRA:alpha_less}
\STATE $\mathcal{S}_k=$ \crel\!\!$(\bigcup_{i\in \mathcal{C}_k}\mathcal{X}_i,f, |\mathcal{C}_k|)$;
\label{DRA:alpha_large}
\ENDIF
\ENDFOR
\label{DRA: for_cliq_end}
\RETURN $\mathcal{S} = \bigcup_{k=1}^{K} \mathcal{S}_k$.
\label{DRA:union}
\end{algorithmic}
\label{alg:DRA}
\end{algorithm}

\subsection{Distributed clique partition (Step-1 of \alg)}~\label{subsec:clique_cover}
We present the first step of \alg, \textit{distributed clique partition}, which is inspired by \cite[Algorithm 2]{pattabiraman2013fast} (\alg's line~1, that calls \texttt{DCP}, whose pseudo-code is presented in Algorithm~\ref{alg:EDNCCA}).  The problem of distributed clique partition is inapproximable in polynomial time, since finding even a single clique of a desired size is inapproximable~\cite{zuckerman2006linear}, {even by centralized algorithms.}  

\texttt{DCP} requires three rounds of communications among neighboring robots to form separate cliques. In the first round, each robot $i$ communicates to find its neighbors (Algorithm~\ref{alg:EDNCCA}, line~\ref{line:encc_find_nei}). In the second round, it shares its augmented neighbor set $\mathcal{N}_i^+$ (containing its neighbors and itself) with its neighbors, and receives its neighbors' sets $\{\mathcal{N}_j^+\}$ (Algorithm~\ref{alg:EDNCCA}, line~\ref{line:encc_share_rec_nei}). Then robot $i$ intersects its augmented neighbor set $\mathcal{N}_i^+$ with 
each of its neighbors' augmented neighbor sets $\mathcal{N}_j^+$, and sets the largest intersection as its clique (Algorithm~\ref{alg:EDNCCA}, line~\ref{line:encc_largest}). 

The aforementioned clique computation of \texttt{DCP} differs from  \cite[Algorithm 2]{pattabiraman2013fast} in that in \cite[Algorithm 2]{pattabiraman2013fast} {each robot $i$ computes its clique as the intersection of $\mathcal{N}_i^+$ and $\mathcal{N}_j^+$ where $j$ is the neighbor with the largest degree in $\mathcal{N}_i$, {whereas in \texttt{DCP}'s line~5,} 
each robot $i$
computes its clique as the intersection of $\mathcal{N}_i^+$and $\mathcal{N}_j^+$, where $j$ instead is the neighbor with the largest neighborhood overlap with $i$.  That way, \texttt{DCP} is more likely to obtain larger cliques for each robot.   Also, the cliques returned by \cite[Algorithm 2]{pattabiraman2013fast} can overlap with each other. In order to form separate cliques, \texttt{DCP} executes the third round of communication to share the computed cliques among neighbors (Algorithm~\ref{alg:EDNCCA}, line~\ref{line:encc_share_rec_unique}). Specifically, each robot $i$ tells its neighbors which clique it will join. If the clique of some neighboring robot $j$ contains robot $i$ but robot $i$ chooses to join a different clique (by Algorithm~\ref{alg:EDNCCA}, line~\ref{line:encc_largest}), its neighboring robot $j$ will update its clique by removing robot $i$ from it. In this way, each robot $i$ will eventually belong to a single clique, and thus non-overlapping cliques can be generated. An illustrative example of \texttt{DCP} is shown in Fig.~\ref{fig:clique_partition}.

\begin{remark}\label{rem:clique}
We partition the communication graph $G$ into cliques instead of connected subgroups because we want robots to have a shorter communication time interval before making decisions. Robots within a clique have point-to-point communication, while robots in a connected subgroup need multi-hop communication to propagate information. Typically, multi-hop communication takes time when the diameter of the communication graph is large. More formally, point-to-point communication has $O(1)$ communication round, while multi-hop communication has $O(N)$ communication rounds in the worst case (e.g., a line graph). Therefore, we choose the clique partition model that enables robots to have a shorter communication time before making decisions.

Reducing the time of information sharing becomes essential when robots need to make decisions quickly. For example, in a multi-target tracking scenario, robots need to firstly share information (e.g., subsets of targets covered) and then decide their actions (e.g., movements). Since targets are mobile and may move fast, the robots need to choose actions as quickly as they can. Otherwise, the information shared can be outdated and thus misleading for decision-making. Therefore, we use cliques with one-hop communication only to shorten the time of information sharing among robots. 


Moreover, multi-hop communication may reduce communication bandwidth and increase energy consumption. For example, when robots communicate over a single channel, only one robot can be active to broadcast information during a certain time interval. Therefore, communicating messages over multiple hops result in a smaller communication bandwidth and a higher communication cost~\cite{zanjireh2015survey}. But investigating the number of communication hops to trade off energy consumption/bandwidth with performance could be an interesting future direction.

\end{remark}

\begin{figure}[t]
\centering{
{\includegraphics[width=0.65\columnwidth,trim= 0cm .0cm 0 0cm,clip]{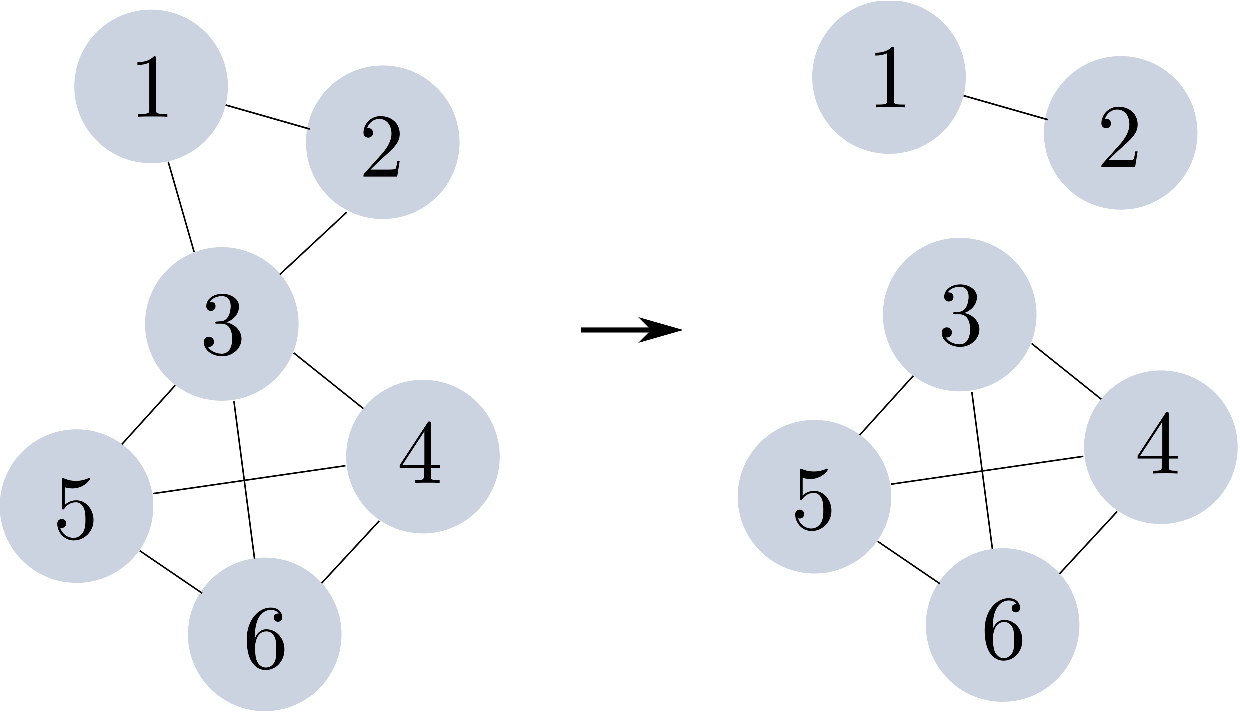}}
\caption{\texttt{DCP} partitions a graph of 6 robots into 2 separate cliques. Particularly, after clique computation, robots $1\sim6$ obtain cliques \{1, 2, 3\}, \{1, 2, 3\}, \{3, 4, 5, 6\}, \{3, 4, 5, 6\}, \{3, 4, 5, 6\}, and \{3, 4, 5, 6\}. Then in the third communication round, robot 3 shares its cliques with its neighbors (i.e., tells its neighbors \{1, 2\} that it joins clique \{3, 4, 5, 6\}), and robots 1 and 2 reset their cliques as \{1, 2\} and \{1, 2\}. }
\label{fig:clique_partition}
}
\end{figure}

}

\begin{figure*}[t]
\centering{
\subfigure[A communication graph $G$ of 15 robots]{\includegraphics[width=0.6\columnwidth]{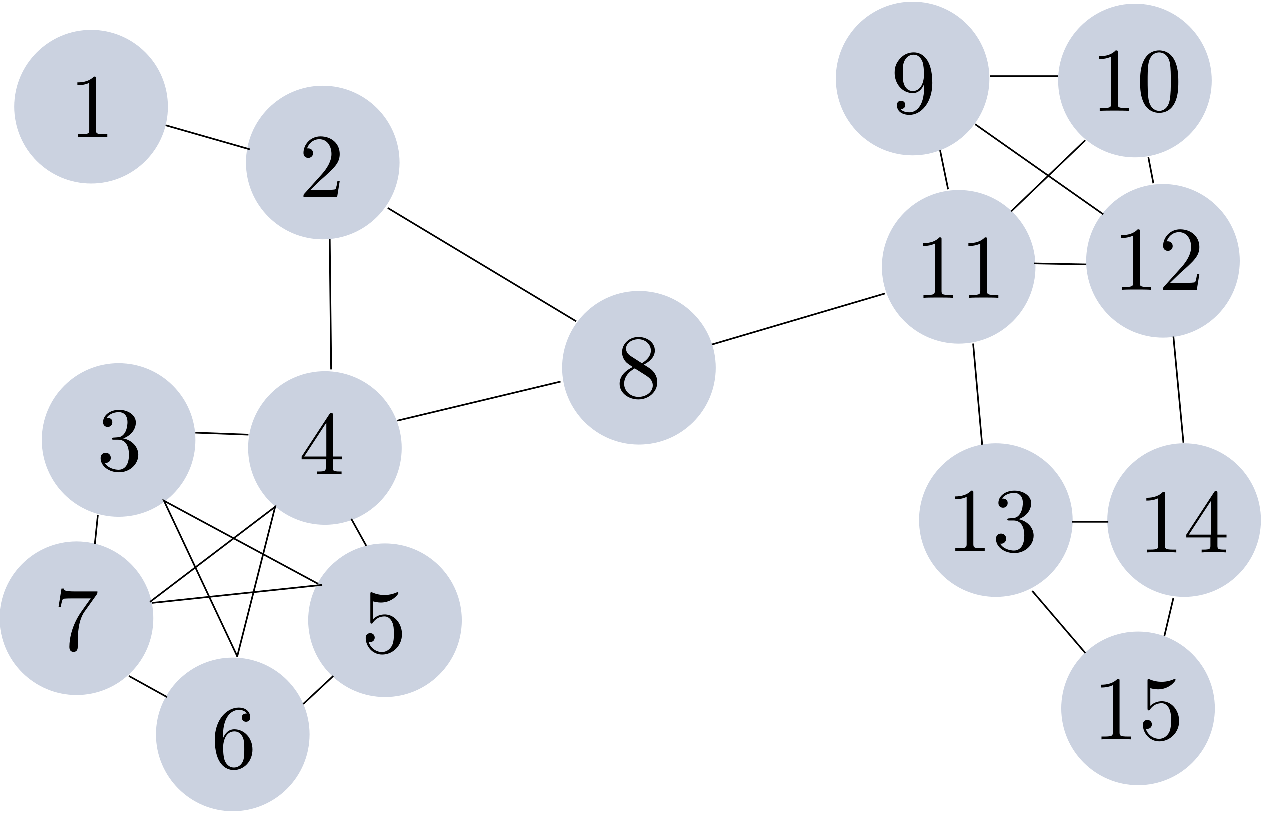}}~~~~~~~~~~~\subfigure[\dcp partitions $G$ into 5 cliques]{\includegraphics[width=0.6\columnwidth]{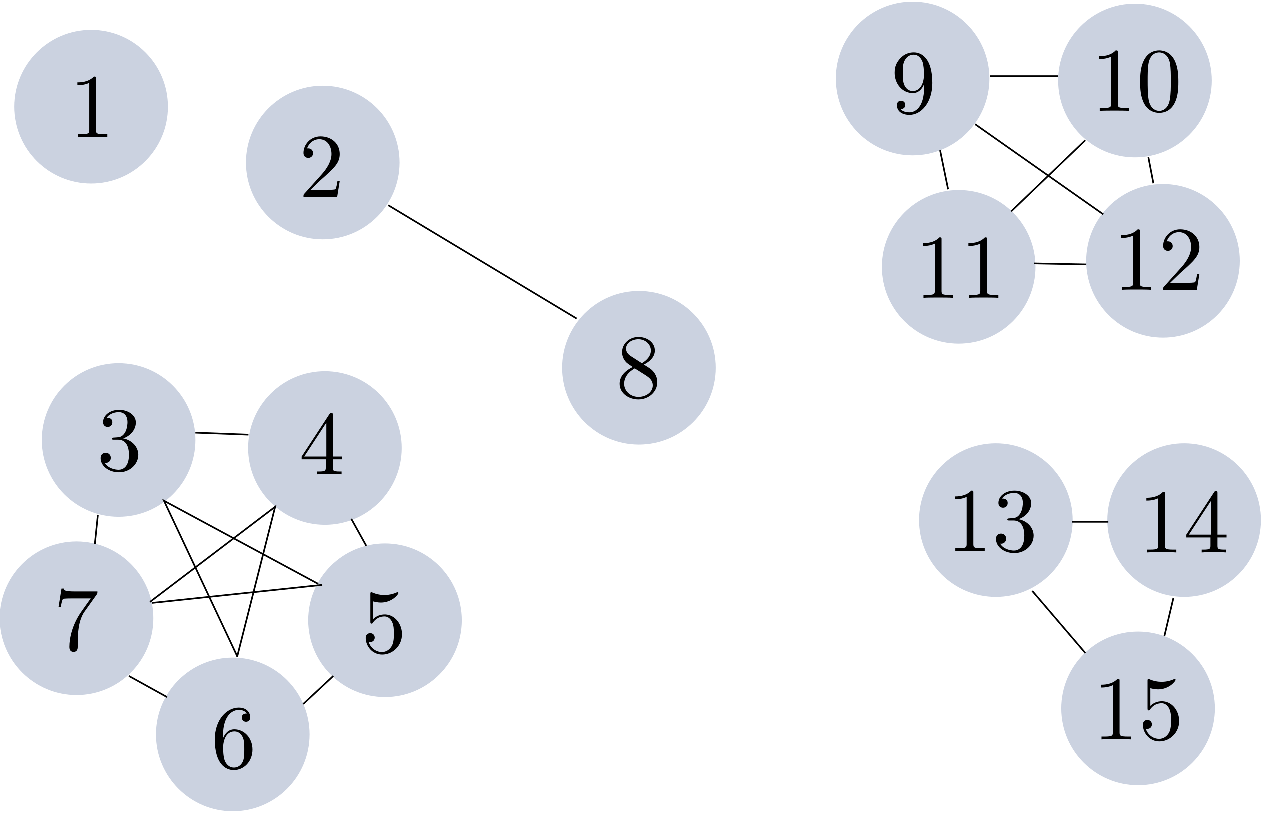}}~~~~~~~~~~~\subfigure[Each clique runs \crel]{\includegraphics[width=0.6\columnwidth]{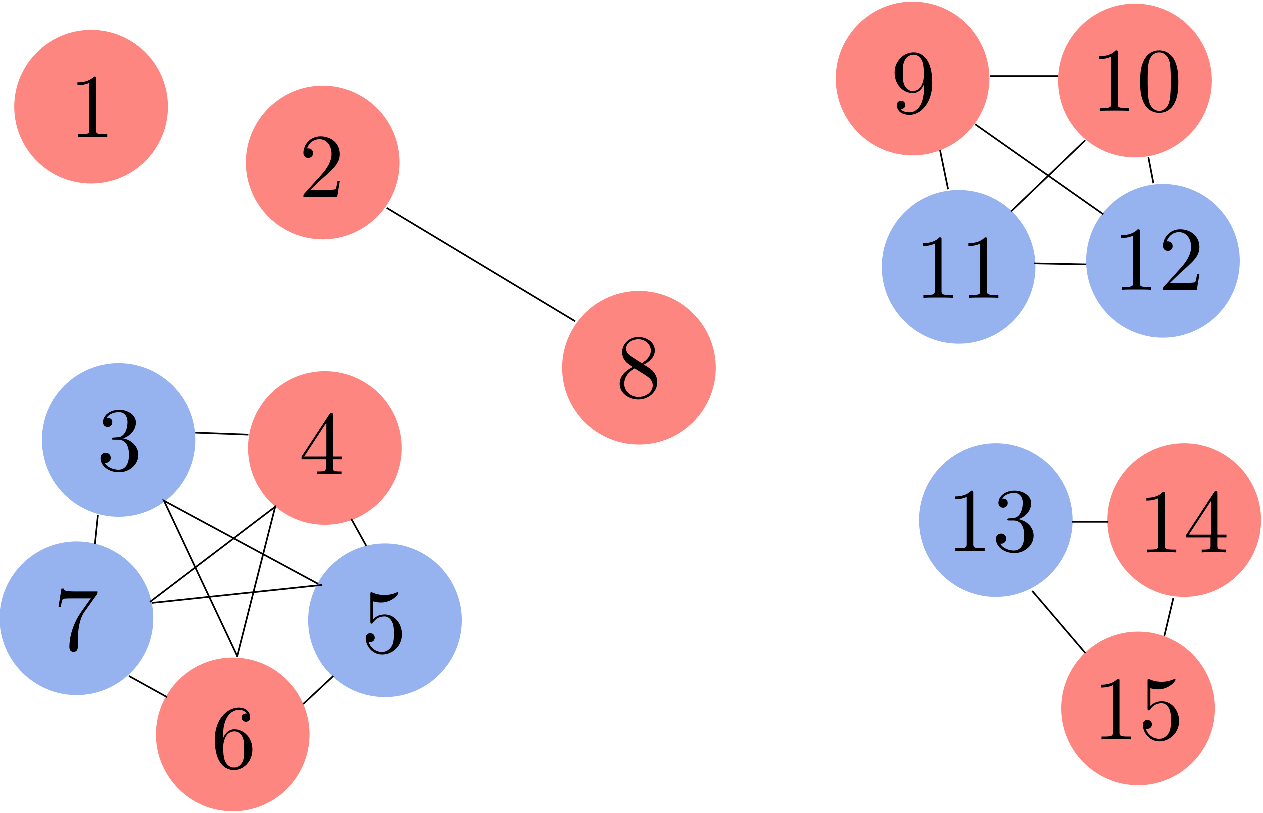}}
\caption{Qualitative description of \alg's steps over the communication graph $G$ in subfigure (a), composed of 15 robots. The number of conjectured attacks is considered to be $\alpha=2$. In the first step, we assume \dcp (\alg's line~1) partitions $G$ into 5 cliques, as shown in subfigure (b). In the second step, all 5 cliques perform \crel in parallel. Particularly, the cliques $\{(1), (2,8)\}$, since $\alpha$ is larger than or equal to their size,  consider that all of their robots will be attacked, and as a result they select all of their robots as baits (depicted with red in subfigure (c)), per \crel. In contrast, the remaining $3$ cliques, 
since  $\alpha$ is smaller than their size, they select $\alpha$ of their robots as baits. The remaining robots (depicted with blue in subfigure (c)) of each clique choose their actions greedily, independently of the other cliques, and assuming that the red robots in their clique do not exist.  
\label{fig:alg_step}}}
\end{figure*}

\subsection{Per clique attack-robust optimization (Step-2 of \alg)}~\label{subsec:DRA}
We now present \alg's second step: \textit{per clique  attack-robust  optimization} (\alg's lines~2-8).  
Since the step calls \crel as subroutine, we recall here its steps from~\cite{zhou2018resilient}: \crel takes as input the available actions of a set of robots $\mathcal{R}'\subseteq \mathcal{R}$ (i.e., the $\bigcup_{i\in\mathcal{R}'}\mathcal{X}_i$), a monotone submodular $f$, and a number of attacks $\alpha'\leq \alpha$, and constructs an action assignment $\mathcal{S}'$ by following a two-step process. First, it tries to approximate the conjectured worst-case attack to $\mathcal{S}'$, and, to this end, builds a  ``bait'' set as part of $\mathcal{S}'$.  Particularly, the bait set is aimed to attract all attacks at $\mathcal{S}'$, and for this reason, it has cardinality $\alpha'$ (the same as the number of conjectured attacks).  In more detail, \crel includes an action $x \in \bigcup_{i\in\mathcal{R}'}\mathcal{X}_i$ in the bait set (at most 1 action per robot, per Problem~\ref{pro:dis_resi_sub}) only if $f(\{x\})\geq f(\{x'\})$ for any other $x' \in \bigcup_{i\in\mathcal{R}'}\mathcal{X}_i$. That is, the bait set is composed of the ``best'' $\alpha'$ single actions. In the second step, \crel a) assumes the robots in the bait set are removed from $\mathcal{R}'$, and then b) greedily assigns actions to the rest of the robots using the centralized greedy algorithm in~\cite[Section~2]{fisher1978analysis} which ensures a near-optimal assignment (at least 1/2 close to the optimal). Specifically, for the remaining $|\mathcal{R}'|-\alpha'$ robots, denoted by $\mathcal{R}_a'$, the centralized greedy algorithm assigns actions $\mathcal{S}'_g$ for them as follows.
\begin{algorithmic}[1]
\STATE $\mathcal{S}'_g \leftarrow \emptyset$; 
\WHILE{$\mathcal{R}_a' \neq \emptyset$}
\STATE $(x_g', i_g') \in \underset{x \in \bigcup_{i\in\mathcal{R}_a'}\mathcal{X}_i}{\text{argmax}}~f(\mathcal{S}_g' \cup \{x\}) - f(\mathcal{S}_g')$; 
\STATE  $\mathcal{S}'_g \leftarrow \mathcal{S}'_g \cup\{x_g'\}$; $\mathcal{R}_a'\leftarrow \mathcal{R}_a'\setminus \{i_g'\}$. 
\ENDWHILE
\end{algorithmic}
Notably, it chooses an action $x_g'$ and the corresponding robot $i_g'$ with the maximal marginal contribution to the function value at each round (line 3).   

The bait set is one way of approximating $\alpha'$ worst-case attacks to the action assignment $\mathcal{S}'$. By executing the worst-case attacks, the attacker may or may not attack the bait set. Selecting the bait action set for the robots can add robustness against the worst-case attacks. That is, no matter the attacker attacks the bait set or not, \crel, composed of bait action assignment and greedy action assignment, gives suboptimality guarantees against the worst-cast attacks~\cite{tzoumas2017resilient, tzoumas2018resilient}.

In this context, \alg's second step is as follows: assuming the clique partition step returns $K$ cliques (\alg's line~1), now each clique in parallel with the others computes an attack-robust assignment for its robots using \crel (\alg's lines~3-8).  To this end, the cliques need to assess how many of the $\alpha$ attacks each will incur.  If there is no prior on the attack generation mechanism, then we consider a worst-case scenario where each clique incurs all the $\alpha$ attacks. Otherwise, we consider there is a prior on the attack mechanism such that each clique $k$ infers it will incur $\alpha_k\leq \alpha$ attacks. Without loss of generality, in \alg's pseudo-code in Algorithm~\ref{alg:DRA} we present the former scenario, where $\alpha_k=\alpha$ across all cliques; notwithstanding,  our theoretical results on \alg's performance  (Section~\ref{sec:perform}) hold for any $\alpha_k$ such that $\sum_{k=1}^K\alpha_k\geq \alpha$. Overall, \alg's lines~3-8 are~as~follows (cf.~example in Fig.~\ref{fig:alg_step}):

\begin{algorithm}[t]
\caption{
Distributed clique partition
(\texttt{DCP}).}
\begin{algorithmic}[1]
\REQUIRE 
Robots' positions $\mathcal{P}$; communication range $r_c$.
\ENSURE  Clique partition of graph $G$.
\STATE Given $\mathcal{P}$ and $r_c$, find communication graph $G$;
\FOR{each robot $i$} \label{line:encc_forstart}
\STATE Find robot $i$'s neighbor set $\mathcal{N}_i$ within $r_c$; {$\{$\color{gray}1st round communication\color{black}$\}$} \label{line:encc_find_nei}
\STATE Share $\mathcal{N}_i^{+} : = \{i, \mathcal{N}_i\}$ with robot $i$'s neighbors, and receives all $\mathcal{N}_j^{+}$ from its neighbors, $j\in\mathcal{N}_i$; {$\{$\color{gray}2nd round communication\color{black}$\}$} \label{line:encc_share_rec_nei}
\STATE Intersects $\mathcal{N}_i^{+}$ with every $\mathcal{N}_j^{+}$, and set the largest intersection as robot $i$'s clique, i.e., $\mathcal{C}^{i} = \text{argmax}_{\mathcal{N}_i^{+} \bigcap \mathcal{N}_{j}^{+}} |\mathcal{N}_i^{+} \bigcap \mathcal{N}_{j}^{+}|, ~j\in\mathcal{N}_i$; \label{line:encc_largest}
\STATE Share $\mathcal{C}^{i}$ with robot $i$'s neighbors;
 {$\{$\color{gray}3rd round communication\color{black}$\}$} 
\label{line:encc_share_rec_unique}
\ENDFOR \label{line:encc_endfor}
\RETURN {Generated cliques.}
\end{algorithmic}
\label{alg:EDNCCA}
\end{algorithm}

\paragraph{\alg's lines~\ref{DRA: alphaless}-\ref{DRA:alpha_small} ($\alpha< |\mathcal{C}_k|$)} If $\alpha$ is less than the clique's size (\alg's line~\ref{DRA: alphaless}), then the clique's robots choose actions by executing \crel on the clique assuming $\alpha$ attacks (\alg's line~\ref{DRA:alpha_small}).

\paragraph{\alg's lines~\ref{DRA:alpha_less}-\ref{DRA:alpha_large} ($\alpha\geq |\mathcal{C}_k|$)} But if $\alpha$ is larger than the clique's size  (\alg's line~\ref{DRA:alpha_less}), then the clique's robots choose actions by executing \crel on the clique assuming $|\mathcal{C}_k|$ attacks (\alg's line~\ref{DRA:alpha_small}); i.e., assuming that all clique's robots will be attacked. 

\paragraph{\alg's line~\ref{DRA:union}} All in all, now all robots have assigned actions, and $\mathcal{S}$ is the union of all assigned actions across all cliques (notably, the robots of each clique $k$ know only $\mathcal{S}_k$, where $\mathcal{S}_k$ is per the notation in \alg).

\medskip
Finally, \alg is valid for any number of attacks.

\section{Performance Analysis of \texttt{DRM}}~\label{sec:perform}

We now quantify \alg's performance, by bounding its
computational and approximation performance. To this end,
we use the following notion of curvature for set functions.

\subsection{Curvature}\label{subsec:curv}

\begin{definition}[Curvature~\cite{conforti1984submodular}]\label{def:curvature}
Consider non-decreasing submodular $f:2^{\mathcal{X}}\mapsto\mathbb{R}$ such that $f({x})\neq 0$, for any $x \in \mathcal{X}\setminus \{\emptyset\}$ (without loss of generality). Also, denote by $\mathcal{I}$ the collection of admissible sets where $f$ can be evaluated at. Then, $f$'s \emph{curvature} is defined as 
\begin{equation}\label{eq:curvature}
\nu_f\triangleq 1-\min_{\mathcal{S}\in\mathcal{I}}\min_{x\in\mathcal{S}}\frac{f(\mathcal{S})-f(\mathcal{S}\setminus\{x\})}{f({x})}.
\end{equation}
\end{definition}

The curvature, $\nu_f$, measures how far $f$ is from being additive. Particularly, Definition~\ref{def:curvature} implies $0 \leq \nu_f \leq 1$, and if $\nu_f=0$, then $f(\mathcal{S})=\sum_{x\in \mathcal{S}}f(\{x\})$  for all  $\mathcal{S}\in\mathcal{I}$ ($f$ is additive). On the other hand, if $\nu_f=1$, then there exist $\mathcal{S}\in\mathcal{I}$ and $x\in\mathcal{X}$ such that $f(\mathcal{S})=f(\mathcal{S}\setminus\{x\})$ ($x$ has no contribution in the presence of $\mathcal{S}\setminus\{x\}$).  

For example, in active target tracking, $f$ is the expected number of covered targets (as a function of the robot trajectories).  Then, $f$ has curvature 0 if each robot covers different targets from the rest of the robots. In contrast, it has curvature 1 if, e.g., two robots cover the exact same targets.

\subsection{Running time and approximation performance} \label{subsec:performance_analysis}

We present \alg's running time and suboptimality bounds.  To this end, we use the notation:
\begin{itemize}
\item $t_{\texttt{DCP}}^c$ and $t_{\texttt{DCP}}^s$ denote the communication and computation time of the robot that spends the highest time on three-round communications (Algorithm~\ref{alg:EDNCCA}, lines~\ref{line:encc_find_nei}, \ref{line:encc_share_rec_nei}, \ref{line:encc_share_rec_unique}) and neighbor set intersection (Algorithm~\ref{alg:EDNCCA}, line~\ref{line:encc_largest}) in \texttt{DCP};
\item $\mathcal{M}$ is a clique of $G$, which spends the highest time executing \crel;
\item $t_{\texttt{CRO}}^c$ denotes the communication time of the robot that spends the highest time exchanging information collected (e.g., the subsets of targets covered) in $\mathcal{M}$. 
\item $t^f$ denotes the time of one evaluation of the objective function $f$ (e.g., computing the number of targets covered by robots' actions). 
\item $\mathcal{X}_\mathcal{M}$ is the set of possible actions of all robots in $\mathcal{M}$; that is, $\mathcal{X}_\mathcal{M}\triangleq \cup_{i\in\mathcal{M}}\mathcal{X}_i$;
\item $f^\star$ is the optimal value of Problem~\ref{pro:dis_resi_sub};
\item $\mathcal{A}^\star(\mathcal{S})$ is a worst-case removal from $\mathcal{S}$ (a removal from $\mathcal{S}$ corresponds to a set of sensor attacks); that is,
$\mathcal{A}^\star(\mathcal{S})\in\arg\min_{\mathcal{A}\subseteq \mathcal{S}, |\mathcal{A}|\leq \alpha} \;f(\mathcal{S}\setminus \mathcal{A})$.

\end{itemize}

\begin{theorem}[Computational performance]~\label{thm:runtime}
\alg runs in $O(1) (t_{\texttt{DCP}}^c + t_{\texttt{DCP}}^s)$ + $O(1) t_{\texttt{CRO}}^c$ + $O(|\mathcal{X}_\mathcal{M}|^2) t^f$ time.  In addition, by \texttt{DRM}, each robot $i\in \mathcal{R}$ has four-round communications, including three rounds in \texttt{DCP} and one round in per clique \crel, and exchanges $3|\mathcal{N}_i|+ |\mathcal{C}_k|-1$ messages with $i\in \mathcal{C}_k$. Moreover, \texttt{DRM} performs $O(|\mathcal{N}_i|)$ operations for set intersections of each robot $i$ in \texttt{DCP} and $O(|\mathcal{X}_{\mathcal{C}_k}|^2)$ evaluations of objective function $f$ in \crel for each clique $\mathcal{C}_k, k \in \{1, \ldots, K\}$.    
\end{theorem}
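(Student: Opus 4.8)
The plan is to decompose \texttt{DRM} into its two sequential stages — the distributed clique partition of line~1 (which runs \texttt{DCP}) and the per-clique attack-robust optimization of lines~2--8 (which runs \crel inside every clique \emph{in parallel}) — bound each stage separately in terms of the quantities $t_{\texttt{DCP}}^c, t_{\texttt{DCP}}^s, t_{\texttt{CRO}}^c, t^f$, and then add the two bounds. Since the second stage is executed concurrently by all cliques, its contribution to the running time is that of the single slowest clique, which by definition is $\mathcal{M}$; this is why only $\mathcal{X}_\mathcal{M}$ and $t_{\texttt{CRO}}^c$, rather than a sum over cliques, appear in the final expression.

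For the first stage I would read off from Algorithm~\ref{alg:EDNCCA} that \texttt{DCP} consists of exactly three communication rounds (lines~\ref{line:encc_find_nei}, \ref{line:encc_share_rec_nei}, \ref{line:encc_share_rec_unique}) and one block of local computation, namely the $|\mathcal{N}_i|$ set intersections $\mathcal{N}_i^+\cap\mathcal{N}_j^+$ of line~\ref{line:encc_largest} together with the selection of the largest one. Because the number of rounds is constant, the communication time is $O(1)\,t_{\texttt{DCP}}^c$ by the definition of $t_{\texttt{DCP}}^c$; the local work of the slowest robot is $O(1)\,t_{\texttt{DCP}}^s$, and the graph construction of line~1 and the $\arg\max$ of line~\ref{line:encc_largest} are absorbed into these terms. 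Counting messages, robot $i$ exchanges $|\mathcal{N}_i|$ messages in each of the three rounds, i.e.\ $3|\mathcal{N}_i|$ in total, and performs $|\mathcal{N}_i|$ set-intersection operations, i.e.\ $O(|\mathcal{N}_i|)$.

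For the second stage I would use the fact that a clique is a complete communication subgraph: in a single round every robot of clique $\mathcal{C}_k$ broadcasts to the other $|\mathcal{C}_k|-1$ members all the locally held information needed to evaluate $f$ (e.g.\ the subsets of targets its actions cover), after which every robot holds the full description of the clique's subproblem and can simulate \crel locally with no further communication. This yields $O(1)\,t_{\texttt{CRO}}^c$ communication time and $|\mathcal{C}_k|-1$ messages per robot. The local cost of \crel is dominated by evaluations of $f$: building the bait set requires $O(|\mathcal{X}_{\mathcal{C}_k}|)$ evaluations $f(\{x\})$, and the centralized greedy stage recalled in the excerpt runs for at most $|\mathcal{C}_k|$ iterations (one per un-baited robot), each scanning at most $|\mathcal{X}_{\mathcal{C}_k}|$ candidate actions, for $O(|\mathcal{C}_k|\,|\mathcal{X}_{\mathcal{C}_k}|)$ evaluations; since actions are robot-specific we have $|\mathcal{C}_k|\le|\mathcal{X}_{\mathcal{C}_k}|$, so the total is $O(|\mathcal{X}_{\mathcal{C}_k}|^2)$ evaluations of $f$, i.e.\ $O(|\mathcal{X}_{\mathcal{C}_k}|^2)\,t^f$ time, and hence $O(|\mathcal{X}_\mathcal{M}|^2)\,t^f$ for the bottleneck clique $\mathcal{M}$. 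Adding the two stages gives the stated running time $O(1)(t_{\texttt{DCP}}^c+t_{\texttt{DCP}}^s)+O(1)\,t_{\texttt{CRO}}^c+O(|\mathcal{X}_\mathcal{M}|^2)\,t^f$, four communication rounds ($3$ in \texttt{DCP}, $1$ in \crel), and $3|\mathcal{N}_i|+|\mathcal{C}_k|-1$ total messages for robot $i\in\mathcal{C}_k$.

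Most of this is pure bookkeeping over the pseudocode; the one step that needs a genuine argument rather than just reading off line counts is the claim that each clique needs only a \emph{single} communication round to execute \crel. This is exactly where the choice of partitioning into cliques, as opposed to arbitrary connected subgroups (cf.\ Remark~\ref{rem:clique}), is used: the clique property guarantees that one all-to-all exchange makes the subproblem common knowledge within the clique, so that the $O(|\mathcal{X}_\mathcal{M}|^2)$ function evaluations performed in the bait and greedy steps incur no additional communication and contribute only to the $t^f$ term.
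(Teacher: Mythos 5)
Your proposal is correct and follows essentially the same route as the paper's proof: decompose \texttt{DRM} into the \texttt{DCP} stage and the parallel per-clique \crel stage, reduce the latter to the bottleneck clique $\mathcal{M}$, and tally rounds, messages, set intersections, and function evaluations exactly as the paper does (your only deviation is counting the bait-set construction as $O(|\mathcal{X}_{\mathcal{C}_k}|)$ evaluations where the paper charges $O(|\mathcal{X}_{\mathcal{C}_k}|\log|\mathcal{X}_{\mathcal{C}_k}|)$ for the sort, a discrepancy absorbed by the dominant $O(|\mathcal{X}_{\mathcal{C}_k}|^2)$ greedy term in either case).
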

$O(1) (t_{\texttt{DCP}}^c + t_{\texttt{DCP}}^s)$ corresponds to \alg's clique partition step (\alg's line~1), and  $O(1) t_{\texttt{CRO}}^c$ + $O(|\mathcal{X}_\mathcal{M}|^2) t^f$ corresponds to \alg's attack-robust step (\alg's lines 2-8). Particularly, \texttt{DRM}'s communication time includes the time of three-round communications in the clique partition step and the time of one-round communication of information collected in the attack-robust step; that is $O(1) t_{\texttt{DCP}}^c + O(1) t_{\texttt{CRO}}^c$. \texttt{DRM}'s computation time contains the time of neighbor set intersection in the clique partition step and the time of objective function evaluations in the attack-robust step; that is $t_{\texttt{DCP}}^s + O(|\mathcal{X}_\mathcal{M}|^2) t^f$. Notably, as the number of robots and/or the number of actions increase, \texttt{DRM}'s running time will be dominated by the time for function evaluations, i.e.,  $O(|\mathcal{X}_\mathcal{M}|^2) t^f$.


In contrast, to evaluate the objective functions, \crel~\cite[Algorithm~1]{zhou2018resilient} runs in $O(|\mathcal{X}|^2) t^f$ time. Thus, when $\mathcal{X}_\mathcal{M}\subset \mathcal{X}$ (which happens when $G$ is partitioned into at least $2$ cliques), \alg offers a computational speed-up.  The reasons are two: \textit{parallelization of action assignment}, and \textit{smaller clique size}. Particularly, \alg splits the action assignment among multiple cliques, instead of performing the assignment in a centralized way, where all robots form \textit{one large} clique (the $\mathcal{R}$).  That way, \alg enables each clique to work in \textit{parallel}, reducing the overall running time to that of clique $\mathcal{M}$ (Theorem~\ref{thm:runtime}).  
Besides parallelization, the smaller clique size also {contributes to the computational~reduction.}
To illustrate this, assume $G$ is partitioned to $K$ cliques of \textit{equal} size, and all robots have the same number of actions ($|\mathcal{X}_i|=|\mathcal{X}_j|$ for all $i,j\in\mathcal{R}$). Then, $O(|\mathcal{X}_\mathcal{M}|^2)=O(|\mathcal{X}|^2)/K^2$, that is, \alg's function evaluation time is reduced by the factor $K^2$. 

\begin{theorem}[Approximation performance of \alg]~\label{thm:DRA}
\alg returns a feasible action-set $\mathcal{S}$ such that
\begin{align}\label{eq:appro_kg2}
\begin{split}
\frac{f(\mathcal{S}\setminus \mathcal{A}^\star(\mathcal{S}))}{f^\star} \geq  \frac{1-\nu_f}{{2}}.
\end{split}
\end{align}
\end{theorem}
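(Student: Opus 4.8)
The plan is to derive the bound by comparing \alg's per-clique behavior to a hypothetical run of \crel on the full robot set $\mathcal{R}$, and then invoking the known guarantee of \crel from~\cite{zhou2018resilient}. First I would recall that \crel, when run on any robot set $\mathcal{R}'$ with $\alpha'$ conjectured attacks, returns an assignment $\mathcal{S}'$ satisfying $f(\mathcal{S}'\setminus\mathcal{A}^\star(\mathcal{S}')) \geq \frac{1-\nu_f}{2}\, f^\star_{\mathcal{R}'}$, where $f^\star_{\mathcal{R}'}$ is the optimal robust value restricted to $\mathcal{R}'$; this is the centralized result we are allowed to assume. The key structural observation is that \alg selects, in each clique $\mathcal{C}_k$, at least $\min\{\alpha,|\mathcal{C}_k|\}$ bait actions, and since $\sum_k \alpha_k \geq \alpha$ (here $\alpha_k = \min\{\alpha,|\mathcal{C}_k|\}$), the total number of baits chosen across all cliques is at least $\alpha$.

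Next I would argue that the union of the per-clique bait sets is ``attack-absorbing'' for the global problem: any worst-case attack $\mathcal{A}^\star(\mathcal{S})$ of size at most $\alpha$ on the combined assignment $\mathcal{S} = \bigcup_k \mathcal{S}_k$ can remove at most $\alpha$ actions, and because each clique has enough baits, the marginal loss caused by removing these actions is controlled by the value of the baits themselves. Here I would lean on the submodularity of $f$ and the curvature bound: the baits are, within each clique, the single actions of largest value, so whatever the attacker removes is dominated (up to the curvature factor $1-\nu_f$) by the value of the greedily-assigned survivors. Concretely, I expect to split into the two cases matching \alg's lines~4--7: for cliques with $\alpha \geq |\mathcal{C}_k|$, the whole clique is bait and its contribution is essentially ignored in the bound; for cliques with $\alpha < |\mathcal{C}_k|$, the per-clique \crel guarantee applies directly, giving $f(\mathcal{S}_k \setminus \mathcal{A}_k) \geq \frac{1-\nu_f}{2} f^\star_{\mathcal{C}_k}$ for the portion of the attack landing in that clique.

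Then I would assemble the global inequality: using monotonicity of $f$, the surviving set $\mathcal{S}\setminus\mathcal{A}^\star(\mathcal{S})$ contains the surviving set of the clique $\mathcal{M}^\star$ that contains the ``most valuable'' part of an optimal solution, or more carefully, one shows $f(\mathcal{S}\setminus\mathcal{A}^\star(\mathcal{S})) \geq \max_k f(\mathcal{S}_k \setminus \mathcal{A}_k^\star) \geq \frac{1-\nu_f}{2}\max_k f^\star_{\mathcal{C}_k}$. The final step is to show $\max_k f^\star_{\mathcal{C}_k} \geq f^\star$ is too strong, so instead I would show that one of the cliques' restricted problems already captures the global optimum's value, or alternatively that $f^\star \leq f(\mathcal{S} \setminus \mathcal{A}^\star(\mathcal{S})) \cdot \frac{2}{1-\nu_f}$ directly via the clique containing a worst-case-robust optimal action. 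I would likely use the fact (from the centralized analysis) that the bait construction guarantees the surviving greedy portion in each clique competes with the clique-restricted optimum, and that the clique partition covers all of $\mathcal{R}$, so some clique's optimum is at least as good as restricting $f^\star$'s optimal assignment to that clique.

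The main obstacle I anticipate is the last gluing step: relating $\max_k f^\star_{\mathcal{C}_k}$ (or the sum, with submodular overcounting) back to the \emph{global} $f^\star$ without losing more than the factor $\frac{1-\nu_f}{2}$. Since $f$ is submodular, summing per-clique optima overcounts, and taking a max undercounts; the resolution is presumably that the attacker's budget $\alpha$ is shared across cliques while \alg defends each clique against the \emph{full} $\alpha$, so the conservativeness exactly compensates. I expect the proof to hinge on a lemma (possibly implicit in~\cite{tzoumas2018resilient,zhou2018resilient}) stating that if each clique is robust against $\alpha$ attacks and the cliques partition $\mathcal{R}$, then the union is robust against $\alpha$ global attacks with no loss in the approximation ratio — essentially because any global attack is a union of per-clique attacks each of size at most $\alpha$. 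Verifying that this decomposition of $\mathcal{A}^\star(\mathcal{S})$ across cliques preserves the bound, using submodularity to combine the per-clique survivor guarantees into a single global one, is where the real work lies.
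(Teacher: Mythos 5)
Your high-level intuition is right---the baits absorb attacks, and the fact that each clique defends against the full budget $\alpha$ while the attacker must split that budget across cliques is exactly what makes the argument work---but the concrete route you propose does not close, and the obstacle you flag at the end is a genuine gap, not a technicality. Invoking \crel's guarantee per clique as a black box gives $f(\mathcal{S}_k\setminus\mathcal{A}_k)\geq\frac{1-\nu_f}{2}f^\star_{\mathcal{C}_k}$, but the clique-restricted robust optima need not recombine into $f^\star$: taking the max undercounts, and subadditivity of $f$ gives $\sum_k f(\mathcal{S}_k\setminus\mathcal{A}_k)\geq f(\mathcal{S}\setminus\mathcal{A}^\star(\mathcal{S}))$, which is the wrong direction. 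Worse, the gluing lemma you hope for is false as stated: if every clique has size at most $\alpha$, then every clique-restricted robust optimum $f^\star_{\mathcal{C}_k}$ is zero while the global $f^\star$ can be positive, so no combination of per-clique robust guarantees, stated relative to $f^\star_{\mathcal{C}_k}$, can reach $f^\star$.

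The paper avoids this by not using \crel's guarantee as a black box. It first linearizes via the curvature inequality, $f(\mathcal{S}\setminus\mathcal{A}^\star(\mathcal{S}))\geq(1-\nu_f)\sum_{r\in\mathcal{W}}f(\mathcal{S}(r))$, where $\mathcal{W}$ is the set of surviving robots; this additive sum decomposes exactly over cliques with no submodular over- or under-counting. The compensation argument is then an explicit counting argument on singletons: in each clique $k$ the attacker removes at most $\alpha$ robots in total, while the clique holds $\alpha$ baits, so the surviving baits outnumber the attacked greedy robots, $|\mathcal{W}_k^b|\geq|\mathcal{R}(\mathcal{S}_k^g)\setminus\mathcal{W}_k^g|$; since every bait has singleton value at least that of every greedy action, a subset of the surviving baits can stand in for the attacked greedy robots, recovering the value of the full greedy set $\mathcal{S}_k^g$, to which the $1/2$ greedy guarantee (against the clique-restricted, attack-free assignment) applies. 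The leftover surviving baits dominate the corresponding optimal singletons, and subadditivity then recombines $\sum_k f(\mathcal{S}^\star(\mathcal{W}_k))\geq f(\mathcal{S}^\star(\mathcal{W}))\geq f^\star$---this use of subadditivity points in the useful direction precisely because one is lower-bounding a sum by the value of a union. Your proposal is missing both the curvature linearization and the bait-versus-attacked-greedy counting; without them the per-clique bounds cannot be assembled into the global one.
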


The proof of the theorem follows from~\cite[Theorem 1]{tzoumas2018resilient}.

From eq.~\eqref{eq:appro_kg2}, we conclude that even though \alg is a distributed, faster algorithm than its centralized counterpart, it still achieves a near-to-centralized performance. Generally, Theorem~\ref{thm:DRA} implies \alg guarantees a close-to-optimal value
for any submodular $f$ with curvature $\nu_f<1$.
\begin{remark}\label{rem:myopic}
A myopic algorithm that selects actions for each robot independently of all other robots (in contrast to \alg, whose subroutine \crel accounts for the other robots' actions during the greedy action assignment), guarantees the approximation bound $1-\nu_f$ ({Algorithm~\ref{alg:MM} in Appendix~C}). However, being exclusively myopic, {Algorithm~\ref{alg:MM}} has worse practical performance than $\texttt{DRM}$. 

{In Appendix C, we also show Algorithm~\ref{alg:MM} is equivalent to \crel (cf.~Section~\ref{subsec:DRA}) when applied to $\mathcal{R}$, under the assumption that \emph{all} robots in $\mathcal{R}$ are attacked. This further reveals the practical inefficiency of Algorithm~\ref{alg:MM}}. 
\end{remark}

\begin{remark}

Notably, \texttt{DCP} always finds a set of cliques. That is, it always terminates. However, in some degenerate cases, it may end up with a set of singletons, i.e., cliques that have one robot only. This issue can be mitigated by allowing for longer computation times. For example,  in \texttt{DCP}'s line~\ref{line:encc_largest}, there may exist multiple largest intersection sets $\mathcal{C}^{i}$ for each robot $i$. To break ties, we randomly select one, which may result in more singletons in the end. One natural way to find the largest intersection set that leads to the smallest number of singletons is to evaluate all the largest intersection sets $\mathcal{C}^{i}$.  Clearly, the evaluation takes more time. Specifically, the number of largest intersection sets for each robot $i$ is the number of its neighbors $|\mathcal{N}_i|$ in the worst case. Then evaluating all the largest intersection sets takes $O(N)$ time, while random selection takes $O(1)$ time. However, with longer computation time, \texttt{DCP} is more likely to generate fewer and larger cliques. 

The number of cliques plays a trade-off between the running time and practical performance of \texttt{DRM}. Specifically, for a given number of robots, \texttt{DRM} runs faster with more and smaller cliques since all cliques operate in parallel. While, at the same time, \texttt{DRM}'s  practical performance becomes poorer. This is because, with more and smaller cliques, more attacks will be inferred by \texttt{DRM} since each clique $\mathcal{C}_k$ infers it will incur $\alpha_k = \min\{\alpha, |\mathcal{C}_k|\}$ attacks. Therefore, \texttt{DRM} is more conservative and closer to the myopic algorithm introduced in Remark~\ref{rem:myopic}. On the contrary, with fewer and larger cliques, \texttt{DRM} gives a better practical performance but runs slower.

\end{remark}

\section{Improved Distributed Robust Maximization}\label{sec:improve_main}
In \texttt{DRM}, each clique $\mathcal{C}_k$ assumes that the number of attacks $\alpha_k$ against the clique is either equal to the total number of attacks $\alpha$, or equal to its size $|\mathcal{C}_k|$ (when $\alpha \geq |\mathcal{C}_k|$). Even though this strategy guarantees a close-to-optimal approximation performance (cf.~Section~\ref{sec:perform}), it is conservative, since the total number of attacks that all cliques infer ($\sum_{k} \alpha_k$) can be much larger than the real number of attacks $\alpha$ for the team. In this section, we  design a strategy to amend this conservativeness. Particularly, we present an \textit{Improved Distributed Robust Maximization} algorithm (\texttt{IDRM}), and analyze its performance in terms of approximation performance and running time. 

\subsection{{Improved inference of each clique's attack number}} 
The number of attacks in each clique $\alpha_k$ can be inferred by leveraging neighboring communications. First, note that robots can communicate with {all their neighbors within communication range} even though these neighbors are in different cliques. For example, in Fig.~\ref{fig:alg_step}, robot 2 can still communicate with robots 1 and 4 even though they are partitioned into different cliques. However, in \texttt{DRM}, this available communication is {ignored}. Second, besides the {3-hop} communications required for the execution of distributed clique partition (\texttt{DCP}; cf.~Algorithm~\ref{alg:EDNCCA}), the robots can also share their actions' function values (e.g., the number of targets covered) with their 3-hop neighbors.  Evidently, while \texttt{DRM} assumes robots to share actions' function values with their 1-hop neighbors within the same clique, sharing the actions' function values among 3-hop neighbors can give a better inference of $\alpha_k$, {and, consequently, better performance.} 

Recall that $\texttt{DRM}$ sets $\alpha_k$ equal to $\alpha$ (or to $|\mathcal{C}_k|$, when $\alpha \geq |\mathcal{C}_k|$) for each clique $\mathcal{C}_k$. Therefore, \alg selects a \emph{bait set of $\alpha_k$} robots in each clique (cf.~Section~\ref{subsec:DRA}). Evidently, some of these ``bait'' robots may not be among the \textit{$\alpha$ ``bait'' robots} that \crel would have selected if it would have been applied directly to $\mathcal{R}$ (assuming centralized communication among all robots).\footnote{{We refer to any robot in the selected bait set of a clique $k$ as a \emph{top $\alpha_k$ robot} in the clique; similarly, when we consider the set of all robots $\mathcal{R}$, the \emph{top $\alpha$ robots} are the robots in the bait set selected by a \crel applied to $\mathcal{R}$ (when the number of attacks against $\mathcal{R}$ is $\alpha$).}} This can be checked by communicating actions' function values among 3-hop neighbors.  Particularly, if some robot is not one of the top $\alpha$ robots among its 3-hop neighbors, it is impossible that it is in the top $\alpha$ robots among the entire team. Thus, this robot can be marked as ``unselected" and $\alpha_k$ can be reduced. Based on this rule, we describe our $\alpha_k$ inferring strategy in detail in Algorithm~\ref{alg:alpha_infer_known}.

\begin{algorithm}[t]
\caption{{Algorithm to approximate the number of attacks $\alpha_k$ against a clique $k$, given a known number of attacks $\alpha$ against all robots in $\mathcal{R}$.}}
\begin{algorithmic}[1]
\REQUIRE
    Robots' available actions $\mathcal{X}_i$,  $i\in \mathcal{R}$;
    monotone and submodular function $f$;
    attack number $\alpha$.
\ENSURE  Number of attacks $\hat{\alpha}_k$ for clique $\mathcal{C}_k$, ~$k \in \{1, \ldots, K\}$. 
\IF{$|\mathcal{C}_k| > \alpha$} \STATE $\hat{\alpha}_k = \alpha$; 
\ELSE 
\STATE $\hat{\alpha}_k = |\mathcal{C}_k|$;
\ENDIF
\STATE Each clique selects the top $\hat{\alpha}_k$ robots as $\mathcal{C}_k^{\hat{\alpha}_k}$;
\FOR{each robot $i \in \mathcal{C}_k^{\hat{\alpha}_k}$}
\IF{robot $i$ is not one of the top $\alpha$ robots in its 3-hop neighbors}
\STATE $\hat{\alpha}_k = \hat{\alpha}_k -1$;
\ENDIF
\label{afind: outsider}
\ENDFOR    
\RETURN $\hat{\alpha}_k$.  \label{afind:findbest}
\end{algorithmic}
\label{alg:alpha_infer_known}
\end{algorithm}
We use Fig.~\ref{fig:alg_step} to illustrate how Algorithm~\ref{alg:alpha_infer_known} works with an example. When $\alpha =2$, clique $\mathcal{C}_2:=\{2,8\}$ first infers $\hat{\alpha}_1 =2$, and robots 2 and 8 are selected. Then, robot 2 communicates with its 1-hop, 2-hop, and 3-hop neighbors (\{1, 4, 8\}, \{3, 5, 6, 7, 11\}, \{9, 10, 12, 13\}). {If robot 2 is not one of the top 2 robots among its 3-hop neighbors}, robot 2 will be marked as ``unselected" and $\hat{\alpha}_1$ will be reduced by 1 ($\hat{\alpha}_1 =1$). Similarly, if robot 8 is not one of the top 2 robots among its 3-hop neighbors, $\hat{\alpha}_1$ will be further reduced by 1 ($\hat{\alpha}_1 =0$). This way, instead of picking out 9 ``bait'' robots from 5 cliques (Fig.~\ref{fig:alg_step}-c), fewer robots will be selected. All in all, by using Algorithm~\ref{alg:alpha_infer_known}, we can reduce $\texttt{DRM}$'s conservativeness in inferring the number of attacks in each clique.

\subsection{Performance analysis of \texttt{IDRM}} \label{sec:performance-IDRM}
The robots of each clique $k$, using the number of attacks $\hat{\alpha}_k$ generated by Algorithm~\ref{alg:alpha_infer_known}, choose actions $\mathcal{S}_k$ by executing $\texttt{central-robust}$~\cite{zhou2018resilient}, that is, 
$$
\mathcal{S}_k= \crel\!\!~(\bigcup_{i\in \mathcal{C}_k}\mathcal{X}_i,f, \hat{\alpha}_k).
$$ 

\textbf{Approximation performance of \texttt{IDRM}.} \texttt{IDRM}  has the same approximation bound as that of \texttt{DRM} (cf.~eq.~\ref{eq:appro_kg2}). 
Notwithstanding, as we demonstrate in our numerical evaluations ({cf. Section~\ref{subsec:improve}}), \texttt{IDRM} performs better than \texttt{DRM} in practice, since \texttt{IDRM} utilizes more information (actions' function values shared among all 3-hop neighbors). When the communication graph $G$ only has non-overlapping cliques (i.e., the robots of each clique can communicate only with their neighbors within the clique), \texttt{IDRM} and \texttt{DRM} will exhibit the same performance. 

\textbf{Computational performance of \texttt{IDRM}.} \texttt{IDRM} runs in more time than \texttt{DRM}, since each robot needs to verify if it belongs to the top $\alpha$ robots among its 3-hop neighbors instead of its 1-hop neighbors within its clique as in \texttt{DRM}. But this verification only takes linear time. Thus, the running time of \texttt{IDRM} is similarly dominated by {the \texttt{central-robust} operating in all cliques} as in \texttt{DRM}. Also, each robot only needs to share with its 3-hop neighbors the function value of its best action instead of that of each action. Thus, \texttt{IDRM} keeps the computational advantage of \texttt{DRM}.
 
\section{Numerical Evaluations}~\label{sec:simulation}
We first present \alg's Gazebo and MATLAB evaluations in scenarios of \textit{active target tracking with swarms of robots}. Then we illustrate the advantages of \texttt{IDRM} by comparing it to \texttt{DRM}. 
The implementations' code is available online.\footnote{\url{https://github.com/raaslab/distributed_resilient_target_tracking.git}} We run the code on a ThinkPad laptop with Intel Core i7 CPU @ 2.6 GHz $\times$ 8  and  62.8 GB Memory by using Matlab 2018b and ROS Kinetic installed on Ubuntu 16.04. Due to the limited computer resources, we approximate the running time of {these distributed algorithms} by the total running time divided by the number of cliques, since all the cliques perform in parallel. Notably, the running time of the algorithms contains both the time of communication and the time of computation.

\subsection{Robust multi-robot target tracking}\label{subsec:sim_target_tracking} 

\textbf{Compared algorithms.} We compare \alg with two algorithms.  
First, the centralized counterpart of \alg in~\cite{zhou2018resilient}, named \crel (its near-optimal performance has been extensively demonstrated in~\cite{zhou2018resilient}). The second algorithm is the centralized greedy algorithm in~\cite{fisher1978analysis}, named \cgr. The difference between the two algorithms is that the former is attack-robust, whereas the latter is attack-agnostic.  For this reason, in~\cite{zhou2018resilient} we demonstrated, unsurprisingly, that \cgr has inferior performance to \crel in the presence of attacks.  However, we still include \cgr in the comparison, to highlight the differences among the algorithms both in running time and performance.
 
\subsubsection{Gazebo evaluation over multiple steps with mobile targets}
We use Gazebo simulations to evaluate \alg's performance across multiple rounds (time-steps). That way, we take into account the
kinematics and dynamics of the robots, as well as, the fact that the actual trajectories of the targets, along with the sensing noise, may force the robots to track fewer targets than expected. The motion model for the moving targets is known to the robots but it is corrupted with Gaussian noise. Therefore, the robots use a Kalman Filter to estimate the positions of the targets at each step. Due to the running efficacy of Gazebo (which is independent of \alg), we focus on small-scale scenarios of 10 robots. In the MATLAB simulation, we focus instead on larger-scale scenarios of up to 100 robots.

\textbf{Simulation setup.}
\begin{figure}[t]
\centering{
\subfigure[Gazebo environment]
{\includegraphics[width=0.46\columnwidth]{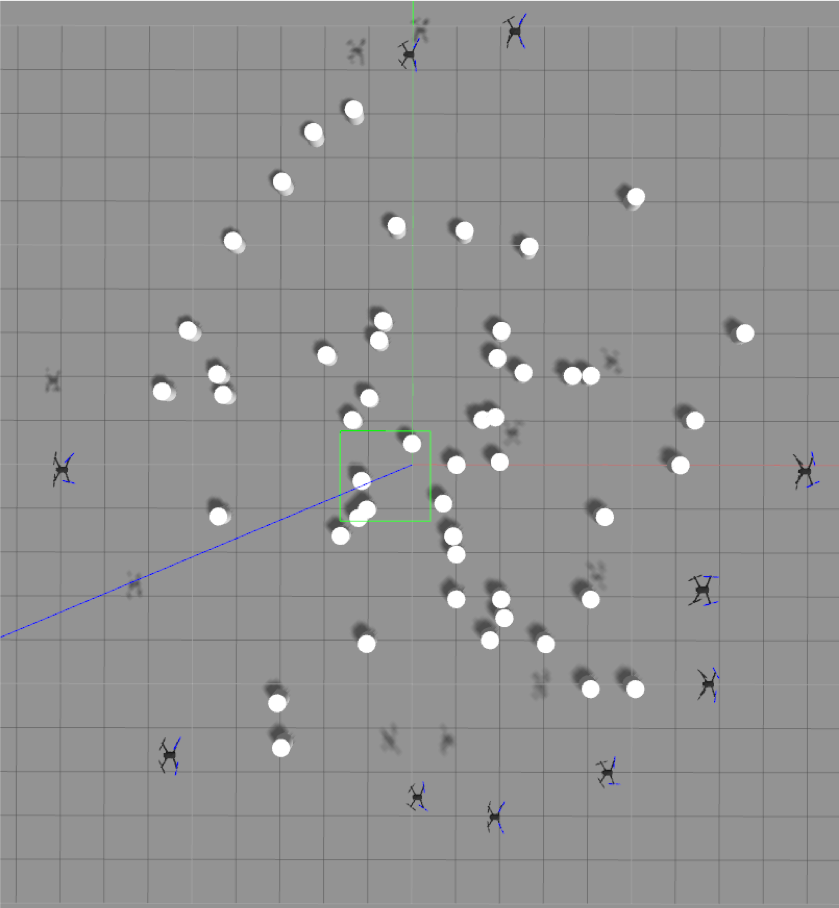}}~
\subfigure[Rviz environment] 
{\includegraphics[width=0.46\columnwidth]{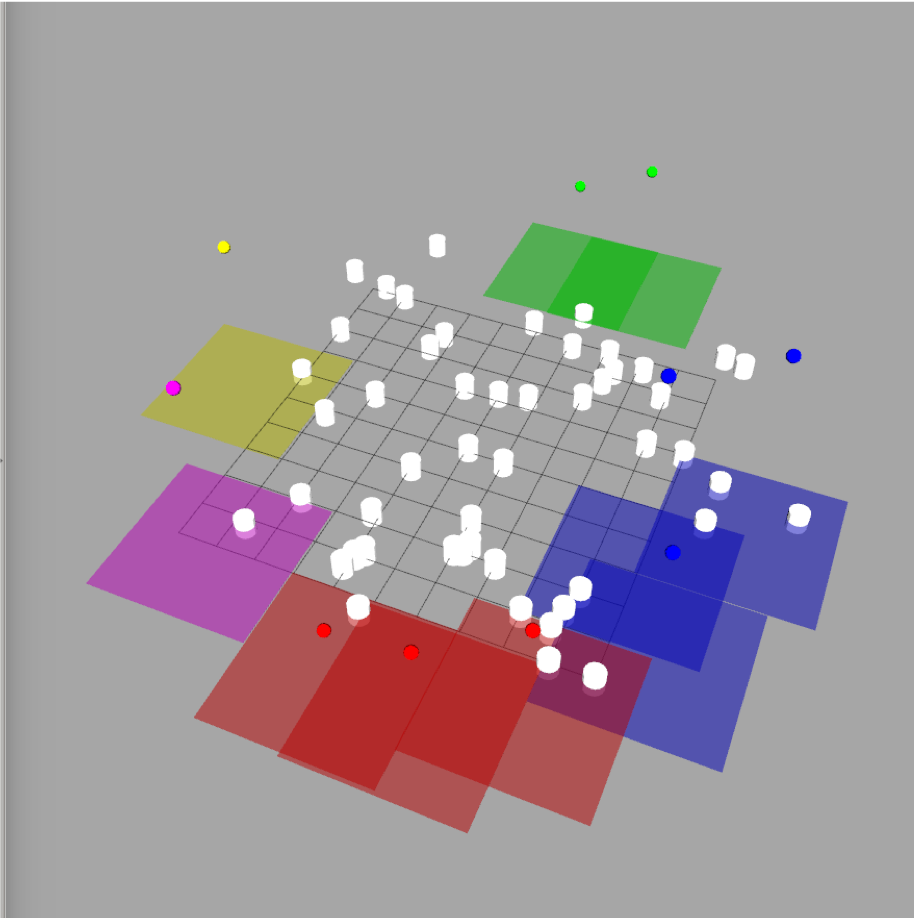}}
\caption{Gazebo simulation setup: 10 aerial robots and 50 ground mobile targets: (a) Gazebo environment; and (b) Rviz environment. Each robot is color-coded, along with its coverage region.  All robots in the same clique have the same color. The targets are depicted as white cylindrical markers. \label{fig:gazebo_rviz}}
}
\end{figure}
We consider 10 aerial robots that are tasked to track 50 ground mobile targets (Fig.~\ref{fig:gazebo_rviz}-(a)). We set the number of attacks $\alpha$ equal to $4$, and the robots' communication range to be $r_c = 5$ meters. We also visualize the robots, their field-of-view, their cliques, and the targets using the Rviz environment (Fig.~\ref{fig:gazebo_rviz}-(b)). Each robot $i$ has 5 candidates trajectories, $\mathcal{X}_i = \{\texttt{forward, backward, left,  right, stay}\}$, and flies on a different fixed plane (to avoid collision with other robots).
Each robot has a square field-of-view $l_o \times l_o$. Once a robot picks a non-stay trajectory, it flies a distance $l_f$ along that trajectory. If the robot selects the \texttt{stay} trajectory, it stays still (i.e., $l_f = 0$). Thus, each trajectory has a rectangular tracking region with length $l_t = l_f+l_o$ and width $l_o$. We set the tracking length $l_t = 6$, and tracking width $l_o=3$ for all robots. 
We assume robots obtain noisy position measurements of the targets, and then use a Kalman filter to estimate the target's position. We consider $f$ to be the expected number of targets covered, given all robots chosen trajectories. 

For each of the compared algorithms, at each round, each robot picks one of its 5 trajectories. Then, the robot flies a $l_f =3$ meters along the selected \texttt{non-stay} trajectory and stays still with the \texttt{stay} trajectory.  

When an attack happens, we assume the attacked robot's tracking sensor (e.g., camera) to be turned-off; nevertheless, we assume it can be active again at the next round. The attack is a worst-case attack, per Problem~\ref{pro:dis_resi_sub}'s framework.  Particularly, we compute the attack via a brute-force algorithm, which is viable for small-scale scenarios (as this one).

We repeat for 50 rounds. A video is available online.\footnote{\url{https://youtu.be/JA8IOStLndE}}

\textbf{Results.} 
\begin{figure}[th!]
\centering{
\subfigure[Running time ]{\includegraphics[width=0.50\columnwidth]{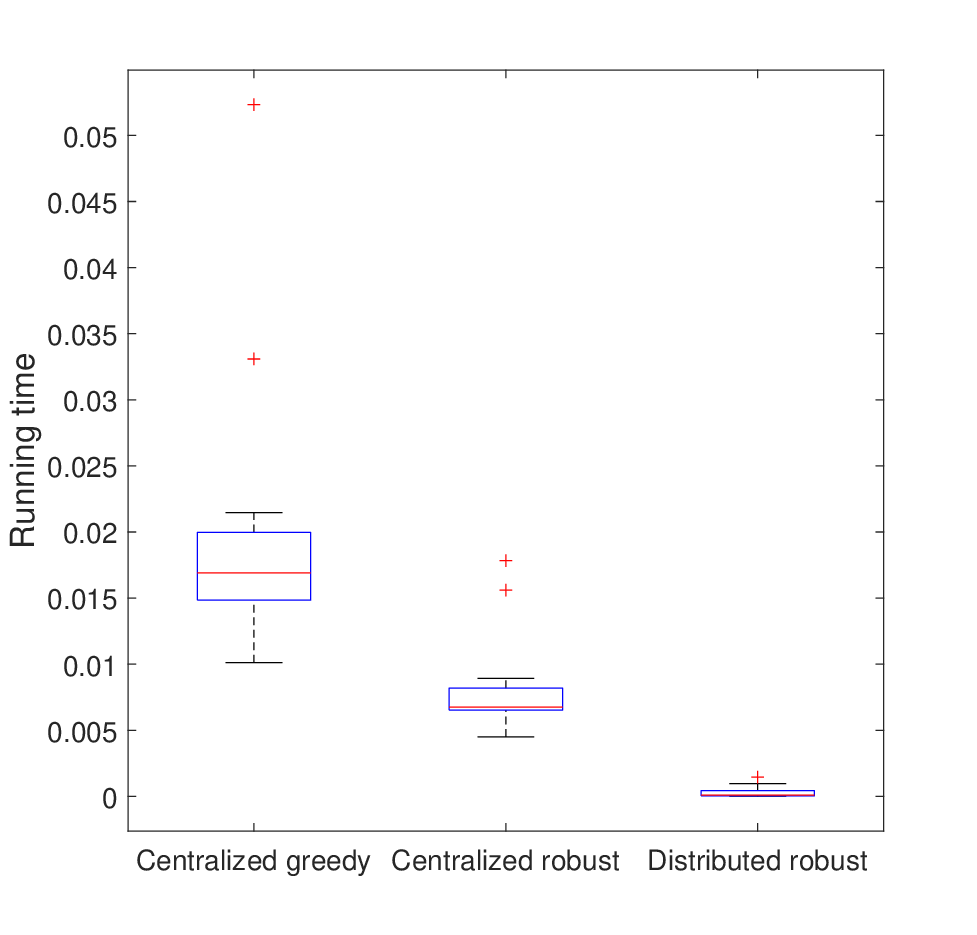}}  \hspace*{-2.5mm}
\subfigure[Number of targets tracked]{\includegraphics[width=0.50\columnwidth]{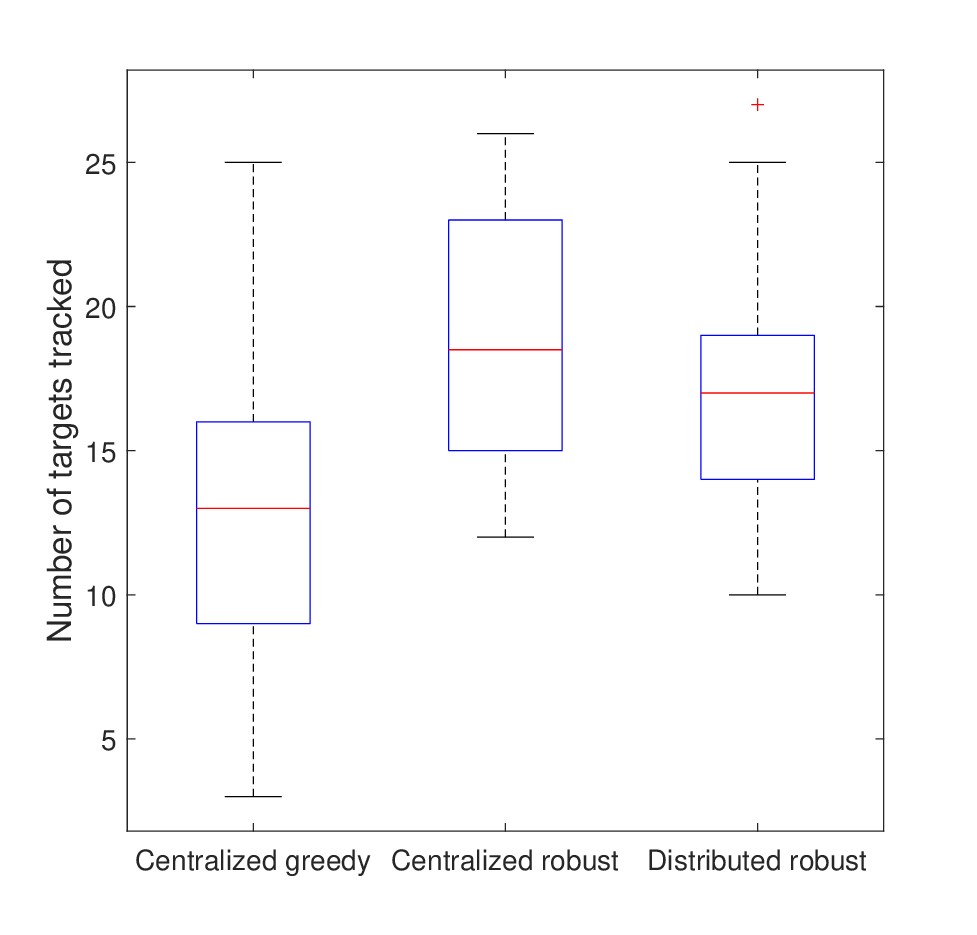}}
\caption{Gazebo evaluation (averaged across 50 rounds):
The tracking performance is captured by the number of covered targets per round. On each box in (a) and (b), the central red mark denotes the median, and the bottom and top edges of the box denote the 25th and 75th percentiles, respectively. The whiskers extend to the most extreme values not considered outliers, and the outliers are plotted individually using the red `+' symbol. If a box does not has outliers (as in (b)), the bottom and top ends of whiskers correspond to the minimum and maximum values.} 
\label{fig:compare_gazebo}}
\end{figure}
The results are reported in Fig.~\ref{fig:compare_gazebo}. 
We observe:

\textit{a) Superior running time:} \alg runs considerably faster than both \crel and \cgr: 1.8 orders faster than the former, and 2.2 orders faster than the latter, with average running time {0.1msec} (Fig.~\ref{fig:compare_gazebo}-(a)). 

\textit{b) Near-to-centralized tracking performance:} Despite that \alg runs considerably faster, it maintains near-to-centralized performance: \alg covers on average 18.8 targets per round, while \crel covers 17.1 (Fig.~\ref{fig:compare_gazebo}-(b)). To statistically evaluate \alg and \crel, we run a t-test with the default 5\% significance level on the targets covered by them. The t-test gives the test decision $H=0$ and $p$-value $p=0.0805$, which indicates that t-test does not reject the null hypothesis that the means of the number of targets covered by \alg and \crel are equal to each other at the 5\% significance level. This again demonstrates that  \alg performs very close to \crel. 

As expected, the attack-agnostic \cgr performs worse than all algorithms, even being centralized. Similarly, we run a t-test on the number of targets covered by \cgr and \alg and obtain $H=1$ and $p=0.00069$, which indicates that the means of the number of targets covered by \cgr and \alg are statistically different at the 5\% significance level.

\begin{figure*}[th!]
\centering{
\subfigure[10 robots with $r_c=120$]{\includegraphics[width=0.515\columnwidth]{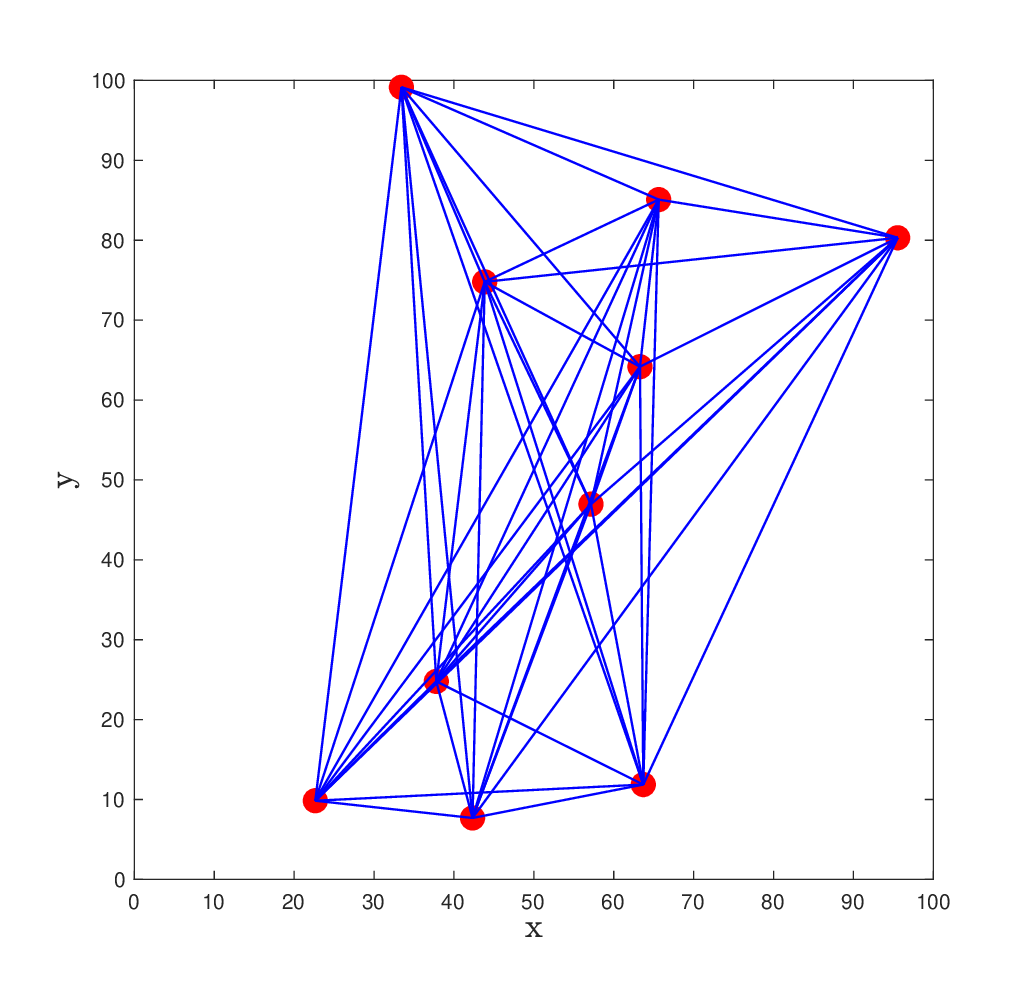}}~\subfigure[15 robots with $r_c = 60$ ]{\includegraphics[width=0.515\columnwidth]{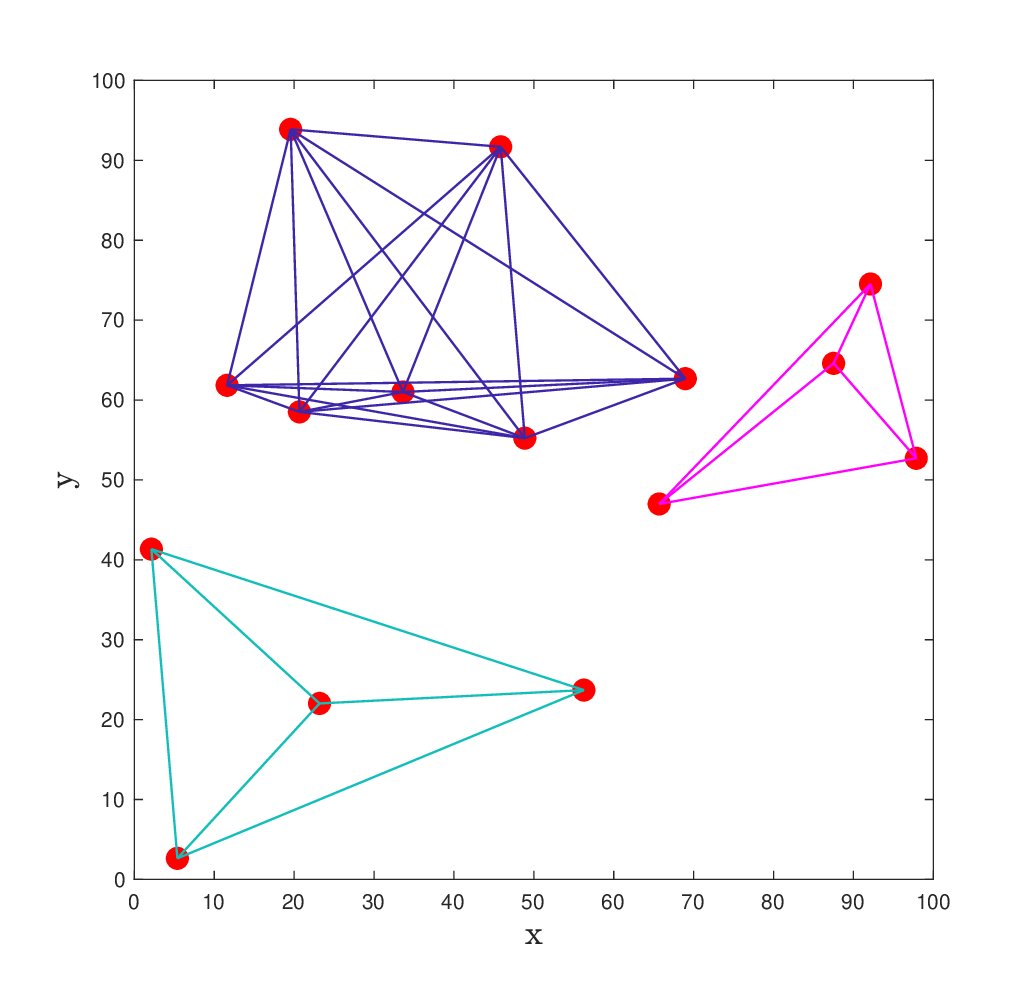}}~\subfigure[30 robots with $r_c = 90$] {\includegraphics[width=0.515\columnwidth]{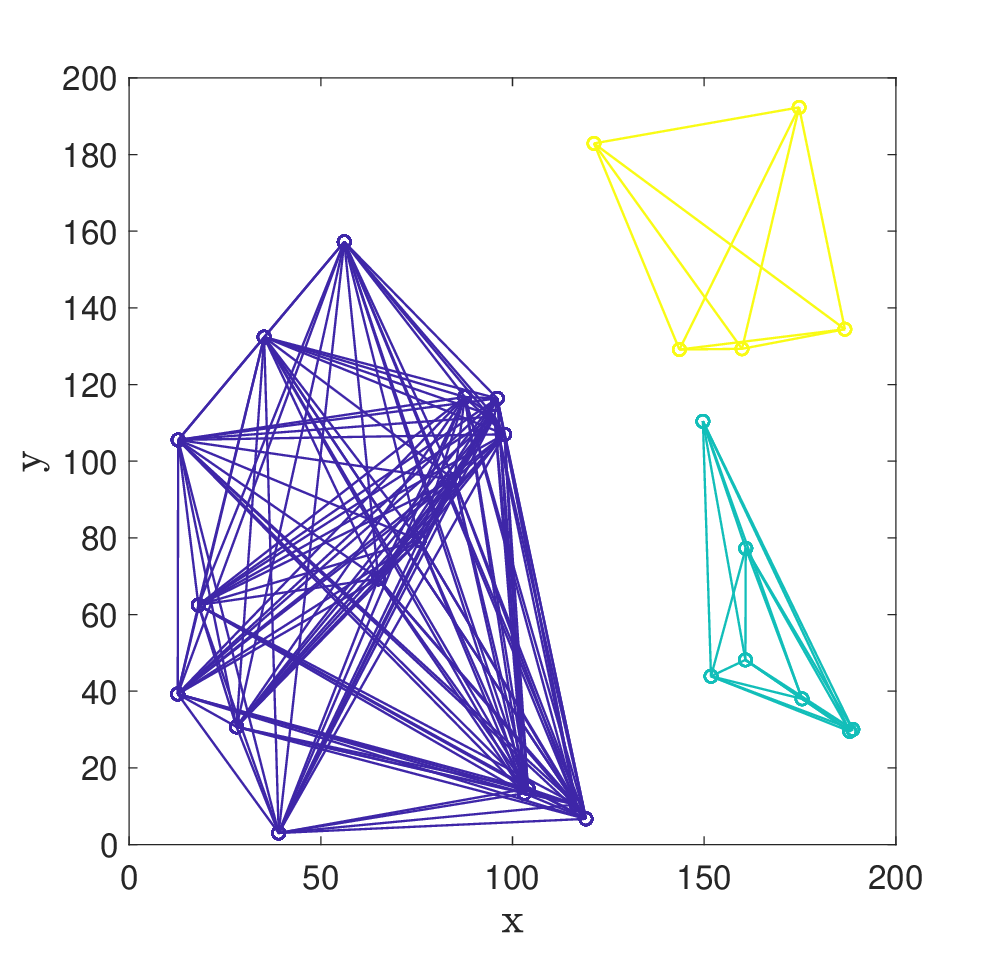}}~\subfigure[100 robots with $r_c = 50$]{\includegraphics[width=0.515\columnwidth]{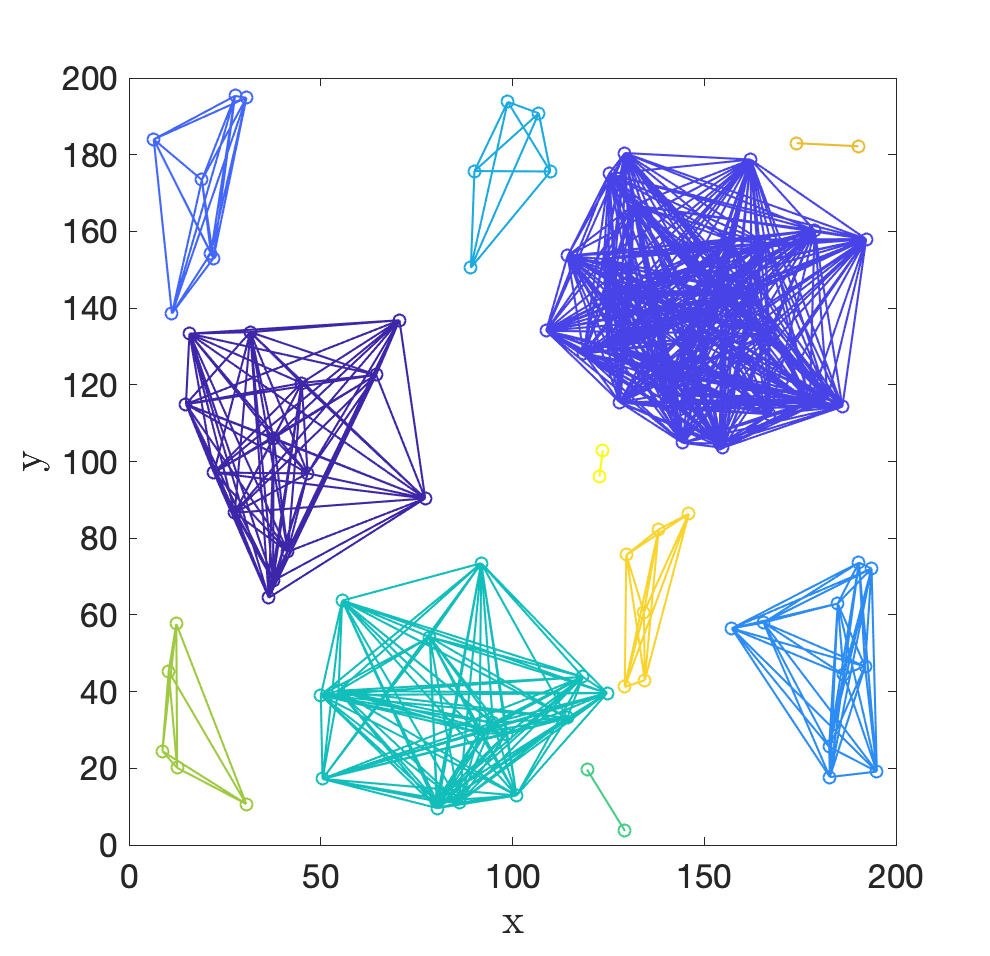}}
\caption{MATLAB evaluation: Examples of clique formulations { (Algorithm~\ref{alg:EDNCCA})} across various numbers of robots and communication ranges $r_c$. 
\label{fig:dis_non_over_cliq}}}
\end{figure*}

\subsubsection{MATLAB evaluation over one step with static targets}

We use MATLAB simulations to evaluate \alg's performance in large-scale scenarios. Specifically, we evaluate \alg's running time and performance for various numbers of robots (from 10 to 100) and communication ranges (resulting from as few as 5 cliques to as many as 30 cliques). We compare all algorithms over a single execution round.

\textbf{Simulation setup.} We consider $N$ mobile robots, and 100 targets. We vary $N$ from 10 to 100. For each $N$, we set the number of attacks equal to $\left \lfloor{N/4}\right \rfloor$, $\left \lfloor{N/2}\right \rfloor$ and $\left \lfloor{3N/4}\right \rfloor$. 

Similarly to the Gazebo simulations, each robot moves on a fixed plane, and has 5 possible trajectories: forward, backward, left, right, and stay. We set $l_t =10$ and $l_o = 3$ for all robots.  We randomly generate the positions of the robots and targets in a 2D space of size $[0, 200] \times [0,200]$.  Particularly, we generate 30 Monte Carlo runs (for each $N$). We assume that the robots have available estimates of targets' positions. 
For each Monte Carlo run, all compared algorithms are executed with the same initialization (same positions of robots and targets). \alg is tested across three communication ranges: $r_c = 30, 60, 90$. For a visualization of $r_c$'s effect on the formed cliques, see Fig.~\ref{fig:dis_non_over_cliq}, where we present two of the generated scenarios. All algorithms are executed for one round in each Monte Carlo run.

\begin{figure*}[th!]
\centering{
\subfigure[$r_c=30$, $\alpha =\left \lfloor{N/2}\right \rfloor$]{\includegraphics[width=0.515\columnwidth]{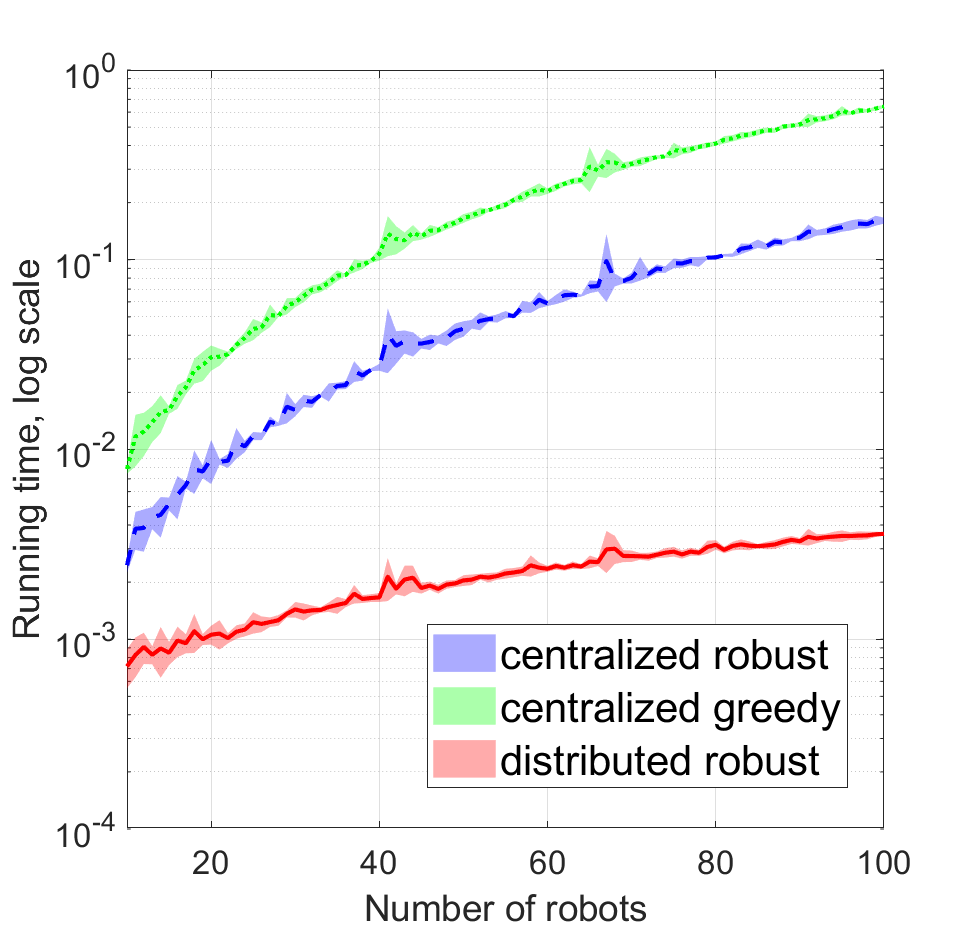}}~\subfigure[$r_c=90$, $\alpha =\left \lfloor{N/2}\right \rfloor$]{\includegraphics[width=0.515\columnwidth]{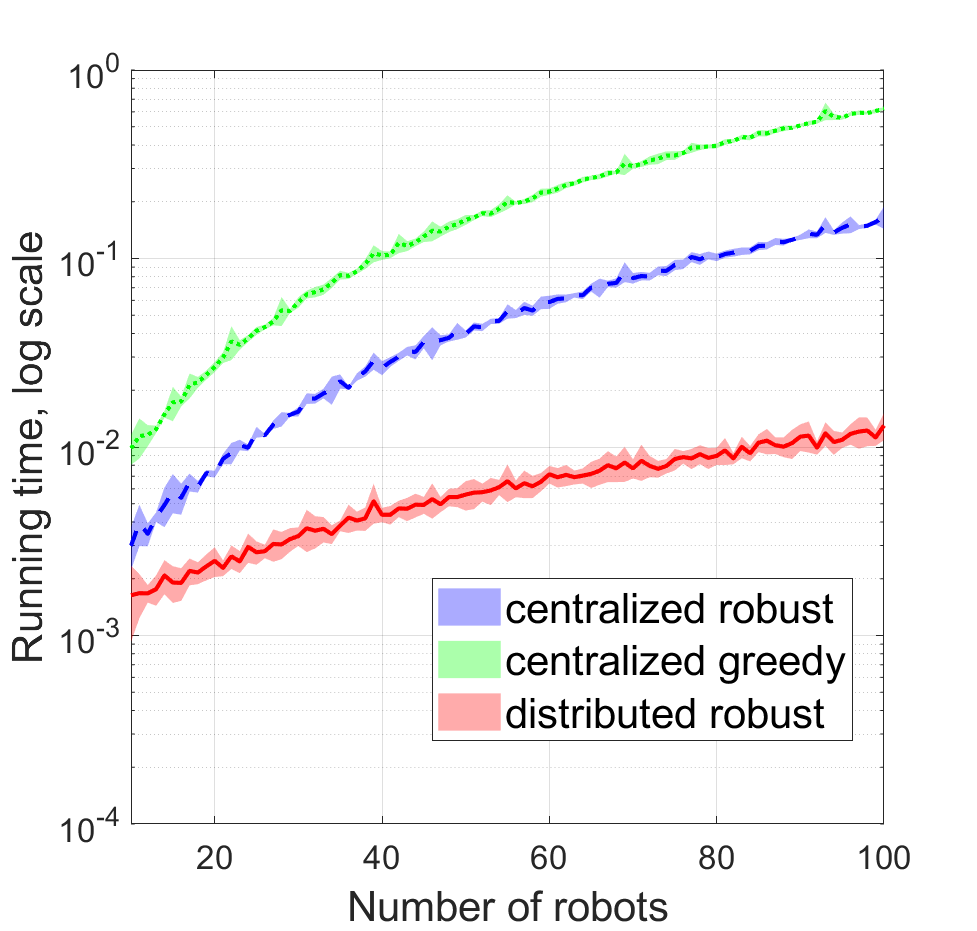}}~\subfigure[$\alpha =\left \lfloor{N/4}\right \rfloor$, $r_c=60$] {\includegraphics[width=0.515\columnwidth]{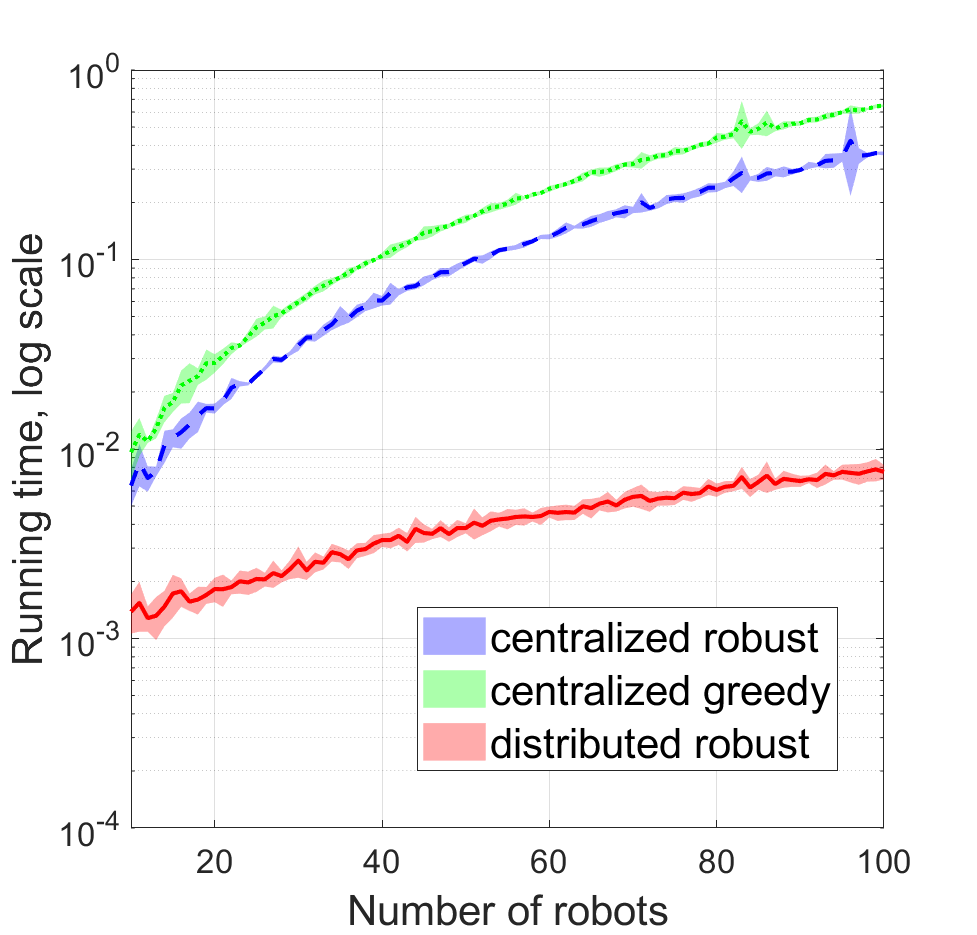}}~\subfigure[$\alpha =\left \lfloor{3N/4}\right \rfloor$, $r_c=60$ ]{\includegraphics[width=0.515\columnwidth]{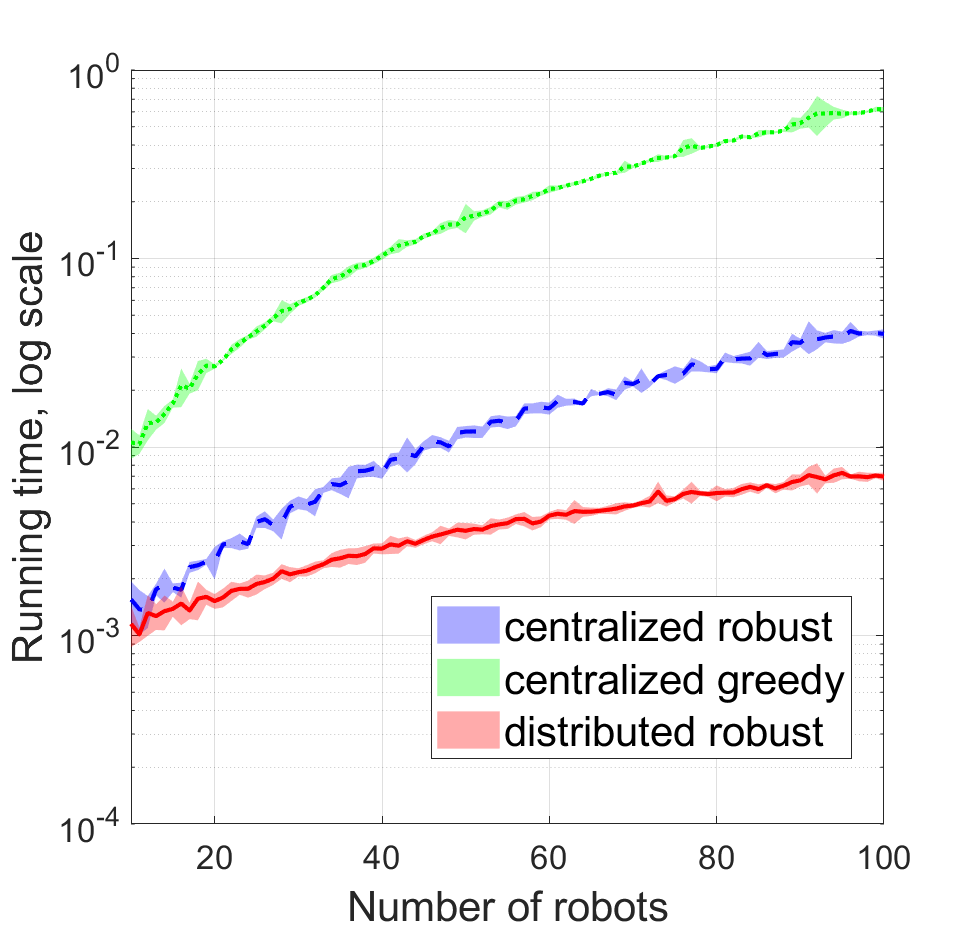}}\\

\subfigure[$r_c=30$, $\alpha =\left \lfloor{N/2}\right \rfloor$]{\includegraphics[width=0.515\columnwidth]{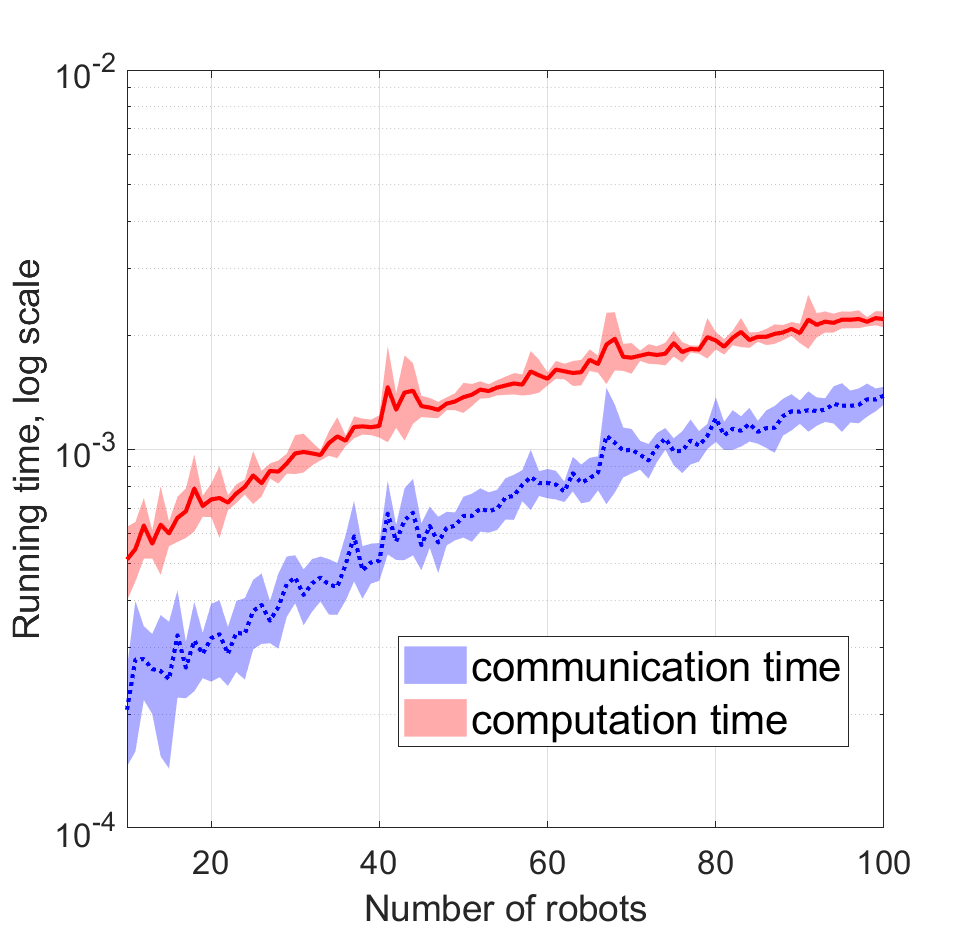}}~\subfigure[$r_c=90$, $\alpha =\left \lfloor{N/2}\right \rfloor$]{\includegraphics[width=0.515\columnwidth]{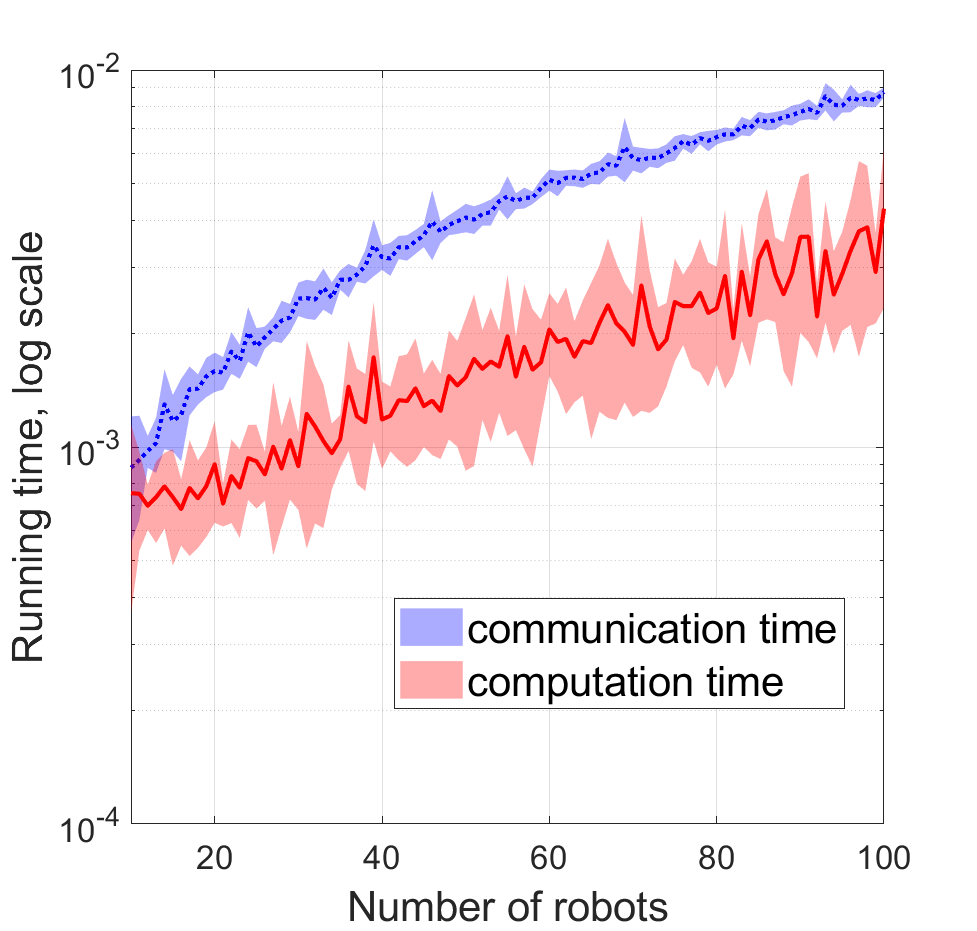}}~\subfigure[$\alpha =\left \lfloor{N/4}\right \rfloor$, $r_c=60$] {\includegraphics[width=0.515\columnwidth]{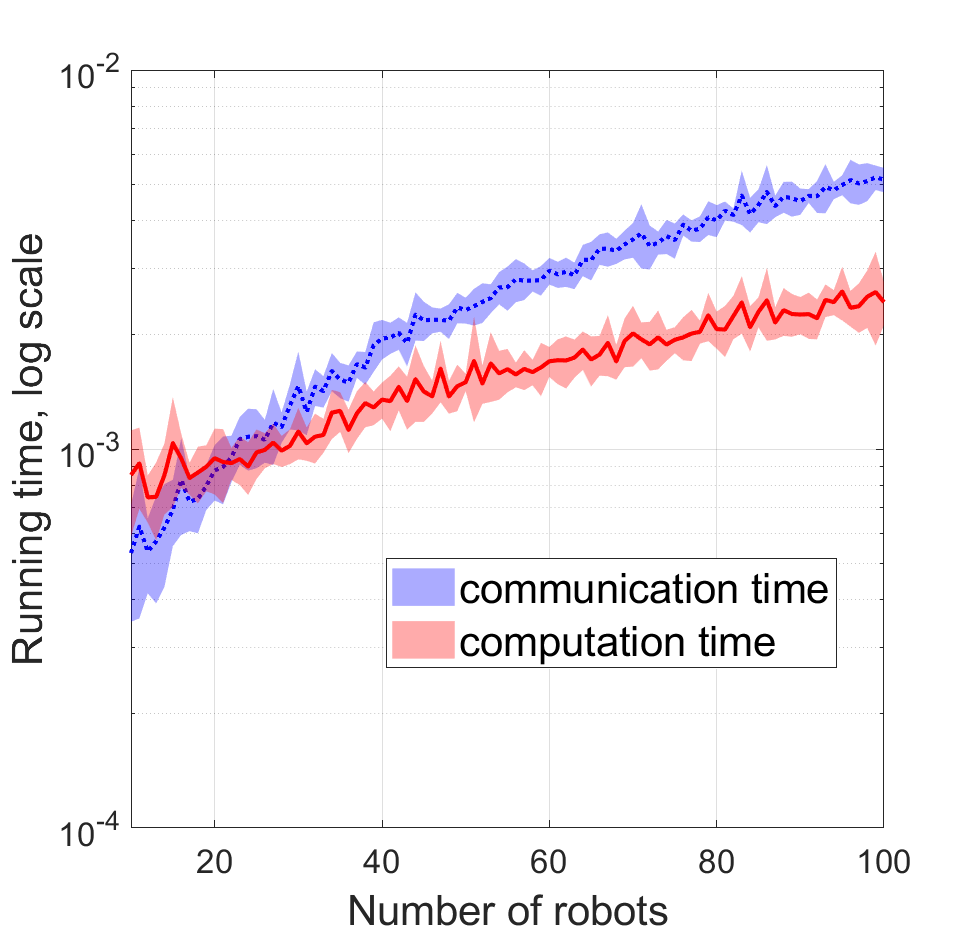}}~\subfigure[$\alpha =\left \lfloor{3N/4}\right \rfloor$, $r_c=60$ ]{\includegraphics[width=0.515\columnwidth]{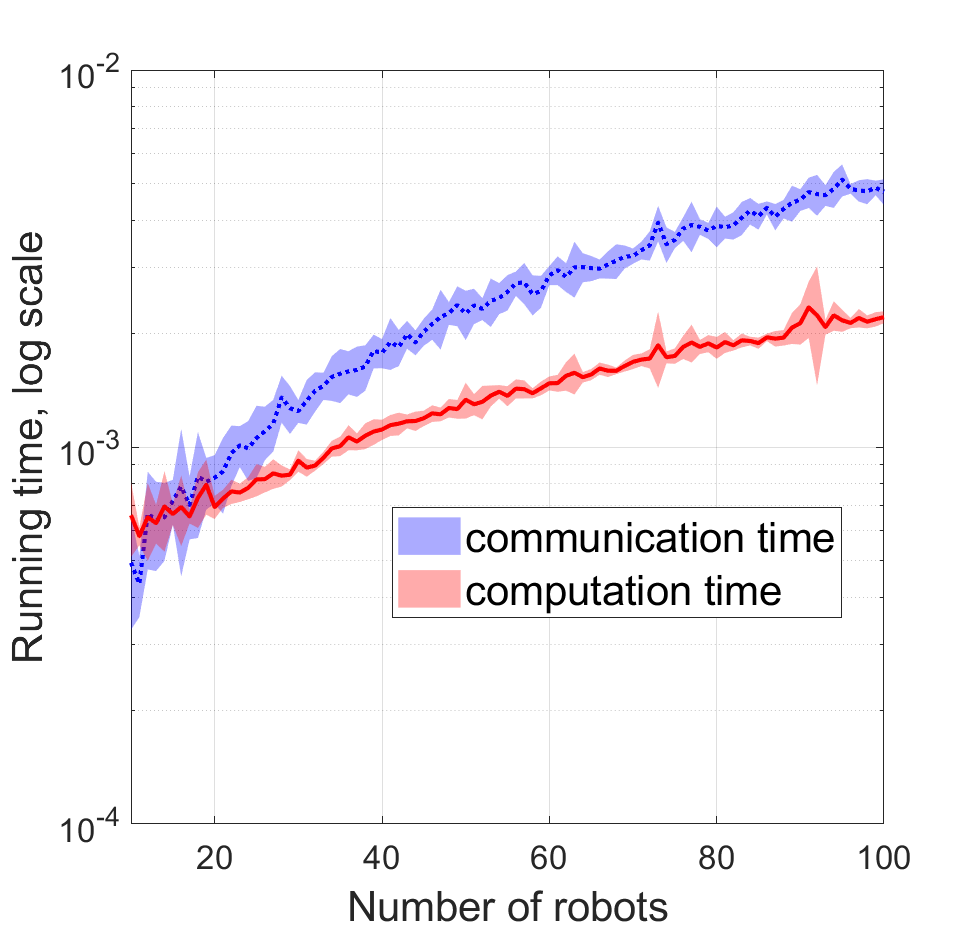}}\\

\subfigure[$r_c=30$, $\alpha =\left \lfloor{N/2}\right \rfloor$]{\includegraphics[width=0.515\columnwidth]{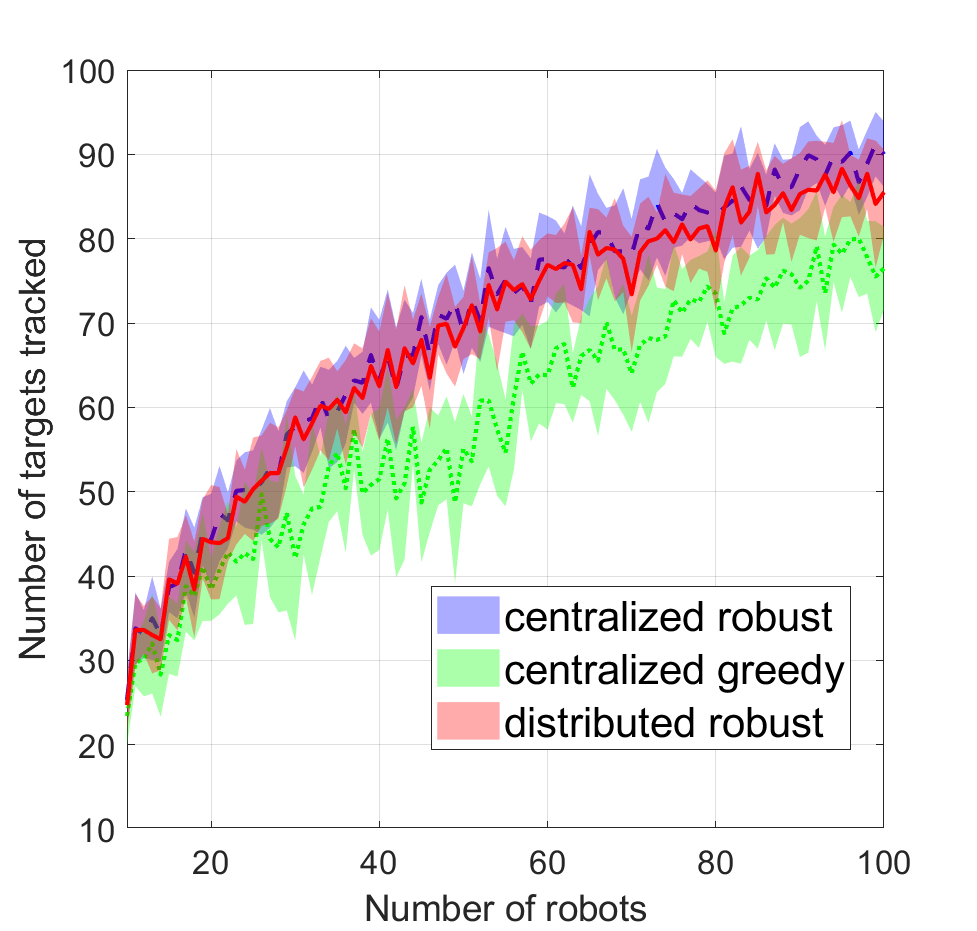}}~\subfigure[$r_c=90$, $\alpha =\left \lfloor{N/2}\right \rfloor$]{\includegraphics[width=0.515\columnwidth]{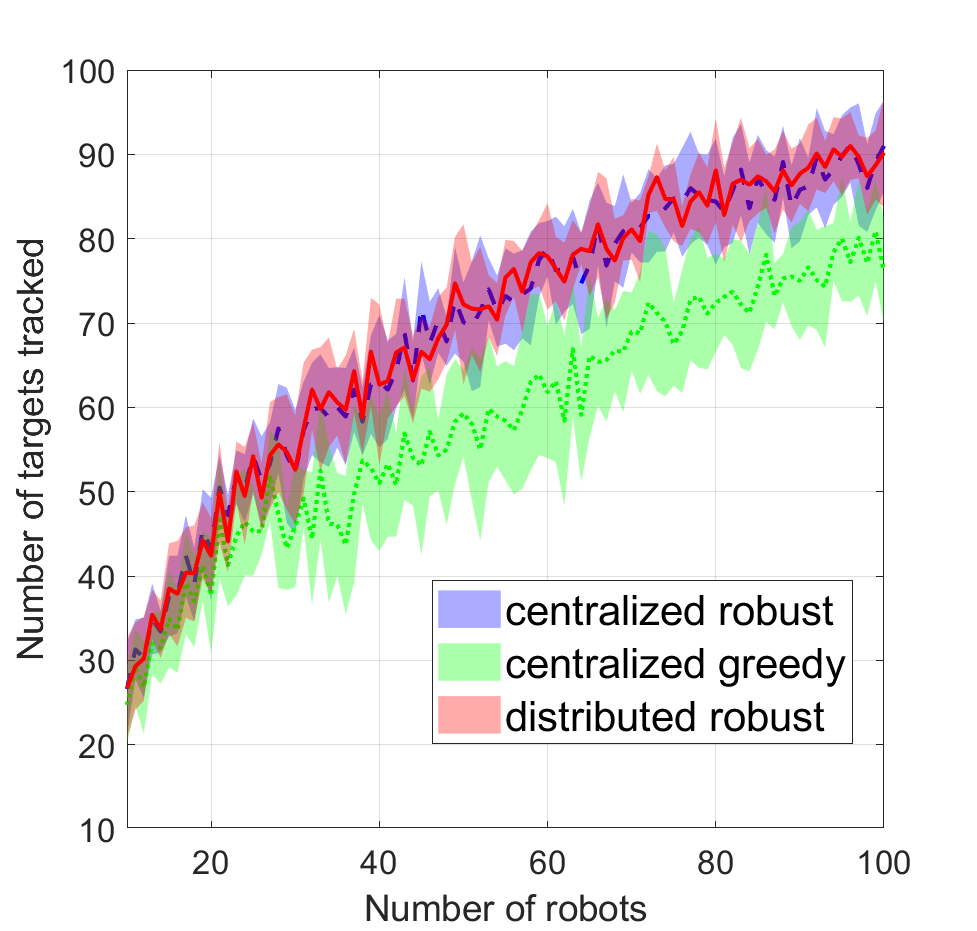}}~\subfigure[$\alpha =\left \lfloor{N/4}\right \rfloor$, $r_c=60$] {\includegraphics[width=0.515\columnwidth]{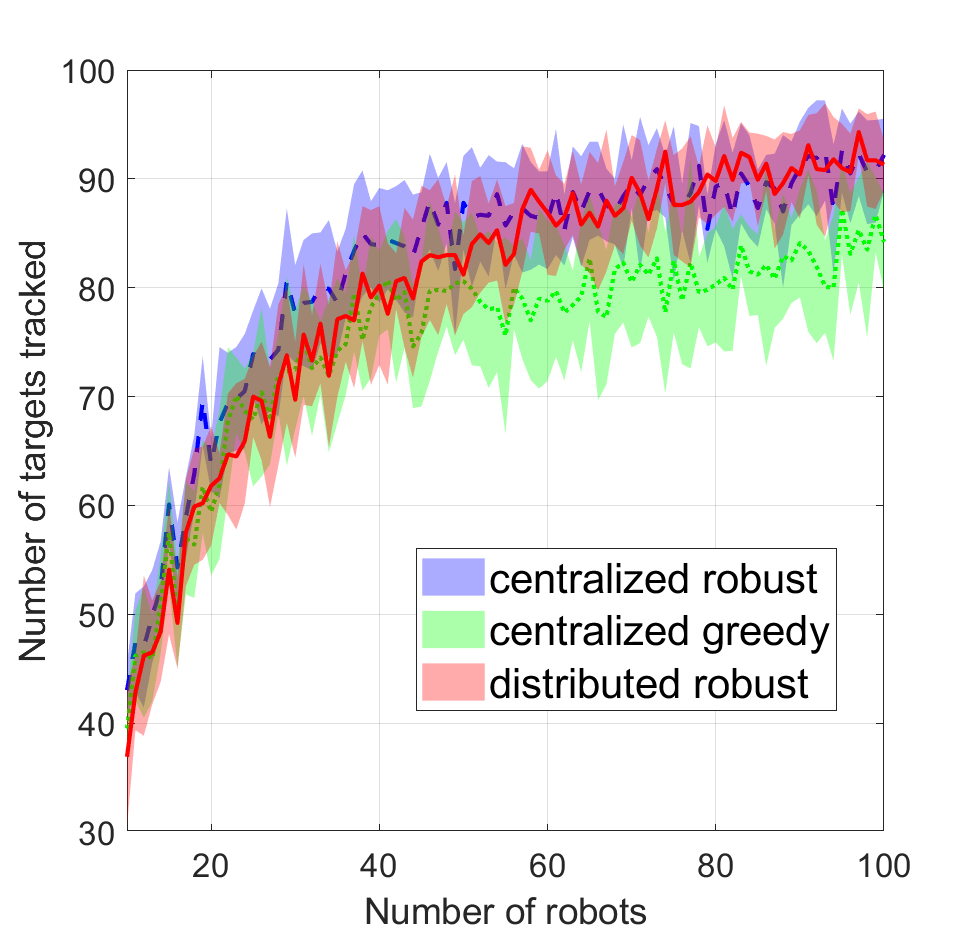}}~\subfigure[$\alpha =\left \lfloor{3N/4}\right \rfloor$, $r_c=60$]{\includegraphics[width=0.515\columnwidth]{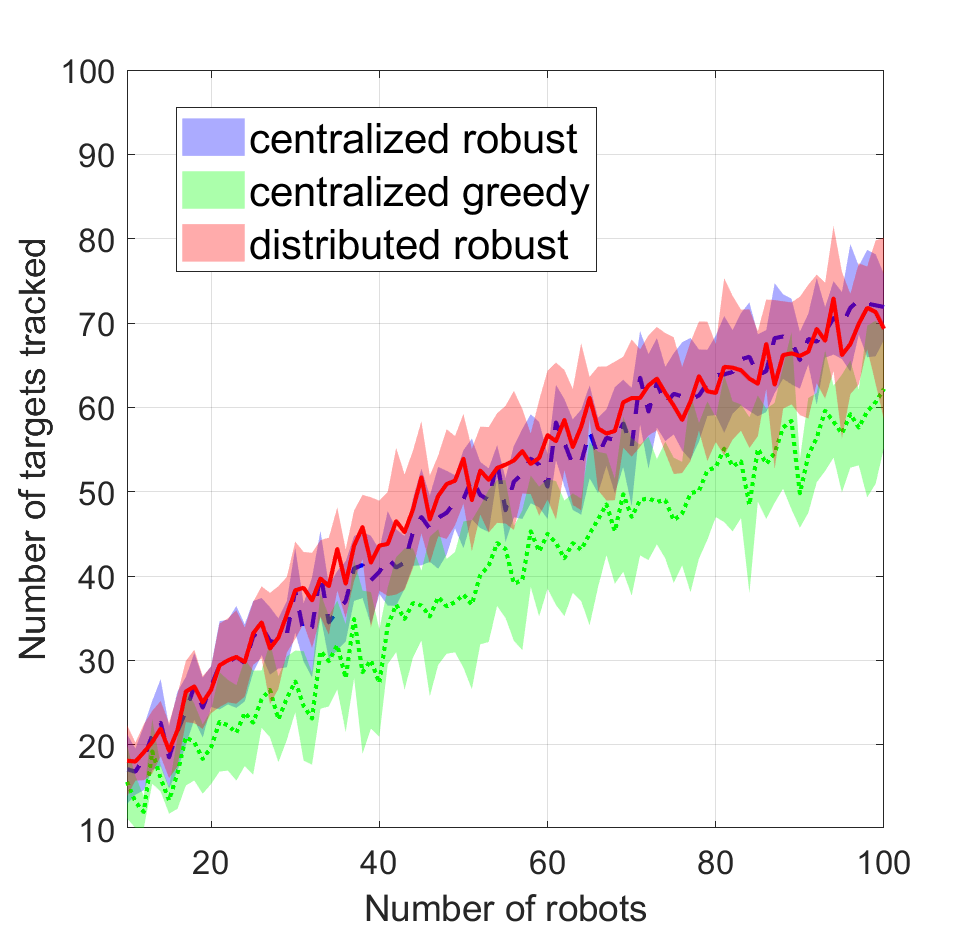}}
\caption{MATLAB evaluations (averaged across 30 Monte Carlo runs): (a)-(d) depict running time results of three algorithms, for various $\alpha$ and $r_c$ values; (e)-(h) depict the corresponding communication time and the computation time of \texttt{DRM} (\texttt{distributed-robust}); and (i)-(l) depict corresponding tracking performance results.
\label{fig:matlab_sim}}}
\end{figure*}

Notably, since we consider large-scale scenarios (up to $N=100$ robots, and up to 75 attacks, when $N=100$, and $\alpha=\lfloor{3N/4}\rfloor$), computing the worst-case attack via a brute-force algorithm is now infeasible. Particularly, given a trajectory assignment $\mathcal{S}$ to all robots, the problem of computing a worst-case attack is equivalent to minimizing a non-increasing submodular function,  an NP-hard problem~\cite{iyer2013fast}. Hence, we consider the attacker to use a greedy heuristic to attack the robots, instead of executing the worst-case attacks.  Proposed greedy approaches can be found in~\cite{iyer2013fast}.{\footnote{Alternative algorithms, along with approximate guarantees, to approximate the worst-case attacks can be found in \cite{topkis1978minimizing,schrijver2000combinatorial,jegelka2011fast}.}}

\textbf{Results.} The results are reported in Fig.~\ref{fig:matlab_sim} and we make the similar qualitative conclusions as in the Gazebo evaluation:

\textit{a) Superior running time:} \alg runs several orders faster than both \crel and \cgr: 1 to 2 orders on average, achieving running time from 0.5msec to 15msec (Figs.~\ref{fig:matlab_sim}-(a-d)). Particularly, in Figs.~\ref{fig:matlab_sim}-(a, b), the algorithms' running time is evaluated with respect to the communication range $r_c$ (with the number of attacks fixed as $\alpha =\left \lfloor{N/2}\right \rfloor$). We observe that \alg's running time increases as $r_c$ increases. This is because, with a larger $r_c$, \texttt{DCP} generates larger cliques and robots need to communicate more messages and takes more time on computation (Theorem~\ref{thm:runtime}), and thus both \texttt{DRM}'s communication time and computation time increase, which is shown in Figs.~\ref{fig:matlab_sim}-(e, f). Particularly, \texttt{DRM} spends more time on computation with a smaller $r_c$  (e.g., $r_c =30$) and spends more time on communication with a larger $r_c$ (e.g., $r_c = 90$).

In Figs.~\ref{fig:matlab_sim}-(c, d), the algorithms' running time is evaluated in terms of the number of attacks $\alpha$ (with the communication range fixed as $r_c = 60$).
We observe \crel  runs faster as $\alpha$ increases, which is due to how \crel works, that causes \crel to become faster as $\alpha$ tends to $N$~\cite{zhou2018resilient}. 
This parallels the observation that \alg's computation time decreases as $\alpha$ increases (Figs.~\ref{fig:matlab_sim}-(g-h)), since \alg's computation time mainly comes from the function evaluations in per clique \crel.

\textit{b) Near-to-centralized tracking performance:} Although \alg runs considerably faster, it retains a tracking performance close to the centralized one (Figs.~\ref{fig:matlab_sim}-(i-l)). 
Unsurprisingly, the attack-agnostic greedy performs worse than all algorithms. Particularly, as shown in Figs.~\ref{fig:matlab_sim}-(i, j) when the communication range $r_c$ increases from $30$ to $90$, \crel's tracking performance improves. This is because, with a larger $r_c$, fewer and larger cliques are generated by $\text{DCP}$, and \alg behaves closer to \crel. Figs.~\ref{fig:matlab_sim}-(k, l) show that all algorithms' tracking performance drops when the number of attacks $\alpha$ increases.

\smallskip

To summarize, in all simulations above, \alg offered significant computational speed-ups, and, yet, still achieved a tracking performance that matched the performance of the centralized, near-optimal algorithm in~\cite{zhou2018resilient}. 

\subsection{Improved multi-robot target tracking}~\label{subsec:improve}
\textbf{Compared algorithms.} We compare \texttt{IDRM} with \texttt{DRM}. The performance of the algorithms is evaluated through Matlab simulations on \textit{active target tracking} scenarios. We compare the algorithms in terms of the total number of attacks inferred, running time, and number of targets covered after $\alpha$ attacks. $\alpha$ is known to both \texttt{IDRM} with \texttt{DRM}. The algorithms are compared over a single execution for 30 trials.

\textbf{Simulation setup.} We consider $N$ mobile robots, and 100 targets. We set $N$ as 20 and 100 for small-scale and large-scale evaluations, respectively. For evaluating the algorithms in the large-scale case (e.g., $N=100$), we approximate the worst-case attack by a greedy attack since computing the worst-case attack requires exponential time. The total number of attacks $\alpha$ is set as 6 when $N=20$, and as 30 when $N=100$. The communication range $r_c$ is set as $r_c=120$ for $N=20$, and $r_c=70$ for $N=100$. The remaining settings are the same as in the Matlab simulation setup of Section~\ref{sec:simulation}-A.  

\begin{figure*}[t]
\centering{
\subfigure[Number of attacks inferred, $\alpha =6$]{\includegraphics[width=0.65\columnwidth]{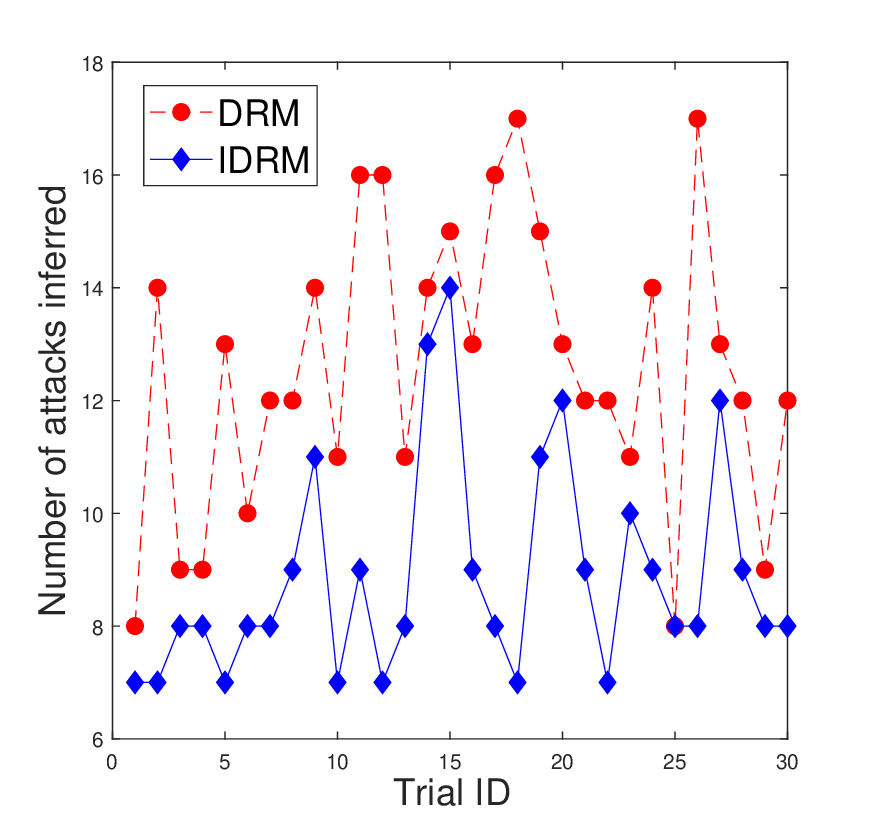}} ~~
\subfigure[Number of targets tracked]{\includegraphics[width=0.65\columnwidth]{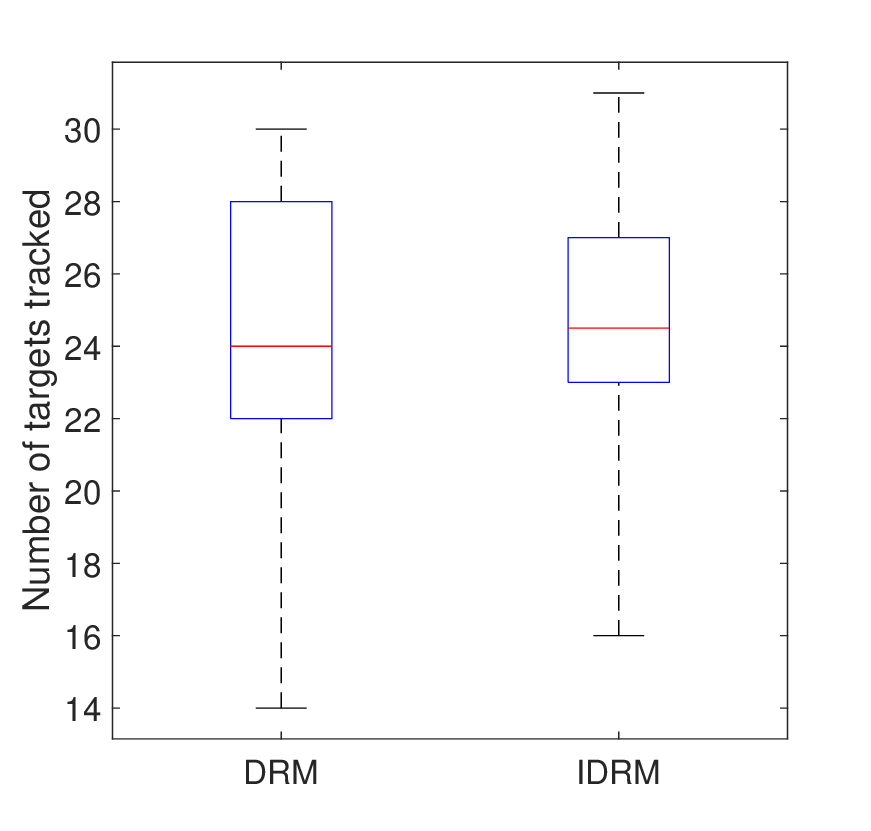}}~~
\subfigure[Running time]{\includegraphics[width=0.65\columnwidth]{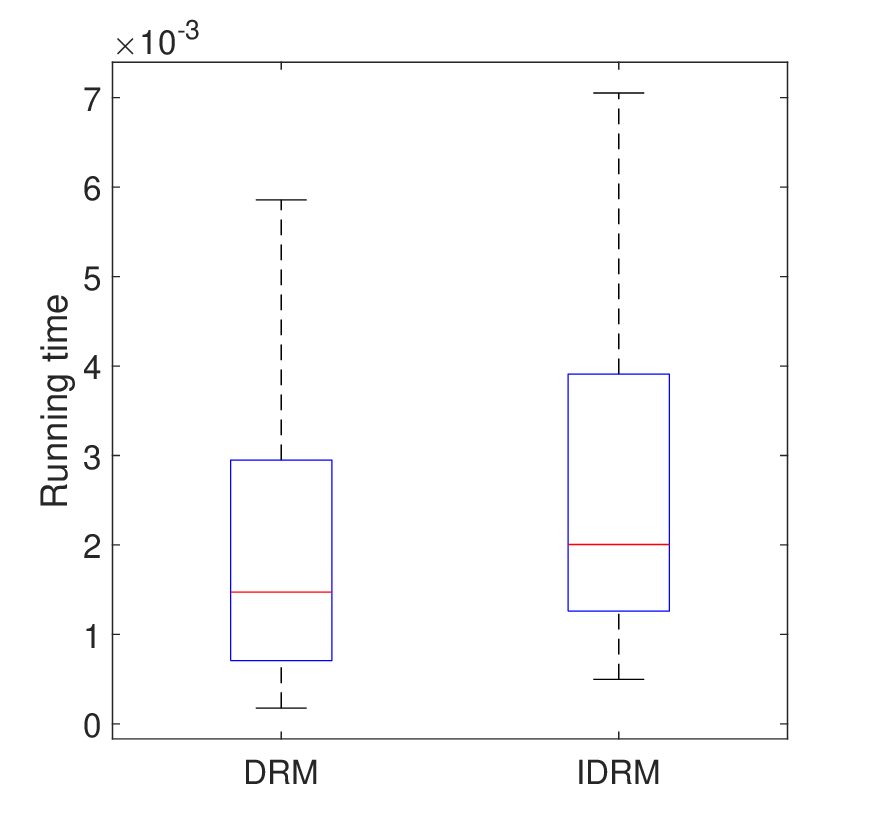}}
\caption{MATLAB evaluations with $N=20$, $\alpha=6$, and $r_c=120$: comparison of number of attacks inferred, number of targets covered, and running time for \texttt{DRM} and \texttt{IDRM} in small-scale case. The number of target tracked of these two algorithms are calculated after applying 6 worst-case attacks. In (b) and (c), the representations of each box's components follow the corresponding explanations in the caption of Fig.~\ref{fig:compare_gazebo}.
\label{fig:comp_drm_idrm_small}}}
\end{figure*}

\begin{figure*}[t]
\centering{
\subfigure[Number of attacks inferred, $\alpha=30$]{\includegraphics[width=0.65\columnwidth]{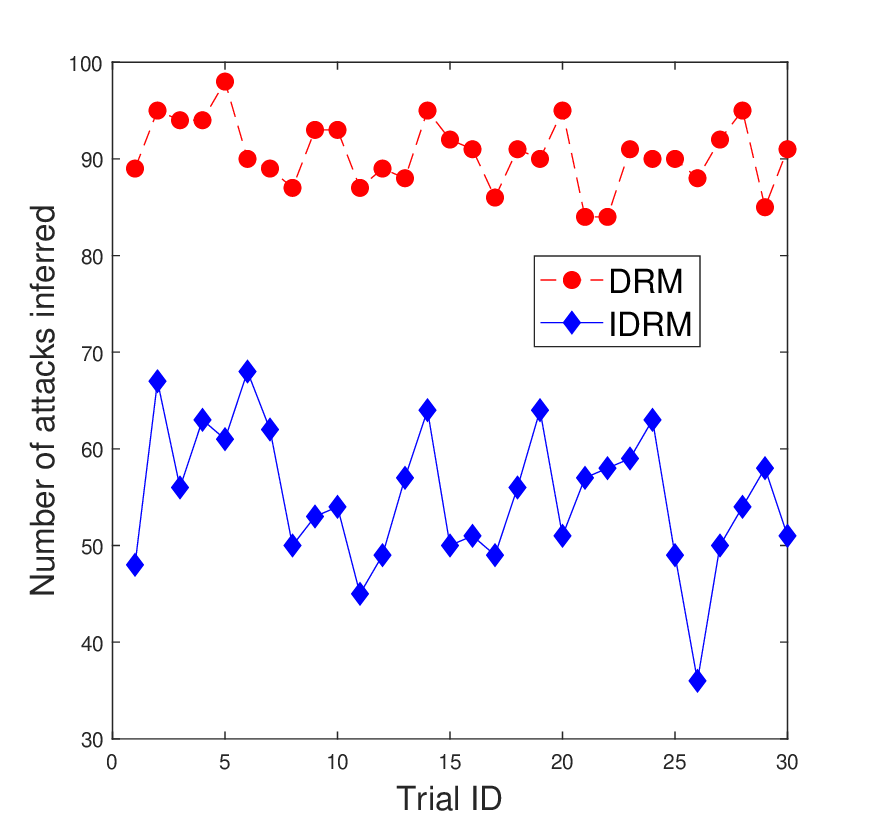}} ~~
\subfigure[Number of targets tracked]{\includegraphics[width=0.65\columnwidth]{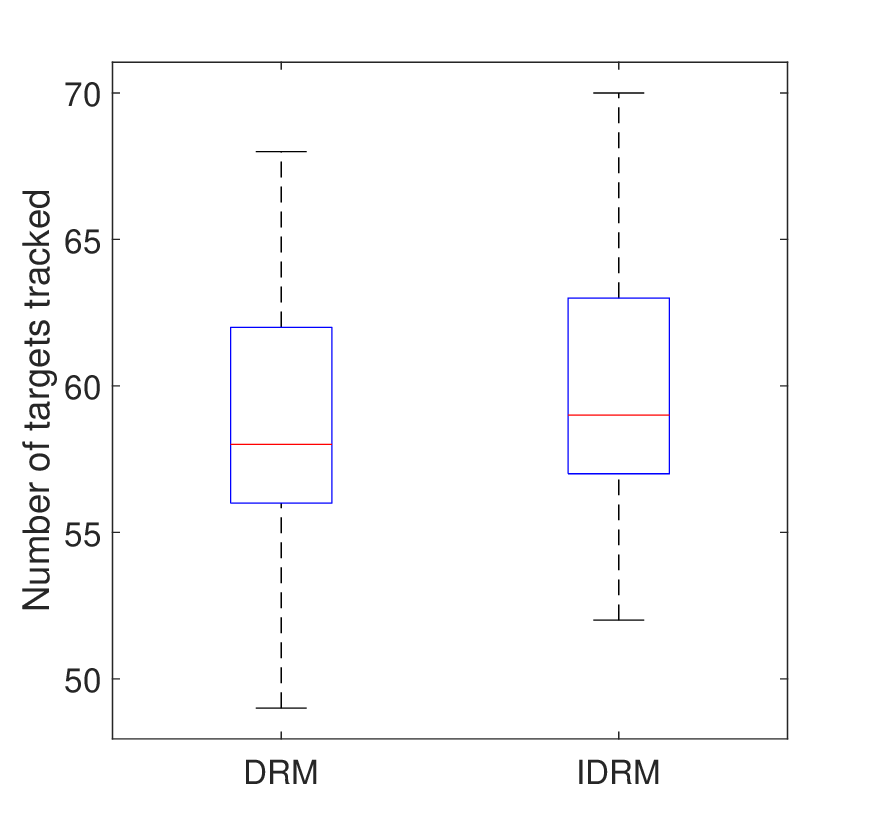}} ~~
\subfigure[Running time]{\includegraphics[width=0.65\columnwidth]{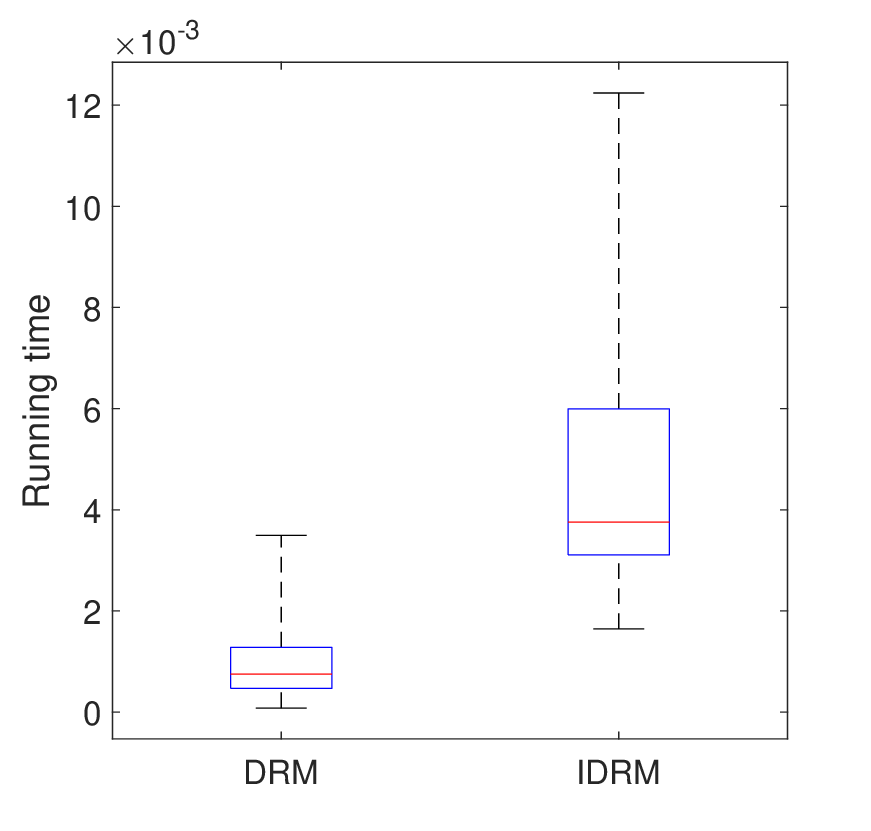}}
\caption{MATLAB evaluations with $N=100$, $\alpha=30$, and $r_c=70$: comparison of number of attacks inferred, number of targets covered, and running time for \texttt{DRM} and \texttt{IDRM} in large-scale case. The number of target tracked of these two algorithms are calculated after applying 30 greedy attacks. In (b) and (c), the representations of each box's components follow the corresponding explanations in the caption of Fig.~\ref{fig:compare_gazebo}.
\label{fig:comp_drm_idrm_large}}}
\end{figure*}

\textbf{Results.} The results are reported in Fig.~\ref{fig:comp_drm_idrm_small} and Fig.~\ref{fig:comp_drm_idrm_large}:

\textit{a) Conservativeness relaxing performance:} Fig.~\ref{fig:comp_drm_idrm_small}-(a) and Fig.~\ref{fig:comp_drm_idrm_large}-(a) show that \texttt{IDRM} relaxes the conservativeness of inferring number of attacks $\alpha$ in both small-scale ($N=20$, $\alpha=6$) and large-scale ($N=100$, $\alpha=30$) cases. Notably, when the communication range is smaller (e.g., $r_c=70$ in Fig.~\ref{fig:comp_drm_idrm_large}-(a)), the inferred number of attacks by \texttt{DRM} is much larger than the real number of attacks ($\alpha =30$). That is because, with a smaller communication range, the graph is likely to be partitioned into more and smaller cliques by Algorithm~\ref{alg:EDNCCA}, which increases the conservativeness of inferring $\alpha$ in \texttt{DRM}. While \texttt{IDRM} gracefully relaxes this conservativeness through 3-hop neighboring communications (Algorithm~\ref{alg:alpha_infer_known}).  Particularly, in some trials of both small-scale and large-scale evaluations, the number of attacks inferred by \texttt{IDRM} is very close to the real number of attacks $\alpha$. 

\textit{b) Superior tracking performance:} Because of the conservativeness relaxing, \texttt{IDRM} tracks more targets than \texttt{DRM} (Fig.~\ref{fig:comp_drm_idrm_small}-(b) and Fig.~\ref{fig:comp_drm_idrm_large}-(b)) since it reduces the unnecessary coverage overlaps induced by conservative estimate of $\alpha$. To further evaluate the tracking performance \texttt{IDRM} and \texttt{DRM}, we run a t-test with the default 5\% significance level on the targets covered by them. The t-test gives the test decision $H=1$ and $p$-value $p=0.025$ for the small-scale case (Fig.~\ref{fig:comp_drm_idrm_small}-(b)) and the test decision $H=1$ and $p$-value $p=0.030$ for the large-scale case (Fig.~\ref{fig:comp_drm_idrm_large}-(b)). This indicates that, in both cases, t-test rejects the null hypothesis that the means of the number of targets covered by \texttt{IDRM} and \texttt{DRM} are equal to each other at the 5\% significance level. Therefore, the difference between the means of the number of targets covered by them is statistically significant. 

\textit{c) Comparative running time:} Fig.~\ref{fig:comp_drm_idrm_small}-(c) and Fig.~\ref{fig:comp_drm_idrm_large}-(c) show that both \texttt{DRM} and \texttt{IDRM} run very fast (e.g., averaged running time is less than 0.005 seconds). That is because, in \texttt{IDRM}, after robots share the number of targets covered by their best actions and infer a less conservative $\alpha_k$ (Algorithm~\ref{alg:alpha_infer_known}), all cliques run \texttt{central-robust} in parallel as well. 

\section{Conclusion}\label{sec:conclusion}

We worked towards securing swarm-robotics applications against worst-case attacks resulting in robot withdrawals. Particularly, we proposed \alg, a distributed robust submodular optimization algorithm. \alg is general-purpose: it applies to any Problem~\ref{pro:dis_resi_sub}'s instance. We proved \alg runs faster than its centralized counterpart, without compromising approximation performance.  We demonstrated both its running time and  near-optimality in Gazebo and MATLAB simulations of {active target tracking}. {However, in \texttt{DRM}, each clique assumes the number of attacks $\alpha_k$ to be the total number of attacks $\alpha$, which can be too conservative if the robots are partitioned into many small-size cliques. To relax this conservativeness, we leveraged the 3-hop neighboring communications to present an improved version of \texttt{DRM}, called \texttt{IDRM}. We showed that \texttt{IDRM} improves the target-tracking performance of \texttt{DRM} with comparative running time. }

Note that, with \texttt{DCP} (Algorithm~\ref{alg:EDNCCA}), the clique partition solely depends on the positions of robots and the communication range. If robots have a very long communication range, they are likely to be all part of a single clique and then \texttt{DRM} would be the same as \crel. One way to address this issue is to intelligently manage the number of cliques to control the trade-off between complexity and optimality. Hence, one future research direction is to design such clique partition approaches that can be complementary to \texttt{DCP}. Another potential improvement on \texttt{DCP} is to relax its 3-hop communication constraints.  One way is to employ an anytime communication protocol that is tolerant to the failures of communication channels. Another direction is to enable asynchronous communication and achieve tolerance to communication delays.

 
 

A second future direction is to secure the team performance when the number of worst-case attacks is unknown. One heuristic approach to infer the number of attacks $\alpha_k$ distributively for cliques without knowing $\alpha$ is presented in the arXiv version \cite{zhou2019distributedatt} (cf. Algorithm 5 in Appendix-D of \cite{zhou2019distributedatt}). 
However, this algorithm is a heuristic approach that aims to secure the \textit{expected} worst-case performance and thus may not have guarantees against a specific number of worst-case attacks. Therefore, our future work is focused on designing algorithms that have approximation guarantees against the unknown number of attacks. A third future avenue is to investigate other attack or failure models, e.g., random failures~\cite{park2018robust,zhou2018approximation}, and design corresponding distributed robust algorithms.  

\section*{Acknowledgments}

We thank Micah Corah from the Carnegie Mellon University for pointing out that the myopic Algorithm~\ref{alg:MM}, stated in Appendix~C, achieves the performance bound $1-\nu_f$ (Theorem~\ref{thm:mm} in Appendix~C).  The observation led to Remark~\ref{rem:myopic}.

\section*{Appendix}

\subsection{Proof of Theorem~\ref{thm:runtime}}
\begin{proof}
\alg's running time is equal to \texttt{DCP}'s plus the time for all cliques to execute \crel in parallel.

    Particularly, the running time of \texttt{DCP} contains the time of three-round communications---finding neighbors (Algorithm~\ref{alg:EDNCCA}, line~\ref{line:encc_find_nei}), exchanging neighbor sets (Algorithm~\ref{alg:EDNCCA}, line~\ref{line:encc_share_rec_nei}) and exchanging computed cliques (Algorithm~\ref{alg:EDNCCA}, line ~\ref{line:encc_share_rec_unique}) and the time of neighbor set intersection (Algorithm~\ref{alg:EDNCCA}, line~\ref{line:encc_largest}). Since robots perform in parallel by \texttt{DCP}, \texttt{DCP}'s running time depends on the robot that spends the highest time on three-round communications and neighbor set intersection. Thus, \texttt{DCP} takes $O(1) (t_{\texttt{DCP}}^c + t_{\texttt{DCP}}^s)$ time. Specifically, in each communication round, each robot $i$ sends its message to and receives the messages from its neighbors, and thus the number of messages exchanged by each robot $i$ is $|\mathcal{N}_i|$. Hence, over three-round communications, each robot exchanges $3|\mathcal{N}_i|$ messages. During the neighbor set intersection, each robot $i$ intersects its neighbor set with those of its neighbors, and thus the number of set intersections (operations) for each robot $i$ is $|\mathcal{N}_i|$.

Next, all cliques perform \crel~\cite{zhou2018resilient} in parallel. In each clique $\mathcal{C}_k$, each robot $i$ first communicates and exchanges information collected (e.g., the subsets of targets covered by its candidate actions) with all the other robots. Hence, the number of messages exchanged by each robot $i$ in $\mathcal{C}_k$ is $|\mathcal{C}_k|-1$. Notably, since robots can communicate with each other in $\mathcal{C}_k$, the communication only takes one round. 

After information exchange, the robots in $\mathcal{C}_k$ perform \crel that executes sequentially two steps---the bait assignment and the greedy assignment. The bait assignment sorts out the best $\min(\alpha, |\mathcal{C}_k|)$ single actions from $|\mathcal{C}_k|$ robots. We denote the joint candidate action set in $\mathcal{C}_k$ as  $\mathcal{X}_{\mathcal{C}_k} \triangleq \bigcup_{i\in \mathcal{C}_k} \mathcal{X}_i$. Then, the sorting needs $O(|\mathcal{X}_{\mathcal{C}_k}| \log (|\mathcal{X}_{\mathcal{C}_k}|)$ evaluations of objective function $f$ and thus takes $O(|\mathcal{X}_{\mathcal{C}_k}| \log (|\mathcal{X}_{\mathcal{C}_k}|)) t^f$ time. Then, the greedy assignment uses the standard greedy algorithm~\cite{fisher1978analysis} to choose actions for the remaining robots, which needs $O(|\mathcal{X}_{\mathcal{C}_k}|^2)$ evaluations of $f$ and thus takes $O(|\mathcal{X}_{\mathcal{C}_k}|^2) t^f$ time. Hence, the number of function evaluations (operations) is $O(|\mathcal{X}_{\mathcal{C}_k}| \log (|\mathcal{X}_{\mathcal{C}_k}|)$ + $O(|\mathcal{X}_{\mathcal{C}_k}|^2)$ = $O(|\mathcal{X}_{\mathcal{C}_k}|^2)$. 

Since all cliques perform \crel in parallel, the running time depends on the clique that spends the highest time, i.e., clique $\mathcal{M}$. $\mathcal{M}$'s running time contains the time of exchanging information collected, i.e., $O(1) t_{\texttt{CRO}}^c$, and of evaluating function $f$, i.e., $O(|\mathcal{X}_{\mathcal{C}_k}| \log (|\mathcal{X}_{\mathcal{C}_k}|) t^f + O(|\mathcal{X}_{\mathcal{C}_k}|^2) t^f$ = $O(|\mathcal{X}_{\mathcal{C}_k}|^2) t^f$. Thus, in total, all cliques performing \crel in parallel takes $O(1) t_{\texttt{CRO}}^c + O(|\mathcal{X}_{\mathcal{C}_k}|^2) t^f$ time.

All in all, \texttt{DRM} takes $O(1) (t_{\texttt{DCP}}^c + t_{\texttt{DCP}}^s)$ + $O(1) t_{\texttt{CRO}}^c$ + $O(|\mathcal{X}_\mathcal{M}|^2) t^f$ time. In addition, each robot has four-round communications, including three rounds in \texttt{DCP} and one round in per clique \crel, and exchanges $3|\mathcal{N}_i|+ |\mathcal{C}_k|-1$ messages with $i\in \mathcal{C}_k$. Moreover, \texttt{DRM} performs $O(|\mathcal{N}_i|)$ operations for set intersections of each robot $i$ in \texttt{DCP} and $O(|\mathcal{X}_{\mathcal{C}_k}|^2)$ evaluations of objective function $f$ in \crel for each clique $\mathcal{C}_k$.    

\label{proof:runtime}
\end{proof}

\subsection{Proof of Theorem~\ref{thm:DRA}}

\begin{proof}
{We prove Theorem~\ref{thm:DRA}, i.e., \texttt{DRM}'s approximation bound, by following the steps of \cite[Proof of Theorem 1]{tzoumas2018resilient}.}

We introduce the notation:
$\mathcal{S}^{\star}$ denotes an optimal solution to Problem~\ref{pro:dis_resi_sub}. Given an action assignment $\mathcal{S}$ to all robots in $\mathcal{R}$, and a subset of robots $\mathcal{R}'$, we denote by $\mathcal{S}(\mathcal{R}')$ the actions of the robots in $\mathcal{R}'$ (i.e., the restriction of $\mathcal{S}'$ to $\mathcal{R}'$).  And vise versa: given an action assignment $\mathcal{S}'$ to a subset $\mathcal{R}'$ of robots, we let $\mathcal{R}(\mathcal{S})$ denote this subset (i.e., $\mathcal{R}(\mathcal{S}')=\mathcal{R}'$). Additionally, we let $\mathcal{S}_k\triangleq \mathcal{S}(\mathcal{C}_k)$; that is, $\mathcal{S}_k$ is the restriction of $\mathcal{S}$ to the clique $\mathcal{C}_k$ selected by \alg's line~1 ($k\in\{1,\ldots, K\}$); evidently, $\mathcal{S} = \bigcup_{k=1}^{K}\mathcal{S}_k$. Moreover, we let $\mathcal{S}_k^{b}$ correspond to bait actions chosen by \crel in $\mathcal{C}_k$, and $\mathcal{S}_k^{g}$ denote the greedy actions for the remaining robots in $\mathcal{C}_k$; that is, $\mathcal{S}_k = \mathcal{S}_k^{b} \cup \mathcal{S}_k^{g}$. If $\alpha \geq |\mathcal{C}_k|$, then $\mathcal{S}_k^{g} = \emptyset$. Henceforth, we let $\mathcal{S}$ be the action assignment given by \alg to all robots in $\mathcal{R}$.  Also, we let $\mathcal{W}$ be remaining robots after the attack $\mathcal{A}^\star(\mathcal{S})$; i.e., $\mathcal{W} \triangleq\mathcal{R}\setminus \mathcal{R}(\mathcal{A}^\star(\mathcal{S}))$. Further, we let $\mathcal{W}_k\triangleq \mathcal{W}\cap \mathcal{C}_k$ be remaining robots in $\mathcal{C}_k$, $\mathcal{W}_k^b\triangleq \mathcal{W}_k\cap \mathcal{R}(\mathcal{S}_k^b)$ be remaining robots with bait actions in $\mathcal{C}_k$, and  $\mathcal{W}_k^g\triangleq \mathcal{W}_k\cap \mathcal{R}(\mathcal{S}_k^g)$ be remaining robots with greedy actions in $\mathcal{C}_k$. Then we have $\mathcal{S}_k^{b}\setminus \mathcal{W}_k^b$ and $\mathcal{S}_k^{g}\setminus \mathcal{W}_k^g$ to denote the attacked robots with bait actions and with greedy actions in $\mathcal{C}_k$, respectively. Then, we have the number of attacks in $\mathcal{C}_k$ as $|\mathcal{S}_k^{b}\setminus \mathcal{W}_k^b| + |\mathcal{S}_k^{g}\setminus \mathcal{W}_k^g|$ and 
\begin{equation}
    |\mathcal{S}_k^{b}\setminus \mathcal{W}_k^b| + |\mathcal{S}_k^{g}\setminus \mathcal{W}_k^g| \leq \alpha,
    \label{eqn:gre_alpha}
\end{equation}
since the total number of attacks for the robot team is $\alpha$. Also, we have 
\begin{equation}
   |\mathcal{S}_k^{b}\setminus \mathcal{W}_k^b| + |\mathcal{W}_k^b|  = \alpha,
    \label{eqn:eq_alpha}
\end{equation}
since the number of robots with bait actions in $\mathcal{C}_k$ is $\alpha$. With Eqs.~\ref{eqn:gre_alpha} and \ref{eqn:eq_alpha}, we have
\begin{equation}
   |\mathcal{W}_k^b| \geq |\mathcal{S}_k^{g}\setminus \mathcal{W}_k^g|.
    \label{eqn:gre_remo}
\end{equation}
With Eq.~\ref{eqn:gre_remo}, we let $\mathcal{W}_k^{b'}$ denote the remaining robots in $\mathcal{W}_k^{b}$ after removing from it any subset of robots with cardinality $|\mathcal{R}(\mathcal{S}^g_k)\setminus \mathcal{W}^g_k|$. Then $|\mathcal{W}_k^{b} \setminus \mathcal{W}_k^{b'}| = |\mathcal{R}(\mathcal{S}^g_k)\setminus \mathcal{W}^g_k|$ and robots in $\mathcal{W}_k^{b} \setminus \mathcal{W}_k^{b'}$ have bait actions and robots in $\mathcal{R}(\mathcal{S}^g_k)\setminus \mathcal{W}^g_k$ have greedy actions. Then, $\mathcal{W}_k^{b} \setminus \mathcal{W}_k^{b'}$ can be used to compensate for the attacked robots $\mathcal{R}(\mathcal{S}^g_k)\setminus \mathcal{W}^g_k$ in $\mathcal{C}_k$. 

Now the proof follows from the steps:
\begin{align}
&f(\mathcal{S}^{}\setminus \mathcal{A}^{\star}(\mathcal{S})) \geq (1-\nu_{f}) \sum_{r\in \mathcal{W}}f(\mathcal{S}(r))
\label{ineq:thm1_curv_ratio1}\\
& = (1-\nu_{f}) \sum_{k=1}^{K} \sum_{r\in \mathcal{W}_k}f(\mathcal{S}(r)) \label{ineq:thm1_curv_ratio21}\\
& = (1-\nu_{f}) \sum_{k=1}^{K}\left[ \sum_{r\in \mathcal{W}_k^b}f(\mathcal{S}(r)) +  \sum_{r\in \mathcal{W}_k^g}f(\mathcal{S}(r))\right] \label{ineq:thm1_curv_ratio22}\\
& = (1-\nu_{f}) \sum_{k=1}^{K}\left[ \sum_{r\in \mathcal{W}_k^{b'}}f(\mathcal{S}(r))+ \right.\nonumber\\
&\hspace*{1cm} \left.\sum_{r\in {\mathcal{W}_k^{b} \setminus \mathcal{W}_k^{b'}}} f(\mathcal{S}(r)) + \sum_{r\in \mathcal{W}_k^g}f(\mathcal{S}(r))\right]
\label{ineq:thm1_curv_ratio231}\\
& \geq (1-\nu_{f}) \sum_{k=1}^{K}\left[ \sum_{r\in \mathcal{W}_k^{b'}}f(\mathcal{S}(r))+ \right.\nonumber\\
&\hspace*{1cm} \left.\sum_{r\in {\mathcal{R}(\mathcal{S}^g_k)\setminus\mathcal{W}_k^g}} f(\mathcal{S}(r)) + \sum_{r\in \mathcal{W}_k^g}f(\mathcal{S}(r))\right]
\label{ineq:thm1_curv_ratio23}\\
& =(1-\nu_{f}) \sum_{k=1}^{K}\left[\sum_{r\in \mathcal{W}_k^{b'}}f(\mathcal{S}(r)) +\sum_{r\in {\mathcal{R}(\mathcal{S}^g_k)}} f(\mathcal{S}(r))\right]
\label{ineq:thm1_curv_ratio24}\\
& \geq (1-\nu_{f}) \sum_{k=1}^{K}\left[\sum_{r\in \mathcal{W}_k^{b'}}f(\mathcal{S}(r)) + f(\mathcal{S}^g_k)\right]
\label{ineq:thm1_curv_ratio259}\\
& \geq (1-\nu_{f}) \sum_{k=1}^{K}\left[\sum_{r\in \mathcal{W}_k^{b'}}f(\mathcal{S}^\star(r)) + \frac{1}{2}f(\mathcal{S}^\star(\mathcal{R}(\mathcal{S}_k^g)))\right]
\label{ineq:thm1_curv_ratio26}\\
& \geq \frac{1-\nu_{f}}{2} \sum_{k=1}^{K} f(\mathcal{S}^{\star}(\mathcal{W}_k)) \label{ineq:thm1_curv_ratio25} \\
& \geq \frac{1-\nu_{f}}{2} f(\mathcal{S}^{\star}(\mathcal{W})) \label{ineq:thm1_curv_ratio277776}\\
&\geq \frac{1-\nu_{f}}{2} f(\mathcal{S}^{\star}\setminus \mathcal{A}^{\star}(\mathcal{S}^{\star})).~\label{ineq:thm1_curv_ratio5}
\end{align}

Ineq.~\eqref{ineq:thm1_curv_ratio1} follows from the definition of $\nu_f$ (see~\cite[Proof of Theorem 1]{tzoumas2018resilient}). Eqs.~\eqref{ineq:thm1_curv_ratio21} and \eqref{ineq:thm1_curv_ratio22} follow from the notation we introduced above. Eq.~\eqref{ineq:thm1_curv_ratio231} holds since $\mathcal{W}_k^{b} = \mathcal{W}_k^{b'} \cup (\mathcal{W}_k^{b} \setminus \mathcal{W}_k^{b'})$. Ineq.~\eqref{ineq:thm1_curv_ratio23} holds since a) $|\mathcal{W}_k^{b} \setminus \mathcal{W}_k^{b'}| = |\mathcal{R}(\mathcal{S}^g_k)\setminus \mathcal{W}^g_k|$; b) robots in $\mathcal{W}_k^{b} \setminus \mathcal{W}_k^{b'}$ have bait actions and robots in $\mathcal{R}(\mathcal{S}^g_k)\setminus \mathcal{W}^g_k$ have greedy actions, such that for any $r \in \mathcal{W}_k^{b} \setminus \mathcal{W}_k^{b'}$ and $r'\in \mathcal{R}(\mathcal{S}^g_k)\setminus \mathcal{W}^g_k$, we have $f(\mathcal{S}(r)) \geq f(\mathcal{S}(r'))$.
Eq.~\eqref{ineq:thm1_curv_ratio24} holds from the notation. 
Ineq.~\eqref{ineq:thm1_curv_ratio259} holds by the submodularity of $f$, which implies $f(\mathcal{A})+f(\mathcal{B})\geq f(\mathcal{A} \cup \mathcal{B})$ for any sets $\mathcal{A},\mathcal{B}$~\cite{nemhauser1978analysis}. 
Ineq.~\eqref{ineq:thm1_curv_ratio26} holds since a) with respect to the left term in the sum, the robots in the sum correspond to robots whose actions are baits; and b) with respect to the right term in the sum, the greedy algorithm that has assigned the actions $\mathcal{S}_k^g$ guarantees at least $1/2$ the optimal~\cite{fisher1978analysis}.  Ineq.~(11) holds again due to the submodularity of $f$, as above.  The same for ineq.~(12). Ineq.~(13) follows from \cite[Proof of Theorem 1]{tzoumas2018resilient} because of the worst-case removal.\qedhere
\label{proof:appro}
\end{proof}

\subsection{{Myopic optimization yields tighter approximation performance, yet worse practical performance}} \label{sec:clarify_bound}

{The myopic Algorithm~\ref{alg:MM}, according to which each robot selects its best action independently, guarantees a tighter approximation bound than that of $\texttt{DRM}$:

\begin{algorithm}[t]
\caption{Myopic algorithm for Problem~\ref{pro:dis_resi_sub}.}
\begin{algorithmic}[1]
\REQUIRE
    Robots' available actions $\mathcal{X}_i$,  $i\in \mathcal{R}$;
    monotone and submodular $f$.
\ENSURE  Robots' actions $\mathcal{S}$.
\STATE $\mathcal{S} \leftarrow \emptyset$;
\FOR{$i\in\mathcal{R}$}
\label{MM:for_robot_start}
    \STATE $s \leftarrow \text{argmax}_{s\in\mathcal{X}_i} f(s)$; \label{MM:argmax}
    
    \STATE $\mathcal{S} \leftarrow \mathcal{S} \cup \{s\}$;
    \label{MM:union}
\ENDFOR \label{MM: for_robot_end}
\RETURN $\mathcal{S}$.
\end{algorithmic}
\label{alg:MM}
\end{algorithm}

\begin{theorem}[Approximation performance of Algorithm~\ref{alg:MM}]~\label{thm:mm}
Algorithm~\ref{alg:MM} returns a feasible action-set $\mathcal{S}$ such that
\begin{align}\label{eq:appro_mm}
\begin{split}
\frac{f(\mathcal{S}\setminus \mathcal{A}^\star(\mathcal{S}))}{f^\star} \geq  1-\nu_f.
\end{split}
\end{align}
\end{theorem}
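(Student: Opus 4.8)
The plan is to follow the ``outer'' steps of the proof of Theorem~\ref{thm:DRA} --- the curvature estimate at the top and the worst-case-removal argument at the bottom --- but to collapse the entire bait/greedy middle into a single application of the myopic optimality rule, which is exactly what buys the improved constant.

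First I would record that $\mathcal{S}$ is feasible: by Algorithm~\ref{alg:MM} (lines~\ref{MM:argmax}--\ref{MM:union}) each robot $i\in\mathcal{R}$ contributes exactly one action $s_i\triangleq\arg\max_{s\in\mathcal{X}_i}f(\{s\})$, so $|\mathcal{S}\cap\mathcal{X}_i|=1$. Let $\mathcal{S}^\star$ be an optimal solution to Problem~\ref{pro:dis_resi_sub}, and let $\mathcal{W}\triangleq\mathcal{R}\setminus\mathcal{R}(\mathcal{A}^\star(\mathcal{S}))$ be the set of robots surviving the worst-case removal from $\mathcal{S}$; since $|\mathcal{A}^\star(\mathcal{S})|\leq\alpha$ we have $|\mathcal{R}\setminus\mathcal{W}|\leq\alpha$ and $\mathcal{S}\setminus\mathcal{A}^\star(\mathcal{S})=\{\,s_r : r\in\mathcal{W}\,\}$.

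The core chain I would then establish is
\begin{align}
f\!\left(\mathcal{S}\setminus\mathcal{A}^\star(\mathcal{S})\right)
 &\;\geq\; (1-\nu_f)\sum_{r\in\mathcal{W}} f(\mathcal{S}(r)) \label{eq:mp1}\\
 &\;\geq\; (1-\nu_f)\sum_{r\in\mathcal{W}} f(\mathcal{S}^\star(r)) \label{eq:mp2}\\
 &\;\geq\; (1-\nu_f)\, f\!\left(\mathcal{S}^\star(\mathcal{W})\right) \label{eq:mp3}\\
 &\;\geq\; (1-\nu_f)\, f^\star. \label{eq:mp4}
\end{align}
Inequality~\eqref{eq:mp1} is the same curvature estimate used for ineq.~\eqref{ineq:thm1_curv_ratio1} in the proof of Theorem~\ref{thm:DRA}: telescoping marginal gains over any ordering of a set $\mathcal{T}'$ and invoking Definition~\ref{def:curvature} gives $f(\mathcal{T}')\geq(1-\nu_f)\sum_{x\in\mathcal{T}'}f(\{x\})$, applied here with $\mathcal{T}'=\mathcal{S}\setminus\mathcal{A}^\star(\mathcal{S})$. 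Inequality~\eqref{eq:mp2} is the only place the myopic rule enters: since $s_r$ maximizes $f(\{\cdot\})$ over $\mathcal{X}_r$ and the optimal action $\mathcal{S}^\star(r)\in\mathcal{X}_r$, we get $f(\mathcal{S}(r))=f(\{s_r\})\geq f(\mathcal{S}^\star(r))$ for every $r\in\mathcal{W}$. Inequality~\eqref{eq:mp3} is subadditivity of the monotone submodular $f$ (with $f(\emptyset)=0$ without loss of generality) applied to the singletons $\{\mathcal{S}^\star(r)\}_{r\in\mathcal{W}}$, whose union is $\mathcal{S}^\star(\mathcal{W})$. Finally, \eqref{eq:mp4} holds because $\mathcal{S}^\star\setminus\mathcal{S}^\star(\mathcal{W})$ has cardinality $|\mathcal{R}\setminus\mathcal{W}|\leq\alpha$, hence is a feasible removal from $\mathcal{S}^\star$, so $f(\mathcal{S}^\star(\mathcal{W}))\geq\min_{\mathcal{A}\subseteq\mathcal{S}^\star,\,|\mathcal{A}|\leq\alpha}f(\mathcal{S}^\star\setminus\mathcal{A})=f(\mathcal{S}^\star\setminus\mathcal{A}^\star(\mathcal{S}^\star))=f^\star$.

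I do not expect a genuinely hard step here; the one point needing care is the bookkeeping in \eqref{eq:mp4} --- that deleting from the \emph{optimal} assignment the actions of exactly the robots the adversary killed in $\mathcal{S}$ is itself a valid $(\leq\alpha)$-attack, so that $f^\star$ is lower-bounded by $f(\mathcal{S}^\star(\mathcal{W}))$. It is worth noting why this yields $1-\nu_f$ rather than \alg's $\tfrac{1-\nu_f}{2}$: the myopic assignment never performs a greedy sub-assignment, so it avoids the $1/2$ loss incurred at ineq.~\eqref{ineq:thm1_curv_ratio26} of the proof of Theorem~\ref{thm:DRA} --- every surviving robot is compared directly to its optimal counterpart via \eqref{eq:mp2}. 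The price, as Remark~\ref{rem:myopic} records, is poor empirical performance, since the robots ignore one another entirely; in particular, running \crel on all of $\mathcal{R}$ under the assumption that \emph{every} robot is attacked reduces exactly to Algorithm~\ref{alg:MM}, which is the same remark's observation about practical inefficiency.
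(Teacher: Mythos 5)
Your proof is correct and follows essentially the same four-step chain as the paper's proof of Theorem~\ref{thm:mm} (curvature lower bound, myopic singleton-optimality, subadditivity, then the feasible-removal comparison against $\mathcal{S}^\star$); the only difference is bookkeeping: you index the middle inequalities by the actual survivor set $\mathcal{W}$, whereas the paper indexes them by the $|\mathcal{R}|-\alpha$ robots outside the top-$\alpha$ set (its $\mathcal{S}_2$ and $\mathcal{S}_2^\star$). Both routes are valid, and yours slightly streamlines the first step by avoiding the implicit comparison between the survivors' singleton values and those of the bottom $|\mathcal{R}|-\alpha$ robots.
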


\begin{proof}
We split $\mathcal{S}$ generated by Algorithm~\ref{alg:MM} into $\mathcal{S}_1$ and $\mathcal{S}_2$, with $\mathcal{S}_1$ denoting the action set selected by the top $\alpha$ robots, and $\mathcal{S}_2$ denoting the action set selected by the remaining $|\mathcal{R}|-\alpha$ robots. Denote $\mathcal{S}_2^\star$ as the action set selected by $\mathcal{R}(\mathcal{S}_2)$ to maximize $f(\mathcal{A}), ~\mathcal{A} \in \mathcal{I}, \mathcal{A}\subseteq \mathcal{X}(\mathcal{R}(\mathcal{S}_2))$. 

\begin{align}
&f(\mathcal{S}^{}\setminus \mathcal{A}^{\star}(\mathcal{S})) \geq (1-\nu_{f}) \sum_{s\in \mathcal{S}_2}f(s) \label{ineq:alg5_curv1}\\
& \geq (1-\nu_{f}) \sum_{s\in \mathcal{S}_2^\star}f(s^\star) \label{ineq:alg5_curv2}\\
& \geq (1-\nu_{f}) f(\mathcal{S}_2^\star) \label{ineq:alg5_curv3}\\ 
& \geq (1-\nu_{f}) f(\mathcal{S}^{\star}\setminus \mathcal{A}^{\star}(\mathcal{S}^{\star})) \label{ineq:alg5_curv4}. 
\end{align}

Ineq.~\eqref{ineq:alg5_curv1} follows from the definition of $\nu_f$ (see~\cite[Proof of Theorem 1]{tzoumas2018resilient}). Ineq.~\eqref{ineq:alg5_curv2} holds since Algorithm~\ref{alg:MM} selects the best action for each robot (Algorithm~\ref{alg:MM}, line~\ref{MM:argmax}). Ineq.~\eqref{ineq:alg5_curv3} holds due to the submodularity of $f$. Ineq.~\ref{ineq:alg5_curv4} follows from \cite[Proof of Theorem 1]{tzoumas2018resilient} because of the worst-case removal.\qedhere
\label{prof:mm}
\end{proof}

However, Algorithm~\ref{alg:MM} performs in practice worse than $\texttt{DRM}$ because: (i) it chooses actions for each robot independently of the actions of the rest of the robots (instead, \alg takes into account other robots' actions to intentionally reduce the performance redundancy among these robots); and (ii) it is equivalent to \crel \textit{but} under the assumption the number of attacks is equal to the number of robots, which is, evidently, conservative; \alg instead is less conservative assuming at most $\alpha$ attacks per clique).}

An evaluation of the practical performance of Algorithm~\ref{alg:MM} in comparison to \alg's is made in Fig.~\ref{fig:compare_mm}. The figures clearly show that in both small-scale and large-scale cases, $\texttt{DRM}$ outperforms Algorithm~\ref{alg:MM}. 

\begin{figure}[th!]
\centering{
\subfigure[$N=20$ ]{\includegraphics[width=0.49\columnwidth]{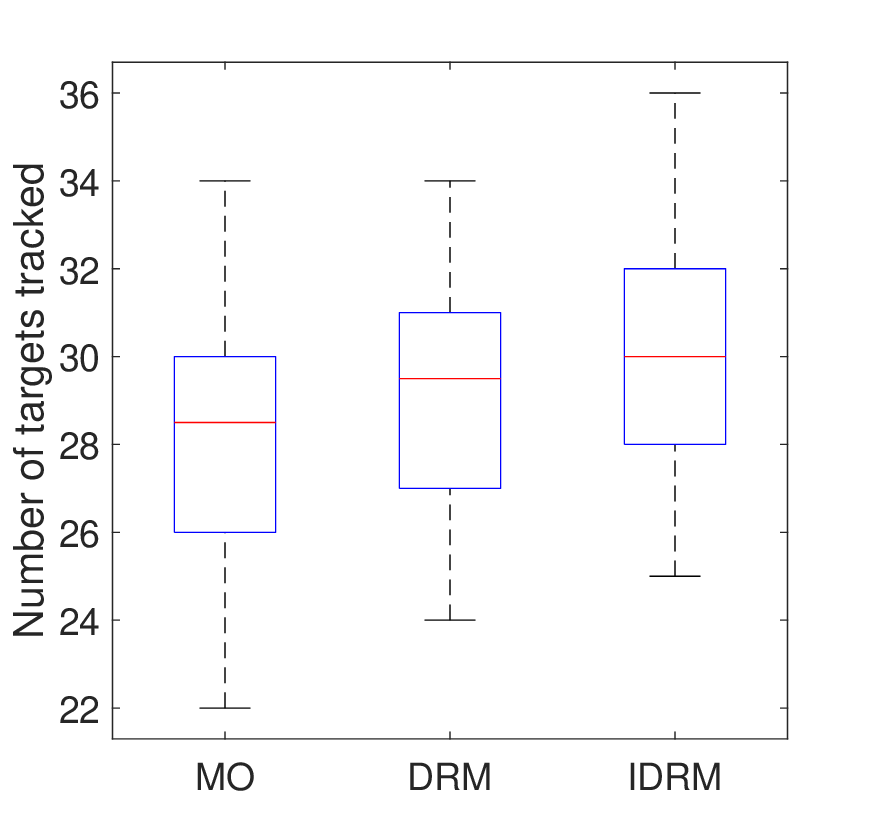}}~~
\subfigure[$N=100$]{\includegraphics[width=0.49\columnwidth]{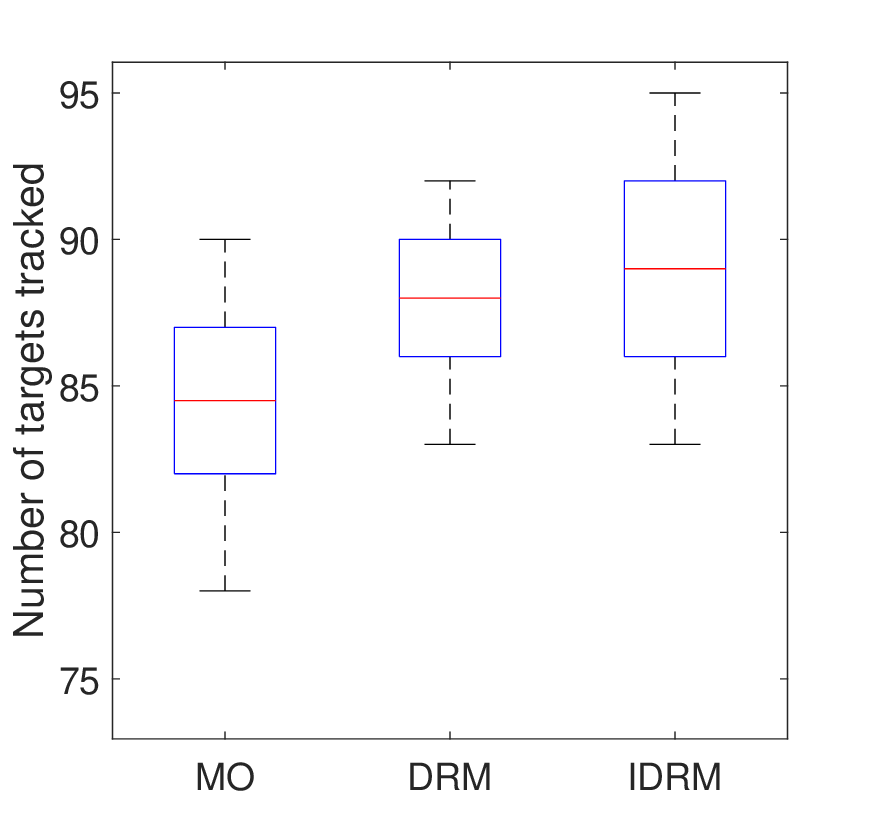}}
\caption{{MATLAB evaluations: comparison of number of targets covered by Algorthm~\ref{alg:MM} (called \texttt{MO}), \texttt{DRM}, \texttt{IDRM} in (a) small-scale and (b) large-scale cases. The simulation settings follow the Matlab simulation setup of Section~\ref{sec:simulation}-A.  In (a) small-scale case with $N=20$, $\alpha=4$, and $r_c=120$, the number of target tracked by these three algorithms are calculated after applying 4 worst-case attacks. In (b) large-scale case with $N=100$, $\alpha=10$, and $r_c=70$, the number of target tracked by these three algorithms are calculated after applying 10 greedy attacks. In (a) and (b), the representations of each box's components follow the corresponding explanations in the caption of Fig.~\ref{fig:compare_gazebo}. } 
\label{fig:compare_mm}}}
\end{figure}

\bibliographystyle{IEEEtran}
\bibliography{IEEEabrv,references}

\begin{IEEEbiography}[{\includegraphics[width=1in,height=1.25in,clip,keepaspectratio]{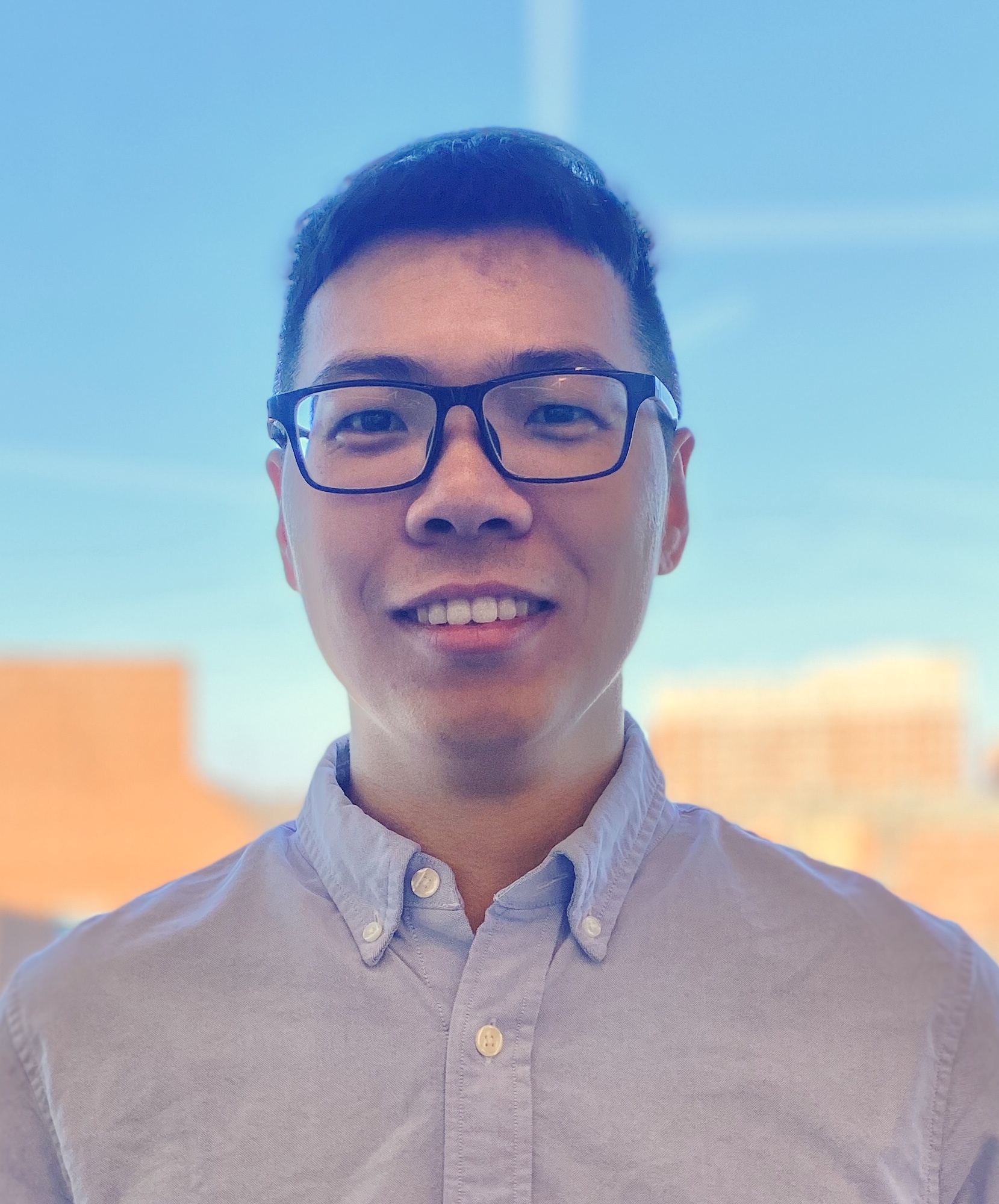}}]{Lifeng Zhou} is currently a Postdoctoral Researcher in the GRASP Lab at the University of Pennsylvania. He received his Ph.D. degree in Electrical \& Computer Engineering at Virginia Tech in 2020. He obtained his master’s degree in Automation from Shanghai Jiao Tong University, China in 2016, and his Bachelor's degree in Automation from Huazhong University of Science and Technology, China in 2013. 

His research interests include multi-robot coordination, approximation algorithms, combinatorial optimization, model predictive control, graph neural networks, and resilient, risk-aware decision making.
\end{IEEEbiography}

\begin{IEEEbiography}[{\includegraphics[width=1in,height=1.25in,clip,keepaspectratio]{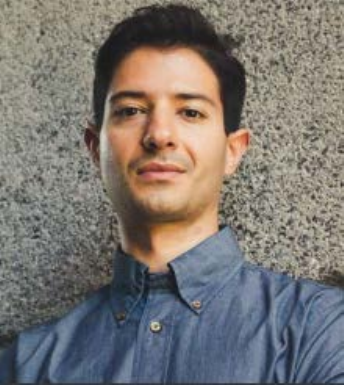}}]{Vasileios Tzoumas} received his Ph.D. in Electrical and Systems Engineering at the University of Pennsylvania (2018). He holds a Master of Arts in Statistics from the Wharton School of Business at the University of Pennsylvania (2016); a Master of Science in Electrical Engineering from the University of Pennsylvania (2016); and a diploma in Electrical and Computer Engineering from the National Technical University of Athens (2012). Vasileios is an Assistant Professor in the Department of Aerospace Engineering, University of Michigan, Ann Arbor. Previously, he was a research scientist in the Department of Aeronautics and Astronautics, and the Laboratory for Information and Decision Systems (LIDS), at the Massachusetts Institute of Technology (MIT). In 2017, he was a visiting Ph.D. student at the Institute for Data, Systems, and Society (IDSS) at MIT. Vasileios works on control, learning, and perception, as well as combinatorial and distributed optimization, with applications to robotics, cyber-physical systems, and self-reconfigurable aerospace systems.  He aims for trustworthy collaborative autonomy.  His work includes foundational results on robust and adaptive combinatorial optimization, with applications to multi-robot information gathering for resiliency against robot failures and adversarial removals. Vasileios is a recipient of the Best Paper Award in Robot Vision at the 2020 IEEE International Conference on Robotics and Automation (ICRA), of a Honorable Mention at  the 2020 IEEE Robotics and Automation Letter (RA-L), and was a Best Student Paper Award finalist at the 2017 IEEE Conference in Decision and Control (CDC).
\end{IEEEbiography}

\begin{IEEEbiography}[{\includegraphics[width=1in,height=1.25in,clip,keepaspectratio]{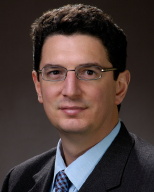}}]
{George J.~Pappas} (S'90-M'91-SM'04-F'09) received the Ph.D.~degree in electrical engineering and computer sciences from the University of California, Berkeley, CA, USA, in  1998. He is currently the Joseph Moore Professor and Chair of the Department of Electrical and Systems Engineering, University of Pennsylvania, Philadelphia, PA, USA. He  also holds a secondary appointment with the Department of Computer and Information Sciences and the Department of Mechanical Engineering and Applied Mechanics. He is a Member of the GRASP Lab and the PRECISE Center. He had previously served as the Deputy Dean for Research with the School of Engineering and Applied Science. His research interests include control theory and, in particular, hybrid systems, embedded systems, cyberphysical systems, and hierarchical and distributed control systems, with applications to unmanned aerial vehicles, distributed robotics, green buildings, and biomolecular networks. Dr. Pappas has received various awards, such as the Antonio Ruberti Young Researcher Prize, the George S. Axelby Award, the Hugo Schuck Best Paper Award, the George H. Heilmeier Award, the National Science Foundation PECASE award and numerous best student papers awards.
\end{IEEEbiography}

\begin{IEEEbiography}[{\includegraphics[width=1in,height=1.25in,clip,keepaspectratio]{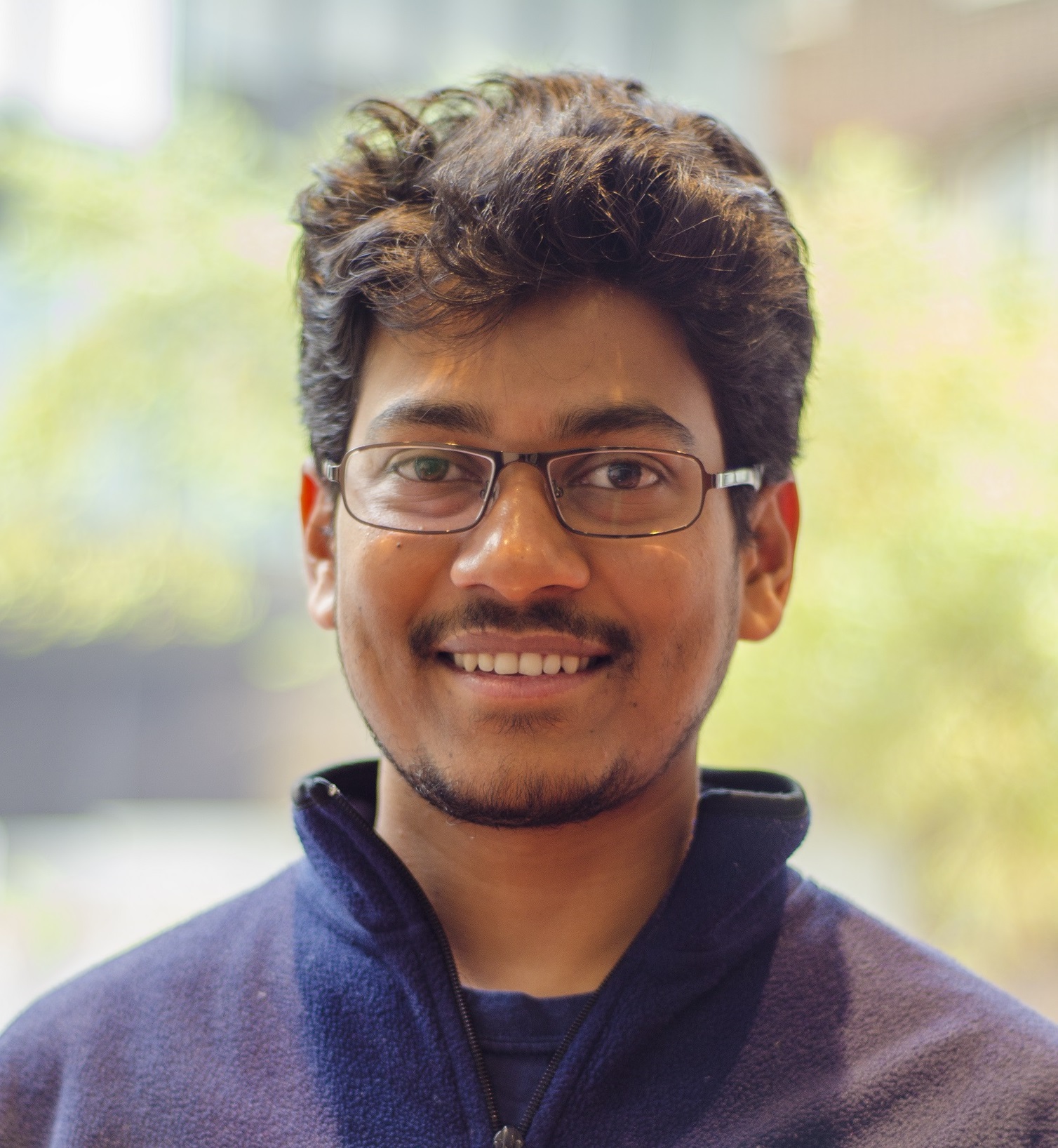}}]{Pratap Tokekar} is an Assistant Professor in the Department of Computer Science at the University of Maryland. Previously, he was a Postdoctoral
Researcher at the GRASP lab of University of
Pennsylvania. Between 2015 and 2019, he was an Assistant Professor at the Department of Electrical and Computer Engineering at Virginia Tech. He obtained his Ph.D. in Computer Science from the University of Minnesota in 2014 and Bachelor of Technology degree in Electronics and Telecommunication from College of Engineering Pune, India in
2008. He is a recipient of the NSF CISE Research Initiation Initiative award and an Associate Editor for the IEEE Robotics and Automation Letters and Transactions of Automation Science and Engineering.
\end{IEEEbiography}

\end{document}